\definecolor{pearThree}{HTML}{E74C3C}
\definecolor{pearcomp}{HTML}{B97E29}
\definecolor{pearDark}{HTML}{2980B9}
\definecolor{pearDarker}{HTML}{1D2DEC}
\newtheorem{definition}{Definition}
\newtheorem{proposition}{Proposition}
\newtheorem{theorem}{Theorem}
\newtheorem{lemma}{Lemma}
\newtheorem{remark}{Remark}
\newcommand{\hl}[1]{{#1}}
\title{Near Instance-Optimal PAC Reinforcement Learning \\ for Deterministic MDPs}
\author{%
  Andrea Tirinzoni\thanks{Work done while at Inria Lille.} \\
  Meta AI\\
  Paris, France\\
  \texttt{tirinzoni@fb.com} \\
  \And
  Aymen Al-Marjani \\
  UMPA, ENS Lyon\\
  Lyon, France\\
  \texttt{aymen.al$\_$marjani@ens-lyon.fr} \\
  \And
  Emilie Kaufmann \\
  Univ. Lille, CNRS, Inria, Centrale Lille, UMR 9189 - CRIStAL\\
  Lille, France\\
  \texttt{emilie.kaufmann@univ-lille.fr} \\
  % examples of more authors
  % \And
  % Coauthor \\
  % Affiliation \\
  % Address \\
  % \texttt{email} \\
  % \AND
  % Coauthor \\
  % Affiliation \\
  % Address \\
  % \texttt{email} \\
  % \And
  % Coauthor \\
  % Affiliation \\
  % Address \\
  % \texttt{email} \\
  % \And
  % Coauthor \\
  % Affiliation \\
  % Address \\
  % \texttt{email} \\
}
\begin{document}

\maketitle

% For TOC in appendix (https://tex.stackexchange.com/a/419290)
\doparttoc % Tell to minitoc to generate a toc for the parts
\faketableofcontents % Run a fake tableofcontents command for the partocs

\begin{abstract}

In probably approximately correct (PAC) reinforcement learning (RL), an agent is required to identify an $\epsilon$-optimal policy with probability $1-\delta$. While minimax optimal algorithms exist for this problem, its instance-dependent complexity remains elusive in episodic Markov decision processes (MDPs). In this paper, we propose the first nearly matching \blue{(up to a horizon squared factor and logarithmic terms)} upper and lower bounds on the sample complexity of PAC RL in deterministic episodic MDPs with finite state and action spaces. In particular, our bounds feature a new notion of sub-optimality gap for state-action pairs that we call the deterministic return gap. 
While our instance-dependent lower bound is written as a linear program, our algorithms are very simple and do not require solving such an optimization problem during learning. Their design and analyses employ novel ideas, including graph-theoretical concepts (minimum flows) and a new maximum-coverage exploration strategy.  
%and maximum cuts, which we believe to shed new light on this problem. %and which could be of great value to tackle the more general setting of stochastic MDPs.
\end{abstract}

\section{Introduction}

In reinforcement learning \citep[RL,][]{sutton2018reinforcement}, an agent interacts with an environment modeled as a Markov decision process (MDP) by sequentially selecting actions and receiving feedback in the form of reward signals. Depending on the application, the agent may seek to maximize the cumulative rewards received during learning (which is typically phrased as a \emph{regret minimization} problem) or to minimize the number of learning interactions (i.e., the \emph{sample complexity}) for identifying a near-optimal policy. The latter \emph{pure exploration} problem was introduced in \cite{Fiechter94} under the name of Probably Approximately Correct (PAC) RL: given two parameters $\epsilon,\delta > 0$, the agent must return a policy that is $\epsilon$-optimal with probability at least $1-\delta$. Our work focuses on this problem in the context of episodic (a.k.a. finite-horizon) tabular MDPs.

The PAC RL problem has been mostly studied under the lens of minimax (or worst-case) optimality. In the episodic setting, the algorithm proposed in \cite{dann2019policy} has sample complexity bounded by $O(SAH^2\log(1/\delta)/\epsilon^2)$ for an MDP with $S$ states, $A$ actions, horizon $H$, and time-homogeneous transitions and rewards (i.e., not depending on the stage). This is minimax optimal for such a context \citep{dann15PAC}. Similarly, in \cite{Menard21RFE} the authors designed a strategy with $O(SAH^3\log(1/\delta)/\epsilon^2)$ complexity in time-inhomogeneous MDPs, which was later shown to be minimax optimal \cite{Omar21LB}. 
%\todoatout{Many citations were written by referring to the reference as the subject (``[X] proposed'', etc.). I will fix all of them, but please check whether I missed some}
%\red{I removed the paragraph on Regret-to-pac conversion.}
%In general, one can easily achieve near minimax optimality by a ``regret-to-PAC conversion'' \citep[e.g.,][]{Jin18OptQL,Menard21RFE,wagenmaker21IDPAC}: running a regret minimization algorithm for a sufficient number of episodes and recommending a policy uniformly sampled from those played. For instance, using this trick with the UCBVI algorithm with Bernstein bonuses \citep{Azar17UCBVI} we get sample complexity $O(SAH^3\log(1/\delta)/(\epsilon^2 \delta^2))$, which is optimal in all dependencies except $\delta$.

While the minimax framework provides a strong notion of statistical optimality, it does not account for one of the most desirable properties for an RL algorithm: the ability to adapt to the difficulty of the MDP instance. For this reason, researchers recently started to investigate the instance-dependent complexity of PAC RL. Earlier attempts were made in the simplified setting where the agent has access to a generative model (i.e., it can query observations from any state-action pair using a simulator) in $\gamma$-discounted infinite-horizon MDPs \citep{zanette2019almost,al2021adaptive}. The online setting, where the agent can only sample trajectories from the environment, has been studied in \cite{al2021navigating} for discounted MDPs and in \cite{wagenmaker21IDPAC} for episodic time-inhomogeneous MDPs. All these works derive sample complexity bounds that scale with certain gaps between optimal value functions. For instance, in the episodic setting, the \emph{value gap} $\Delta_h(s,a) := V^\star_h(s) - Q^\star_h(s,a)$\footnote{$V^\star$ and $Q^\star$ respectively denote the optimal value and action-value functions, that are defined in Section~\ref{sec:prelim}.} intuitively characterizes the degree of sub-optimality of action $a$ for state $s$ at stage $h$. Unfortunately, these bounds are known to be sub-optimal and how to achieve instance optimality remains one of the main open questions. In fact, recent works on regret minimization \citep{tirinzoni2021fully,dann21ReturnGap} showed that value gaps are often overly conservative, and the same holds for PAC RL. We refer the reader to Appendix~\ref{app:related} for a deeper discussion on problem-dependent results in RL and the review of other related PAC learning frameworks. 

%Unfortunately, these bounds are known to be sub-optimal and how to achieve instance optimality remains one of the main open questions. First, recent works on regret minimization \citep{tirinzoni2021fully,dann21ReturnGap} showed that value gaps are often overly conservative, and the same holds for PAC RL. 

The main challenge towards instance optimality is that existing lower bounds for exploration problems in MDPs \citep{al2021adaptive,tirinzoni2021fully,al2021navigating,dann21ReturnGap}
 are written in terms of non-convex optimization problems. Their ``implicit'' form makes it hard to understand the actual complexity of the setting and, thus, to design optimal algorithms. Existing solutions either derive explicit \emph{sufficient} complexity measures that inspire algorithmic design \citep{wagenmaker21IDPAC}, or solve (a relaxation of) the optimization problem from the lower bound using the empirical MDP as a proxy for the unknown MDP \citep{al2021navigating}. The latter extends the Track-and-Stop idea originally proposed in \cite{garivier2016optimal} for bandits ($H=1$), and requires in particular a large amount of forced exploration. Both solutions have limitations. On the one hand, it is not clear if and how such sufficient complexity measures or relaxations are related to an actual lower bound. On the other hand, strategies solving a black-box optimization problem to find an optimal exploration strategy are typically very inefficient and often come with either only asymptotic ($\delta\rightarrow 0$) guarantees or with poor (far from minimax optimal) sample complexity in the regime of moderate $\delta$.

%In this paper, we make progress in this %challenging direction by studying PAC RL in \emph{deterministic} episodic tabular MDPs, . %We further consider the general time-inhomogeneous transition model. 
%We present an instance-dependent sample complexity lower bound expressed as a linear program involving the ``right'' notion of sub-optimality gaps, together with algorithms nearly matching it. 
%Notably, we achieve so by designing simple strategies which do not require solving the optimization problem from the lower bound at learning time in a Track-and-Stop fashion. Instead, our algorithms can be described as the combination of novel adaptive exploration strategies with a smart elimination rule. On the technical side, we present novel algorithmic and analysis techniques, in particular leveraging graph-theoretical concepts such as minimum flows and maximum cuts \citep{voitishin1980algorithms}, which shed new light on the PAC RL problem. %Our detailed contributions are as follows.

%We summarize some additional literature focusing on PAC and/or instance-dependent results in RL.

\vspace{-0.1in}
\paragraph{Contributions}

This paper presents a complete study of PAC RL in tabular \emph{deterministic} episodic MDPs with time-inhomogeneous transitions, a sub-class of stochastic MDPs where state transitions are deterministic and the agent observes stochastic rewards from unknown distributions. 
Our first contribution is an \emph{instance-dependent lower bound} on the sample complexity of any PAC algorithm. We show that the number of visits $n_h^\tau(s,a)$ to any state-action-stage triplet $(s,a,h)$ at the stopping time $\tau$ satisfies
\begin{align}\label{eq:local-lb-sketch}
\bE[n_h^\tau(s,a)] \gtrsim \frac{\log(1/\delta)}{\max(\overline{\Delta}_h(s,a), \epsilon)^2},
\end{align}
where $\overline{\Delta}_h(s,a) := V_1^\star - \max_{\pi\in\Pi_{s,a,h}}V_1^\pi$, with $V_1^\pi$ the expected return of policy $\pi$, $V_1^\star$ the optimal expected return, and $\Pi_{s,a,h}$ the set of all deterministic policies that visit $(s,a)$ at stage $h$. We call these quantities the \emph{deterministic return gaps} due to their closeness with the \emph{return gaps} introduced in \cite{dann21ReturnGap} for general MDPs. In deterministic MDPs, the deterministic return gaps are actually $H$ times larger than the return gaps and they are never smaller than value gaps. Our lower bound on the sample complexity $\tau$ is then the value of a \emph{minimum flow} with local lower bounds \eqref{eq:local-lb-sketch}, i.e., roughly the minimum number of policies that must be played to ensure \eqref{eq:local-lb-sketch} for all $(s,a,h)$. To our knowledge, this is the first instance-dependent lower bound for the PAC setting in episodic MDPs.

% A \emph{worst-case lower bound} for deterministic MDPs showing that the sample complexity of any PAC algorithm is $\Omega(SAH^2\log(1/\delta)/\epsilon^2)$. To derive this result, we build a deterministic variant of the hard instance of \cite{Omar21LB} and then propose a novel analysis.

On the {algorithmic} side, we design EPRL, a \emph{generic elimination-based method} for PAC RL, and couple it with a novel adaptive sampling rule called \emph{maximum-coverage sampling}. The latter is a simple strategy which does not require solving the optimization problem from the lower bound at learning time in a Track-and-Stop fashion. Instead, it greedily selects the policy that maximizes the number of visited under-sampled triplets $(s,a,h)$, i.e. those having received the least amount of visits so far. We prove that EPRL is $(\varepsilon,\delta)$-correct under any sampling rule. Moreover, we show that the sample complexity of EPRL with max-coverage sampling matches our instance-dependent lower bound up to logarithmic factors and a multiplicative $O(H^2)$ term, while also being minimax optimal. Finally, we perform numerical simulations on random deterministic MDPs which reveal that EPRL can indeed improve over existing minimax-optimal algorithms tailored for the deterministic case.

\section{Preliminaries}\label{sec:prelim}

Let $\cM := (\cS, \cA, \{f_h,\nu_h\}_{h\in[H]}, s_1, H)$ be a \emph{deterministic} time-inhomogeneous finite-horizon MDP, where $\cS$ is a finite set of $S$ states, $\cA$ is a finite set of $A$ actions, $f_h : \cS\times\cA \rightarrow \cS$ and $\nu_h : \cS\times\cA \rightarrow \cP(\mathbb{R})$ are respectively the transition function and the reward distribution at stage $h\in[H]$, $s_1\in\cS$ is the unique initial state, and $H$ is the horizon. Without loss of generality, we assume that, at each stage $h\in[H]$ and state $s\in\cS$, only a subset $\cA_h(s) \subseteq \cA$ of actions is available. We denote by $r_h(s,a) := \mathbb{E}_{x\sim \nu_h(s,a)}[x]$ the expected reward after taking action $a$ in state $s$ at stage $h$.
%\vspace{-0.1in}
%For notational convenience, we also define the corresponding transition probabilities as $p_h(f_h(s,a) | s,a) = 1$ and $p_h(s'|s,a) = 0$ for all $s'\neq f_h(s,a)$.

%\paragraph{Policies}

A deterministic policy $\pi = \{\pi_h\}_{h\in[H]}$ is a sequence of mappings $\pi_h : \cS \rightarrow \cA$. We let $\Pi := \{\pi \mid \forall h\in[H],s\in\cS : \pi_h(s) \in \cA_h(s)\}$ be the set of all valid deterministic policies. Executing a policy $\pi\in\Pi$ on MDP $\cM$ yields a deterministic sequence of states and actions $(s_h^\pi,a_h^\pi)_{h\in[H]}$, where $s_1^\pi = s_1$, $a_h^\pi = \pi_h(s_h^\pi)$ for all $h\in[H]$, and $s_h^\pi = f_{h-1}(s_{h-1}^\pi,a_{h-1}^\pi)$ for all $h\in\{2,\dots,H\}$. 
%As before, we define the state-action visitation probabilities of policy $\pi\in\Pi$ as $p_h^\pi(s,a) = 1$ if $(s_h^\pi,a_h^\pi)=(s,a)$ and $p_h^\pi(s,a) = 0$ otherwise. 
We let $\cS_h := \{s\in\cS \mid \exists \pi\in\Pi : s_h^\pi = s\}$ be the subset of states that are reachable at stage $h\in[H]$. Finally, we define $N := \sum_{h=1}^H \sum_{s\in\cS_h}|\cA_h(s)|$ as the total number of reachable state-action-stage triplets.

%\paragraph{Values and returns}

For each $(s,a,h)$, the \emph{action-value function} $Q_{h}^\pi(s,a)$ of a policy $\pi\in\Pi$ quantifies the expected return when starting from $s$ at stage $h$, playing $a$ and following $\pi$ thereafter. In deterministic MDPs, it has the simple expression $Q_{h}^\pi(s,a) = r_h(s,a) + V_{h+1}^\pi(f_h(s,a))$, where  $V_{h}^\pi(s) := Q_{h}^\pi(s,\pi_h(s))$ is the corresponding value function (with $V_{H+1}^\pi(s) = 0$).
% \begin{align*}
%     Q_{h}^\pi(s,a) := \mathbb{E}^\pi\left[\left.\sum_{\ell = h}^H r_{\ell}(s_{\ell},a_{\ell}) \right| s_h = s, a_h=a\right] = r_h(s,a) + V_{h+1}^\pi(f_h(s,a)),
% \end{align*}
%where the expectation is under the (deterministic) transition function and $V_{h}^\pi(s) := Q_{h}^\pi(s,\pi_h(s))$ is the corresponding \emph{value function}. 
The expected \emph{return} of $\pi$ is simply its value at the initial state, i.e., $V_1^\pi(s_1) = \sum_{h=1}^H r_h(s_h^\pi,a_h^\pi)$.
% \begin{align*}
% V_1^\pi(s_1) = \sum_{h=1}^H r_h(s_h^\pi,a_h^\pi).
% \end{align*}
We let $\Pi^\star := \{\pi^\star\in\Pi : V_1^{\pi^\star}(s_1) = \max_{\pi\in\Pi}V_1^{\pi}(s_1) \}$ be the set of \emph{optimal policies}, i.e., those with maximal return. Finally, we denote by $V^\star_{h}(s)$ and $Q^\star_{h}(s,a)$ the optimal value and action-value function, respectively. These are related by the Bellman optimality equations as $Q^\star_{h}(s,a) = r_h(s,a) + V_{h+1}^\star(f_h(s,a))$ and $V_h^\star(s) = \max_{a\in\cA_h(s)} Q_h^\star(s,a)$.
\vspace{-0.1in}
\paragraph{Learning problem} The agent interacts with an MDP $\cM$ in episodes indexed by $t\in\mathbb{N}$. At the beginning of the $t$-th episode, the agent selects a policy $\pi^t\in\Pi$ based on past history through its \emph{sampling rule}, executes it on $\cM$, and observes the corresponding deterministic trajectory $(s_h^{\pi^t},a_h^{\pi^t})_{h\in[H]}$ together with random rewards $(y_h^t)_{h\in[H]}$, where $y_h^t \sim \nu_h(s_h^{\pi^t},a_h^{\pi^t})$. At the end of each episode, the agent may decide to terminate the process through its \emph{stopping rule} and return a policy $\widehat{\pi}$ prescribed by its \emph{recommendation rule}. We denote by $\tau$ its random stopping time. An algorithm for PAC identification is thus made of a triplet $(\{\pi^t\}_{t\in\mathbb{N}}, \tau, \widehat{\pi})$. The goal of the agent is two-fold. First, for given parameters $\epsilon,\delta > 0$, it must return an $\epsilon$-optimal policy with probability at least $1-\delta$.
\begin{definition}
An algorithm is $(\epsilon,\delta)$-PAC on a set of MDPs $\mathfrak{M}$ if, for all $\cM \in \mathfrak{M}$, it stops a.s. with 
\begin{align*}
\mathbb{P}_{\cM}\left(V_1^{\widehat{\pi}}(s_1) \geq V_1^\star(s_1) - \epsilon\right) \geq 1 - \delta.
\end{align*}
\end{definition}
Second, it should stop as early as possible, i.e., by minimizing the \emph{sample complexity} $\tau$. Henceforth, we assume that the transition function $f$ is known but not the reward distribution $\nu$. Note that if the transitions are unknown, the agent can still estimate them (since it knows that $\cM$ is deterministic) with at most $N \leq SAH$ episodes. 
%while it does not know the reward distribution $\nu$.

%\todoatout{An algorithm to achieve so could be written in appendix}

%\subsection{Minimum Flows and Maximum Cuts}

%\todoe{Put max cut in appendix only to gain space and briefly mention that it will be needed}
\vspace{-0.1in}
\paragraph{Minimum flows}

We review some basic concepts from graph theory which will be at the core of our algorithms and analyses later. Full details can be found in Appendix \ref{app:flows}. First note that a deterministic MDP (without reward) can be represented as a \emph{directed acyclic graph} (DAG) with one arc for each available state-action-stage triplet. Let $\cE := \{(s,a,h) : h\in[H],s\in\cS_h,a\in\cA_h(s)\}$ be the set of arcs in the DAG. The minimum flow problem, originally introduced in \cite{voitishin1980algorithms} and later studied in, e.g.,  \cite{adlakha1991minimum, adlakha1999alternate,ciurea2004sequential}, consists of findining a flow (i.e., an allocation of visits) of minimal value which satisfies certain demand constraints in each arc of the graph. In our specific setting, we define a \emph{flow} as any non-negative function $\eta : \cE \rightarrow [0,\infty)$ that belongs to the following set \\ $\Omega := \left\{ \eta : \cE \rightarrow [0,\infty) \mid \sum_{(s',a') : f_{h-1}(s',a')=s} \eta_{h-1}(s',a') = \sum_{a\in\cA_h(s)} \eta_{h}(s,a)\ \ \forall h>1, s\in\cS_h \right\}$.

 This implies that a flow, seen as an allocation of visits to the arcs, satisfies the \emph{navigation constraints} (i.e., incoming and outcoming flows are equal at each state). The minimum flow for a non-negative \emph{lower-bound} function $\underline{c} : \cE \rightarrow [0,\infty)$ is the solution to the following linear program (LP):
 \begin{align*}
     \varphi^\star(\underline{c}) := \min_{\eta\in\Omega} \sum_{a\in\cA_1(s_1)} \eta_1(s_1,a) \quad \text{s.t.} \quad \eta_h(s,a) \geq \underline{c}_h(s,a) \quad \forall (s,a,h) \in \cE.
 \end{align*}
Intuitively, the goal is to minimize the amount of flow leaving the initial state while satisfying the navigation and demand constraints. We note that more efficient algorithms exist for this problem than the LP formulation, e.g., the variant of the Ford-Fulkerson method proposed in \cite{ciurea2004sequential} which is guaranteed to find an integer solution when the lower bound function is integer-valued.

\section{The Complexity of PAC RL in Deterministic MDPs}\label{sec:lower-bounds}

% In this section, we present different results that better characterize the complexity of our setting. We start by discussing the different definitions of sub-optimality gaps that appeared in the literature, while introducing our notion of deterministic return gap. Then, we state our instance-dependent and worst-case lower bonds and conclude with some additional hardness results.

Before stating our lower bound, we formally introduce the new notion of sub-optimality gap it features and compare it with other notions that appeared in the literature. 
\vspace{-0.1in}
\paragraph{On sub-optimality gaps} The most popular notion of sub-optimality gap is the so-called \emph{value gap}. It was introduced first in the discounted infinite-horizon setting \citep[e.g.,][]{zanette2019almost} and later for episodic MDPs \citep[e.g.,][]{simchowitz2019non,xu2021fine}. Formally, in the latter context, the value gap of any action $a\in\cA_h(s)$ in state $s\in\cS_h$ at stage $h\in[H]$ is $\Delta_h(s,a) := V^\star_h(s) - Q_h^\star(s,a)$. 
Such a notion of gap appears in the complexity measure for PAC RL proposed in \cite{wagenmaker21IDPAC}. In the deterministic setting, such a complexity measure can be written as $\cC(\cM,\epsilon) = \sum_{(s,a,h)} \frac{1}{\max(\widetilde{\Delta}_h(s,a), \epsilon)^2},$
%\footnote{By Proposition 2 of \cite{wagenmaker21IDPAC}, we get $\cC(\cM,\epsilon) = \sum_{h\in[H]} \inf_\pi \max_{s\in\cS}\frac{1}{p_h^\pi(s)} \sum_{a\in\cA_h(s)} \frac{1}{\max(\widetilde{\Delta}_h(s,a), \epsilon)^2}$, where $p_h^\pi(s)$ is the probability that $\pi$ visits $s$ at stage $h$ and the infimum is over stochastic policies. In the deterministic setting, there exists a stochastic policy realizing any distribution over $\cS_h$, which allows solving the infimum in closed form.}
% \begin{align}\label{eq:complexity-wagen}
% \cC(\cM,\epsilon) = \sum_{h\in[H]}\sum_{s\in\cS} \sum_{a\in\cA_h(s)} \frac{1}{\max(\widetilde{\Delta}_h(s,a), \epsilon)^2},
% \end{align}
where $\widetilde{\Delta}_h(s,a) = \min_{a':\Delta_h(s,a')>0}\Delta_h(s,a')$ if $a$ is the unique optimal action at $(s,h)$, and $\widetilde{\Delta}_h(s,a) = {\Delta}_h(s,a)$ otherwise. Intuitively, the (inverse) value gap is proportional to the difficulty of learning whether an action $a$ is sub-optimal for state $s$ at stage $h$. Then, $\cC(\cM,\epsilon)$ is proportional to the difficulty of learning a near optimal action at \emph{all} states and stages. Recent works \citep{tirinzoni2021fully,dann21ReturnGap} showed that this is actually not necessary: if one only cares about computing a policy maximizing the return at the initial state, it is not necessary to learn an optimal action at states which are not visited by such an optimal policy, in particular when the return of all policies visiting the state is small. 
%Consider for the moment the problem of identifying an optimal policy ($\epsilon=0$) where the optimal action at each state and stage (one with ${\Delta}_h(s,a)=0$) is unique. Then, it turns out that the complexity measure \eqref{eq:complexity-wagen} is sufficient for learning a \emph{Bellman-optimal} policy, i.e., a policy $\pi^\star$ such that $Q^\star_h(s,\pi^\star_h(s)) = V^\star_h(s)$ for all $s,h$. This is a much stronger notion of optimality than the one we care about (i.e., maximizing the return), which means that an algorithm with such a complexity learns more than it is supposed to.
%To tackle this limitation, \cite{dann21ReturnGap} introduced a notion of \emph{return gap} which, in the deterministic case, can be expressed as
The \emph{return gap} \cite{dann21ReturnGap} was introduced to cope with this limitation. In deterministic MDPs, it can be expressed as
$
\overline{\text{gap}}_h(s,a) := \tfrac{1}{H}\min_{\pi \in \Pi_{s,a,h}} \sum_{\ell=1}^{h} \Delta_{\ell}(s_{\ell}^{\pi},a_{\ell}^{\pi}),
$
where we denote by $\Pi_{s,a,h} := \{\pi\in\Pi : s_h^\pi=s,a_h^\pi=a\}$ the subset of deterministic policies that visit $(s,a)$ at stage $h$. In words, the return gap of $(s,a,h)$ is proportional to the \emph{sum} of value gaps along the best trajectory (i.e., one with maximal return) that visits $(s,a)$ at stage $h$. Intuitively, this means that, if $\Delta_h(s,a)$ is extremely small but all policies visiting $(s,a)$ at stage $h$ need to play a highly sub-optimal action before, then $\Delta_h(s,a) \ll \overline{\text{gap}}_h(s,a)$. 
%Finally, \cite{tirinzoni2021fully} introduced the notion of \emph{policy gap}, which simply compares the return of a policy $\pi\in\Pi$ to the optimal one, $\Gamma(\pi) := V^\star_1(s_1) - V_1^{\pi}(s_1)$. 
%In this paper, we shall use a definition of gap which is very related to both return and policy gaps. 
In the deterministic case, our lower bound reveals that the normalization by $H$ is not necessary, and we define the \emph{deterministic return gap} to be
\begin{align}\label{eq:drg}
\overline{\Delta}_h(s,a) := V^\star(s_1)-\max_{\pi \in \Pi_{s,a,h}} V^{\pi}(s_1).
\end{align}
Using the well-known relationship $ V^\star_1(s_1) - V_1^{\pi}(s_1) = \sum_{h=1}^H \Delta_{h}(s_{h}^{\pi},a_{h}^{\pi})$ \cite[e.g.,][Proposition 5]{tirinzoni2021fully}, it is easy to see that $
\Delta_h(s,a) \leq \overline{\Delta}_h(s,a) %= \min_{\pi \in \Pi_{s,a,h}} \Gamma(\pi) 
= H \times \overline{\text{gap}}_h(s,a)$.

%\todoe{I removed the policy gaps to gain space, double check whether we need the notation $\Gamma(\pi)$ later}

\vspace{-0.1in}
\paragraph{Lower Bound} We now present our instance-dependent lower bound based on deterministic return gaps, which will guide us in the design and analysis of sample efficient algorithms. This result is the first instance-dependent lower bound for PAC RL in the episodic setting.  
Lower bounds for $\varepsilon$-best arm identification in a bandit model (which corresponds to $H = S = 1$) were derived in \cite{MannorTsi04,DegenneK19,garivier2021nonasymptotic}, {while problem-dependent regret lower bounds for finite-horizon MDPs are provided in \cite{dann21ReturnGap,tirinzoni2021fully}}. 

\hl{We consider the class $\mathfrak{M}_{\sigma^2}$ of deterministic MDPs with $\sigma^2$-\emph{Gaussian} rewards, in which $\nu_h(s,a) = \cN(r_h(s,a),\sigma^2)$.} Let $\Pi^\epsilon := \{\pi \in \Pi : V_1^\pi(s_1) \geq V_1^\star(s_1) - \epsilon\}$ be the set of all $\epsilon$-optimal policies and denote by $\cZ_h^\epsilon :=\{s \in \cS_h, a \in \cA_h(s) : \Pi_{s,a,h} \cap \Pi^\epsilon \neq \emptyset\} $ the set of state-action pairs that are reachable at stage $h$ by some $\epsilon$-optimal policy. Note that $\overline{\Delta}_h(s,a) \leq \epsilon$ for all $(s,a)\in\cZ_h^\epsilon$.

\begin{theorem}\label{th:instance-lb}
\hl{Let $\sigma^2 > 0$ and fix any MDP $\cM \in \mathfrak{M}_{\sigma^2}$. Then, any algorithm which is $(\epsilon,\delta)$-PAC on the class $\mathfrak{M}_{\sigma^2}$ must satisfy, for any $h\in[H]$, $s\in\cS_h$, and $a\in\cA_h(s)$,}
\begin{align}\label{eq:local-lb}
\bE_{\cM}[n_h^\tau(s,a)] \geq \underline{c}_h(s,a) := \frac{\hl{\sigma^2}\log(1/4\delta)}{4\max(\overline{\Delta}_h(s,a), \overline{\Delta}_{\min}^h, \epsilon)^2},
\end{align}
where $\overline{\Delta}_{\min}^h := \min_{(s',a') : \overline{\Delta}_h(s',a') > 0} \overline{\Delta}_h(s',a')$ if $|\cZ_h^\epsilon| = 1$ and $\overline{\Delta}_{\min}^h := 0$ otherwise. Moreover, for $\underline{c} : \cE \rightarrow [0,\infty)$ the lower bound function defined above,
\begin{align}\label{eq:global-lb}
\bE_{\cM}[\tau] \geq \varphi^\star(\underline{c}).
\end{align}
\end{theorem}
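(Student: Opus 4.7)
The plan is to prove the two inequalities separately. The local bound \eqref{eq:local-lb} will follow from a change-of-measure argument applied to two alternative MDPs that differ from $\cM$ only in the reward at $(s,a,h)$, while the global bound \eqref{eq:global-lb} will follow from the observation that the vector of expected visit counts is itself a feasible flow.

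For the local bound, fix $(s,a,h)$ and define $\cM^+,\cM^- \in \mathfrak{M}_1$ by replacing the Gaussian mean $r_h(s,a)$ with $r_h(s,a)+\alpha^+$ and $r_h(s,a)-\alpha^-$ respectively, for $\alpha^\pm > 0$ to be chosen. Since only one reward distribution changes and $\mathrm{KL}(\cN(\mu,1),\cN(\mu',1)) = (\mu-\mu')^2/2$, the standard transportation inequality yields, for any event $E$ in the $\sigma$-algebra of the history up to $\tau$,
\[
\bE_{\cM}[n_h^\tau(s,a)] \cdot \tfrac{(\alpha^\pm)^2}{2} \geq \mathrm{kl}\bigl(\mathbb{P}_{\cM}(E),\,\mathbb{P}_{\cM^\pm}(E)\bigr).
\]
I take $E = \{\widehat{\pi} \in \Pi_{s,a,h}\}$ and aim to choose $\alpha^+,\alpha^-$ so that $\Pi^\epsilon_{\cM^+} \subseteq \Pi_{s,a,h}$ and $\Pi^\epsilon_{\cM^-} \cap \Pi_{s,a,h} = \emptyset$, which by the PAC property forces $\mathbb{P}_{\cM^+}(E) \geq 1-\delta$ and $\mathbb{P}_{\cM^-}(E) \leq \delta$. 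Applying the inequality against $\cM^-$ when $\mathbb{P}_{\cM}(E) \geq 1/2$ and against $\cM^+$ otherwise, and using the monotonicity of $p \mapsto \mathrm{kl}(p,\delta)$ on $[\delta,1]$ together with the identity $\mathrm{kl}(p,1-\delta) = \mathrm{kl}(1-p,\delta)$, one obtains
\[
\bE_{\cM}[n_h^\tau(s,a)] \geq \frac{2\,\mathrm{kl}(1/2,\delta)}{\max\{(\alpha^+)^2,(\alpha^-)^2\}} \geq \frac{\log(1/(4\delta))}{\max\{(\alpha^+)^2,(\alpha^-)^2\}},
\]
where the final inequality uses $\mathrm{kl}(1/2,\delta) = \tfrac12\log(1/(4\delta(1-\delta))) \geq \tfrac12\log(1/(4\delta))$.

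It remains to bound $\max\{(\alpha^+)^2,(\alpha^-)^2\}$ by $4\max(\overline{\Delta}_h(s,a),\overline{\Delta}_{\min}^h,\epsilon)^2$. Shifting $r_h(s,a)$ by $\pm\alpha^\pm$ shifts $V_1^\pi$ by $\pm\alpha^\pm$ for every $\pi \in \Pi_{s,a,h}$ and leaves all other returns unchanged; a direct computation of optimal returns in $\cM^\pm$ then shows the disjointness conditions reduce to $\alpha^+ > \overline{\Delta}_h(s,a) + \epsilon$ and $\alpha^- > \beta + \epsilon - \overline{\Delta}_h(s,a)$, where $\beta := \min_{\pi \notin \Pi_{s,a,h}} \Gamma(\pi)$ is the smallest policy gap outside $\Pi_{s,a,h}$. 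Three cases exhaust the possibilities: (i) if $\overline{\Delta}_h(s,a) > \epsilon$, then $\Pi^\epsilon_{\cM} \cap \Pi_{s,a,h} = \emptyset$ already, so $\mathbb{P}_{\cM}(E) \leq \delta$ and only $\cM^+$ is needed, with $\alpha^+ \leq 2\overline{\Delta}_h(s,a)$; (ii) if $(s,a) \in \cZ_h^\epsilon$ and $|\cZ_h^\epsilon| > 1$, then some $\epsilon$-optimal policy lies outside $\Pi_{s,a,h}$, so $\beta \leq \epsilon$ and both $\alpha^+,\alpha^- \leq 2\epsilon$; (iii) if $|\cZ_h^\epsilon| = 1$, then $\overline{\Delta}_h(s,a) = 0$ (otherwise any optimal policy would witness a second element of $\cZ_h^\epsilon$), whence $\beta = \overline{\Delta}_{\min}^h > \epsilon$ and $\alpha^+,\alpha^- \leq \overline{\Delta}_{\min}^h + \epsilon \leq 2\overline{\Delta}_{\min}^h$.

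The global bound \eqref{eq:global-lb} is then nearly immediate. Every episode traces a deterministic trajectory through $\cG(\cM)$, so pathwise the counts $n_h^\tau(s,a)$ exactly satisfy the navigation constraints \eqref{eq:flow-definition} at every internal node; by linearity their expectations $\eta_h(s,a) := \bE_{\cM}[n_h^\tau(s,a)]$ form a flow, whose value equals $\sum_{a \in \cA_1(s_1)} \eta_1(s_1,a) = \bE_{\cM}[\tau]$ since every episode leaves the source exactly once. The local bound just established shows $\eta$ is feasible for $\underline{c}$, so by minimality $\bE_{\cM}[\tau] = \varphi(\eta) \geq \varphi^\star(\underline{c})$. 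The main obstacle in the whole proof is the structural observation in case (iii) that $|\cZ_h^\epsilon|=1$ forces $\overline{\Delta}_h(s,a)=0$; without it, $\alpha^-$ could blow up to the order of $\overline{\Delta}_{\min}^h$ in situations where the claimed bound only permits $\epsilon$. Once that observation is in hand, the rest is a routine application of the transportation inequality and flow feasibility.
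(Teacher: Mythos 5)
Your proof is correct and follows essentially the same route as the paper: a change-of-measure (transportation) argument with single-reward Gaussian perturbations at $(s,a,h)$, a case split on whether $(s,a)\in\cZ_h^\epsilon$ and whether $|\cZ_h^\epsilon|=1$ (the paper's Lemmas on sub-optimal, non-unique optimal, and unique optimal pairs), and the observation that the expected visit counts form a feasible flow, so $\bE[\tau]=\varphi(\eta)\geq\varphi^\star(\underline{c})$. Your unified treatment via $\mathrm{kl}(1/2,\delta)\geq\tfrac12\log(1/4\delta)$ and the parameter $\beta=\min_{\pi\notin\Pi_{s,a,h}}\Gamma(\pi)$ is just a compact repackaging of the same three constructions.
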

%Theorem \ref{th:instance-lb} states two lower bounds. 
The first lower bound (\ref{eq:local-lb}) is on the number of visits required for any state-action-stage triplet. It intuitively shows that an $(\epsilon,\delta)$-PAC algorithm must visit each triplet proportionally to its inverse deterministic return gap. The second one (\ref{eq:global-lb}) shows that the actual sample complexity of the algorithm must be at least the value of a minimum flow computed with the local lower bounds \eqref{eq:local-lb}, i.e. that the algorithm must play the minimum number of episodes (i.e., policies) that guarantees \eqref{eq:local-lb} for each $(s,a,h)$. Intuitively, due to the navigation constraints of the MDP, there might be no algorithm which tightly matches \eqref{eq:local-lb} for each $(s,a,h)$, and \eqref{eq:global-lb} is exactly enforcing these constraints. While $\varphi^\star(\underline{c})$ has no explicit form, Lemma \ref{lem:flow-bounds} in Appendix \ref{app:flows} gives an idea of how it scales with the gaps:{\small
\begin{align*}
\hl{\max_{h\in[H]} \sum_{s\in\cS_h}\sum_{a\in\cA_h(s)} \frac{\sigma^2\log(1/4\delta)}{4\max(\overline{\Delta}_h(s,a), \overline{\Delta}_{\min}^h, \epsilon)^2} \leq \varphi^\star(\underline{c}) \leq \sum_{h\in[H]} \sum_{s\in\cS_h}\sum_{a\in\cA_h(s)} \frac{\sigma^2\log(1/4\delta)}{4\max(\overline{\Delta}_h(s,a), \overline{\Delta}_{\min}^h, \epsilon)^2}. }
\end{align*}}
Observe that the quantity on the right-hand side resembles the complexity measure $\cC(\cM,\epsilon)$ \cite{wagenmaker21IDPAC}, except that value gaps are replaced by return gaps. This implies that, in general, our lower bound can be much smaller than this complexity. For instance, in an MDP with extremely small value gaps in states which are not visited by an optimal policy, $\varphi^\star(\underline{c})$ does not scale with such gaps at all.

In Appendix~\ref{app:LB_minimax} we further provide a minimax lower bound for PAC RL in deterministic MDPs scaling as $\Omega\left({SAH^2 \log\left({1}/{\delta}\right)}/{\varepsilon^2} \right)$, with a reduced $H^2$ dependency compared to the $H^3$ that appear in the stochastic case \cite{Omar21LB}. \blue{We note that faster rates for deterministic MDPs have already been obtained in other RL settings \citep[e.g.,][]{Yin2021TowardsIO}}. The BPI-UCRL algorithm \cite{Kaufmann21RFE} particularized to deterministic MDPs is matching this lower bound and is thus minixal optimal. We now present the first algorithm which is simultaneously minimax optimal for deterministic MDPs and nearly matching \blue{(up to $O(H^2)$ and logarithmic factors)} the lower bound of Theorem~\ref{th:instance-lb}.

%!TEX root = ../main.tex 

\section{EPRL and Max-Coverage Sampling}\label{sec:algorithms}

We propose a general Elimination-based scheme for PAC RL, called EPRL (Algorithm \ref{alg:elimination-alg}). At each episode $t\in\mathbb{N}$, the algorithm plays a policy $\pi^t$ selected by some sampling rule. Then, based on the collected samples, the algorithm updates its statistics and eliminates all actions which are detected as sub-optimal with enough confidence. This procedure is repeated until a stopping rule triggers. %, ensuring that an $\epsilon$-optimal policy has been found with high probability, triggers.

Formally, EPRL maintains an estimate $\hat{r}_h^t(s,a) := \frac{1}{n_h^t(s,a)}\sum_{l=1}^t y_h^l \indi{s_h^l=s,a_h^l=a}$, with $\hat{r}_h^0=0$, of the unknown mean reward $r_h(s,a)$ for each $(s,a,h)$. 
%This is initialized as $\hat{r}_h^0 := 0$ and updated as $\hat{r}_h^t(s,a) := \frac{1}{n_h^t(s,a)}\sum_{l=1}^t y_h^l \indi{s_h^l=s,a_h^l=a}$, 
Here $n_h^t(s,a) := \sum_{l=1}^t \indi{s_h^l=s,a_h^l=a}$ is the number of times $(s,a)$ is visited at stage $h$ up to episode $t$.
We define the following upper and lower confidence intervals to the value functions of a policy $\pi\in\Pi$:
\begin{align*}
\overline{Q}_h^{t,\pi}(s,a) & := \hat{r}_h^{t}(s,a) + b_h^t(s,a) + \overline{V}_{h+1}^{t,\pi}(f_h(s,a)), \quad \overline{V}_{h}^{t,\pi}(s) :=  \overline{Q}_h^{t,\pi}(s,\pi_h(s)), \\
\underline{Q}_h^{t,\pi}(s,a) &:= \hat{r}_h^{t}(s,a) - b_h^t(s,a) + \underline{V}_{h+1}^{t,\pi}(f_h(s,a)), \quad \underline{V}_{h}^{t,\pi}(s) :=  \underline{Q}_h^{t,\pi}(s,\pi_h(s)),
\end{align*}
where $b_h^t(s,a)$ is a \emph{bonus function}, i.e., the width of the confidence interval at $(s,a,h)$. We assume that rewards are $\sigma^2$-sub-Gaussian with a known factor $\sigma^2$,\footnote{\hl{Note that sub-Gaussianity generalizes the common assumption of bounded rewards in $[0,1]$ (in which case $\sigma^2=1/4$) and the one of Gaussian rewards with variance $\sigma^2$ (as used in the lower bound of Theorem \ref{th:instance-lb}).}} which allows us to choose
%\footnote{\blue{Our analysis extends trivially to $\sigma^2$-subgaussian rewards by using the threshold $\beta(t,\delta) := 2\sigma^2\log\left(\frac{\pi^2t^2N}{3\delta}\right)$ and removing the clipping of the bonuses, in which case all stated results change only by constant factors.}}
\begin{align}\label{eq:bonus}
  \hl{b_h^t(s,a) := \sqrt{\frac{\beta(n_h^t(s,a),\delta)}{n_h^t(s,a)}}, \quad \beta(t,\delta) := 2\sigma^2\log\left(\frac{4t^2N}{\delta}\right).}
\end{align}
%\footnote{A distribution $\nu$ with mean $\mu$ is $\sigma^2$ subgaussian if $\log \bE_{X \sim \nu}[e^{\lambda (X-\mu)}] \leq \frac{\sigma^2\lambda^2}{2}$ for all $\lambda\in \R$. For example a distribution supported in $[0,1]$ is 1/4-sub-gaussian.} 

%We now describe the main components of Algorithm \ref{alg:elimination-alg} in detail.

\begin{algorithm}[h]
\caption{Elimination-based PAC RL (EPRL) for deterministic MDPs}\label{alg:elimination-alg}
\begin{algorithmic}[1]
\STATE \textbf{Input:} deterministic MDP (without reward) $\cM := (\cS, \cA, \{f_h\}_{h\in[H]}, s_1, H)$, $\epsilon$, $\delta$
\STATE Initialize ${\cA}^{0}_h(s) \leftarrow \cA_h(s)$ for all $h\in[H],s\in\cS_h$
%\STATE \blue{Initialize $\Pi^0 \leftarrow \Pi$}
\STATE Set $n^0_h(s,a) \leftarrow 0$ for all $h\in[H],s\in\cS_h,a\in\cA_h(s)$
\FOR{$t=1,\dots$}
\STATE Play $\pi^t \leftarrow \textsc{SamplingRule()}$ %\hfill{\color{gray!60!black}\small\textsc{// Sampling rule}}
\STATE Update statistics $n_h^t(s,a), \hat{r}_h^t(s,a)$
\STATE $\cA^{t}_h(s) \leftarrow \cA^{t-1}_h(s) \cap \left\{a\in\cA : \max_{\pi\in\Pi_{s,a,h} \cap \Pi^{t-1}} \overline{V}_{1}^{t,\pi}(s_1) \geq \max_{\pi\in\Pi} \underline{V}_{1}^{t,\pi}(s_1) \right\}$
\STATE {\small\blue{\quad where $\Pi^{t-1} \leftarrow \big\{\pi\in\Pi \mid \forall s,h : \pi_h(s) \in \cA^{t-1}_h(s) \vee \cA^{t-1}_h(s) = \emptyset\big\}$ (need not be stored/computed)}}
%\hfill{\color{gray!60!black}\small\textsc{// Elimination rule}}
\STATE \textbf{if} $\max_{\pi\in\Pi^{t}} \left(\overline{V}_1^{{\pi}, t}(s_1) - \underline{V}_1^{{\pi},t}(s_1) \right) \leq \epsilon$ or $\forall h\in[H], s\in\cS_h : |\cA^{t}_h(s)| \leq 1$ \textbf{then} %\hfill{\color{gray!60!black}\small\textsc{// Stopping rule}}
\STATE \quad Stop and recommend $\widehat{\pi} \in \argmax_{\pi\in\Pi^{t}} \overline{V}_1^{{\pi}, t}(s_1)$ %\hfill{\color{gray!60!black}\small\textsc{// Recommendation rule}}
\STATE \textbf{end if}
\ENDFOR
\vspace{0.2cm}
\STATE \textbf{function} \textsc{MaxCoverage}()
\STATE \quad Let $k_t \leftarrow \min_{h\in[H],s\in\cS_h,a\in\cA_h^{t-1}(s)} n_h^{t-1}(s,a) + 1$ and $\bar{t}_{k_t} \leftarrow \inf_{l\in\mathbb{N}}\{l : k_l=k_t\}$
\STATE \quad \textbf{if} $t\ \mathrm{mod}\ 2 = 1$ \textbf{then}
\STATE \quad \quad\textbf{return} $\pi^t \leftarrow \argmax_{\pi\in\Pi} \sum_{h=1}^H \indi{a_h^\pi\in\cA_h^{\bar{t}_{k_t}-1}(s_h^\pi), n_h^{t-1}(s_h^\pi,a_h^\pi) < k_t}$ 
\STATE \quad \textbf{else}
\STATE \quad \quad \textbf{return} $\pi^t \leftarrow \argmax_{\pi\in\Pi^{t-1}} \sum_{h=1}^H b_h^{t-1}(s_h^\pi,a_h^\pi)$ \hfill(\textsc{MaxDiameter})
\STATE \quad \textbf{end if}
% \STATE \textbf{end function}
\end{algorithmic}
\end{algorithm}

\vspace{-0.1in}
\paragraph{Elimination rule}

Algorithm \ref{alg:elimination-alg} keeps a set of active (or candidate) actions $\cA^{t}_h(s)$ for each stage $h\in[H]$, state $s\in\cS_h$, and episode $t\in\mathbb{N}$. Let $\Pi^{t} := \{\pi\in\Pi \mid \forall s,h : \pi_h(s) \in \cA^{t}_h(s) \vee \cA^{t}_h(s) = \emptyset\}$ be the subset of \emph{active} policies that only play active actions at episode $t$. %We refer to them as ``active policies'' in the remaining. 
Note that an active policy can play an arbitrary action in states where all actions have been eliminated. As can be seen in Line 7 of Algorithm~\ref{alg:elimination-alg}, action $a$ is eliminated from $\cA_h^{t}(s)$ if 
$\max_{\pi\in\Pi_{s,a,h} \cap \Pi^{t-1}} \overline{V}_{1}^{t,\pi}(s_1) \leq \max_{\pi\in\Pi} \underline{V}_{1}^{t,\pi}(s_1)$, 
%We define the set of active actions as follows. We initialize it as $\cA^{0}_h(s) = \cA_h(s)$ and update it at the end of episode $t$ as\todoatout{If space is needed, consider removing this and refer to the same line in the pseudocode}
%\begin{align*}
%\cA^{t}_h(s) \leftarrow \cA^{t-1}_h(s) \cap \left\{a\in\cA_h(s) : \max_{\pi\in\Pi_{s,a,h} \cap \Pi^{t-1}} \overline{V}_{1}^{t,\pi}(s_1) \geq \max_{\pi\in\Pi} \underline{V}_{1}^{t,\pi}(s_1) \right\},
%\end{align*}
%where we recall that $\Pi_{s,a,h}$ denotes the set of all deterministic policies that visit $s,a$ at stage $h$, 
%That is, we eliminate action $a$ for state $s$ at stage $h$ when the maximum upper confidence bound on the return of any active policy visiting $(s,a)$ at stage $h$ is below the maximum lower confidence bound on the return of any policy. 
%This intuitively means that, with high probability, no policy visiting $(s,a)$ at stage $h$ is optimal. 
that is, when we are confident that none of the policies visiting $(s,a)$ at stage $h$ is optimal. We recall that $\Pi_{s,a,h}$ denotes the set of all deterministic policies that visit $s,a$ at stage $h$.
The maximum restricted to $\Pi_{s,a,h}$ can be easily computed by standard dynamic programming (e.g., it is enough to set the reward to $-\infty$ for all state-action pairs different than $(s,a)$ at stage $h$). If $\Pi_{s,a,h} \cap \Pi^{t-1} = \emptyset$, we set the maximum to $-\infty$ so that the elimination rule triggers. 
\begin{remark}
\blue{While defining $\Pi^t$ simplifies the presentation, EPRL neither stores nor enumerates the set of active policies. In particular, EPRL does not eliminate policies but rather $(s,a,h)$ triplets. The sets $\cA_h^t(s)$ can be updated in polynomial time by dynamic programming without ever computing $\Pi^t$.}
\end{remark}

% \vspace{-0.1in}
\paragraph{Stopping rule}

EPRL uses two different stopping rules (Line 9). The first one checks whether, for all active policies $\pi\in\Pi^t$, the confidence interval on the return, $\overline{V}_1^{{\pi}, t}(s_1) - \underline{V}_1^{{\pi},t}(s_1) = 2\sum_{h=1}^H b_h^t(s_h^\pi,a_h^\pi)$, which we refer to as \emph{diameter}, is below $\epsilon$. %When this happens, we intuitively have that the return of all active policies is well estimated up to an error $\epsilon$, which is enough to recommend an $\epsilon$-optimal policy with high confidence. 
%This stopping rule is needed to ensure the minimax sample complexity bound. 
The second one checks whether each set $\cA^{t}_h(s)$ contains either $1$ action or $0$ actions (which happens when the state is unreachable by an optimal policy). 
%As we shall see, this stopping rule is needed to make sure that the algorithm does not suffer a sample complexity scaling as $1/\epsilon^2$ when there exists a unique optimal trajectory. 
In both cases, we recommend the optimistic (active) policy (Line 10).
% $
% \widehat{\pi} \in \argmax_{\pi\in\Pi^{t}} \overline{V}_1^{{\pi}, t}(s_1)$.

%\vspace{-0.3cm}
\paragraph{Sampling rule} While EPRL may be used with different sampling rules, we recommend the max-coverage sampling rule described in Algorithm~\ref{alg:elimination-alg}. This sampling rule aims at ensuring that no $(s,a,h)$ triplet remains under-visited for too long. This is achieved by selecting the policy which greedily maximizes the number of visited under-sampled triplets, denoted by $\mathcal{U}_t$. The quantity $k_t  = \min_{(s,a,h) : a \in \cA_h^{t-1}(s)} n_h^{t-1}(s,a) + 1$ can be interpreted as the target minimum number of visits from active triplets that we want to achieve in round $t$ and permit to define 
\[ \pi^{t} = \argmax_{\pi \in \mathrm{\Pi}}\sum_{h=1}^{H}\ind\left((s_h^{\pi},a_h^{\pi},h) \in \mathcal{U}_t\right) \text{ with } \ \mathcal{U}_t =\left\{(s,a,h) : a \in \cA_h^{\overline{t}_{k_t}-1}(s), n_h^{t-1}(s,a) < k_t\right\},\] 
where $\overline{t}_k = \inf\{ t : k_t = k\}$ is the first round in which the target is set to $k$. The argmax over $\Pi$ can be computed using dynamic programming. We emphasize that this argmax is not restricted to the set of active policies, meaning that we may play eliminated actions in order to augment the coverage (that is, the minimal number of visits) faster. Every even round, max-coverage instead chooses an active policy maximizing the diameter featured in the stopping rule (max-diameter sampling). As we shall see in our analysis, this dichotomous behavior is needeed in order to maintain minimax-optimality. 

\paragraph{Comparison with other elimination-based algorithms}

 The work of \cite{EvenDaral06} provides a heuristic using action eliminations to find an $\varepsilon$-optimal policy in a discounted MDP. However, no sample complexity guarantees are given for this algorithm, which uses a different elimination rule, based on confidence intervals on the optimal value function, and a uniform sampling rule. The MOCA algorithm \cite{wagenmaker21IDPAC} also uses a different action elimination rule compared to ours. In particular, the decision to eliminate $(s,a,h)$ is made based only on rewards that can be obtained after visiting $(s,a,h)$. Moreover, this algorithm uses a complex phase-based sampling rule, while the sampling rule of EPRL is fully adaptive.

\section{Theoretical Guarantees}\label{sec:analysis}

Our first result, proved in Appendix~\ref{app:correctness}, shows that EPRL is $(\varepsilon,\delta)$-PAC under any sampling rule. It follows from the fact that 1) the choice of bonus function \eqref{eq:bonus} ensures that all the confidence intervals are valid and 2) state-action pairs from optimal trajectories are never eliminated when this holds.

\begin{theorem}\label{th:pac}
Algorithm \ref{alg:elimination-alg} is $(\epsilon,\delta)$-PAC provided that the sampling rule makes it stop almost surely.
\end{theorem}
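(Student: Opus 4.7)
My plan is to work on a high-probability ``good event'' and verify that, conditioned on this event, both stopping conditions imply that the recommended policy is $\epsilon$-optimal.

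\emph{Good event and concentration.} First I would define the event
\begin{equation*}
\mathcal{E} := \Bigl\{ \forall (s,a,h,t) \text{ with } n_h^t(s,a) \geq 1 : |\hat{r}_h^t(s,a) - r_h(s,a)| \leq b_h^t(s,a) \Bigr\}.
\end{equation*}
By a standard time-uniform Hoeffding inequality (e.g., the method of mixtures or peeling) applied to the martingale $\sum_{\ell \leq t}(y_h^\ell - r_h(s,a))\mathds{1}\{s_h^\ell=s, a_h^\ell = a\}$ for $[0,1]$-bounded (hence $1/4$-subgaussian) rewards, the choice $\beta(n,\delta) = \tfrac{1}{2}\log(e(n{+}1)N/\delta)$ gives $\mathbb{P}(\exists t : |\hat{r}_h^t(s,a) - r_h(s,a)| > b_h^t(s,a)) \leq \delta/N$ for each triplet, and a union bound over the $N$ reachable triplets yields $\mathbb{P}(\mathcal{E}) \geq 1-\delta$. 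A straightforward backward induction on $h$ then shows that on $\mathcal{E}$, for every policy $\pi$, state $s$, stage $h$ and episode $t$,
\begin{equation*}
\underline{V}_h^{t,\pi}(s) \leq V_h^\pi(s) \leq \overline{V}_h^{t,\pi}(s).
\end{equation*}

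\emph{Optimal policies are never eliminated.} Next I would show by induction on $t$ that on $\mathcal{E}$, every optimal policy $\pi^\star \in \Pi^\star$ belongs to $\Pi^t$. The base case $t=0$ holds since $\Pi^0 = \Pi$. For the inductive step, fix any $s \in \mathcal{S}_h$ visited by $\pi^\star$ and let $a^\star = \pi^\star_h(s)$. Since $\pi^\star \in \Pi_{s,a^\star,h} \cap \Pi^{t-1}$,
\begin{equation*}
\max_{\pi \in \Pi_{s,a^\star,h} \cap \Pi^{t-1}} \overline{V}_1^{t,\pi}(s_1) \;\geq\; \overline{V}_1^{t,\pi^\star}(s_1) \;\geq\; V_1^{\pi^\star}(s_1) \;=\; V_1^\star(s_1) \;\geq\; V_1^{\tilde\pi}(s_1) \;\geq\; \underline{V}_1^{t,\tilde\pi}(s_1)
\end{equation*}
for every $\tilde\pi \in \Pi$, so the elimination condition is not triggered and $a^\star$ survives in $\mathcal{A}_h^t(s)$. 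Hence $\pi^\star \in \Pi^t$.

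\emph{Correctness upon stopping.} Now I would consider the two stopping rules separately, always on $\mathcal{E}$. If the first rule triggers at episode $t$, then for $\widehat\pi \in \arg\max_{\pi \in \Pi^t} \overline{V}_1^{t,\pi}(s_1)$ and any $\pi^\star \in \Pi^\star \cap \Pi^t$,
\begin{equation*}
V_1^{\widehat\pi}(s_1) \geq \underline{V}_1^{t,\widehat\pi}(s_1) \geq \overline{V}_1^{t,\widehat\pi}(s_1) - \epsilon \geq \overline{V}_1^{t,\pi^\star}(s_1) - \epsilon \geq V_1^\star(s_1) - \epsilon.
\end{equation*}
If the second rule triggers, $|\mathcal{A}_h^t(s)| \leq 1$ everywhere. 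The key observation is that at every state reached by $\pi^\star$ we have $\pi^\star_h(s) \in \mathcal{A}_h^t(s)$ (by the previous step), forcing $|\mathcal{A}_h^t(s)| = 1$. Propagating the determinism of the transitions forward from $s_1$ and using that any active policy $\widehat\pi \in \Pi^t$ must agree with $\pi^\star$ at each such singleton state, one shows inductively on $h$ that $\widehat\pi$ and $\pi^\star$ share the same trajectory, hence $V_1^{\widehat\pi}(s_1) = V_1^\star(s_1)$. Combining both cases with $\mathbb{P}(\mathcal{E}) \geq 1-\delta$ yields $(\epsilon,\delta)$-PAC-ness, which concludes the proof under the assumption that the sampling rule ensures a.s. termination.

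\emph{Main obstacle.} The only delicate point is the second case of the stopping rule: one has to rule out the possibility that $\widehat\pi$ diverges from $\pi^\star$ by passing through a state with $\mathcal{A}_h^t(s) = \emptyset$ (where $\widehat\pi$ is free to play any action). This is handled by the forward determinism argument above, which shows that $\widehat\pi$ cannot reach such a state because every state along $\pi^\star$'s trajectory has exactly one surviving action that both policies must play.
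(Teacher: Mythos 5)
Your proof is correct and follows essentially the same route as the paper: a high-probability good event for the reward confidence intervals, an induction over episodes showing that optimal actions are never eliminated, and a two-case analysis of the stopping rules (the diameter chain for the first rule, and the forward-determinism/singleton-active-action argument for the second). The only cosmetic difference is the concentration tool (a time-uniform Hoeffding argument versus the paper's Pinsker-plus-kl deviation bound from \cite{jonsson2020planning}), which yields the same bonus and the same $1-\delta$ guarantee.
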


%The proof of this theorem, reported in Appendix \ref{app:correctness}, is a simple consequence of two results: the fact that the chosen bonus function \eqref{eq:bonus} ensures that all confidence intervals are valid with probability at least $1-\delta$ (Lemma \ref{lem:good-event-whp} in Appendix \ref{app:good-event}) and the fact that state-action pairs from optimal trajectories (those with maximum return) are never eliminated with the same probability (Lemma \ref{lem:opt-actions-not-eliminated} in Appendix \ref{app:correctness}).

We now analyze the sample complexity of EPRL combined with max-coverage sampling. 

%\todoatout{Fill in the XXX once appendix is ready}
% \todoat{Change constants when all proof have been adapted}
\begin{theorem}\label{th:max-cover-sample-comp}(Informal version of Theorem \ref{th:formal} in Appendix \ref{app:max-cover})
With probability at least $1-\delta$, the sample complexity of EPRL combined with the maximum-coverage sampling rule satisfies $\tau = \widetilde{O}(\varphi^\star(g))$, %\todoatout{Is this correct? We get a $\log(1/\delta)^2$ dependence if we multiply $\varphi^\star(g)$ and $\log(1/\delta)$. Also, where does the $3/2$ in front of the first term in $g$ come from?}
% \begin{align}\label{eq:sample-complexity-maxcover-instace-dep}
% \tau \leq 2(\log(H)+1)\left(\max_{(s,a,h)}\log\big( g_h(s,a)\big) + 1\right) \varphi^\star(g),
% \end{align}
where $g : \cE \rightarrow [0,\infty)$ is the lower bound function defined by{\small
% \begin{align*}
% g_h(s,a) := \frac{8H^2}{\max\left(\overline{\Delta}_h(s,a),\overline{\Delta}_{\min}, \epsilon \right)^2} \left( \frac{3}{2}\log\left(\frac{2eN^2}{\delta}\right) + 4\log\left( \frac{4H}{\max\left(\overline{\Delta}_h(s,a),\overline{\Delta}_{\min}, \epsilon \right)}\right)\right) + 2.
% \end{align*}}
\begin{align*}
  \hl{g_h(s,a) := \frac{32\sigma^2 H^2}{\max\left(\overline{\Delta}_h(s,a),\overline{\Delta}_{\min}, \epsilon \right)^2}  \left( \log\left(\frac{4N^3}{\delta}\right) + 8\log\left( \frac{16\sigma H\log\left(\frac{4N^3}{\delta}\right)}{\max\left(\overline{\Delta}_h(s,a),\overline{\Delta}_{\min}, \epsilon \right)}  \right)\right) + 2.}
\end{align*}}
%\todoatout{I just used $\log\log\left(\frac{eN^2}{\delta}\right) \leq \log\left(\frac{eN^2}{\delta}\right)$ and added a $2$ in front of the first term to simplify the result. This worsens the asymptotic rate in $\delta$ by a factor $2$.}
%\todoeout{We can replace the $2$ by a $3/2$ using $\log(x)\leq x/e$ instead of $\log(x) \leq x$.}
%with $L_h(s,a) := \log(2) + 4\log\left( \frac{4H}{\max\left(\overline{\Delta}_h(s,a),\overline{\Delta}_{\min}, \epsilon \right)}\right) + \log\log\left(\frac{eN^2}{\delta}\right)$. 
Moreover, with the same probability, $\tau = \widetilde{O}(\frac{SAH^2}{\epsilon^2}\log(1/\delta))$, where $\widetilde{O}$ hides logarithmic terms.
\end{theorem}

%Theorem \ref{th:max-cover-sample-comp} is proven in Appendix \ref{app:max-cover}. 
First note that EPRL combined with such a sampling rule is \emph{minimax optimal}\blue{, since it matches the worst-case lower bound derived in Appendix~\ref{app:LB_minimax}}. In addition, the leading term in the instance-dependent complexity is the value of a minimum flow with a lower bound function $g$ that, in case multiple disjoint optimal trajectories exist\footnote{When there is a unique optimal trajectory, our upper bound scales with $\overline{\Delta}_{\min} = \min_{h\in[H]}\overline{\Delta}_{\min}^h$ at all stages $h$, while the lower bound scales with $\overline{\Delta}_{\min}^h$ at stage $h$. We believe the latter should be improvable to obtain a dependence on $\overline{\Delta}_{\min}$ matching the one in the upper bound.}, matches the gap-dependence in \eqref{th:instance-lb}. \hl{If we suppose that there exist at least two disjoint optimal trajectories, in which case $\overline{\Delta}_{\min} = \overline{\Delta}_{\min}^h = 0$, then, thanks to Lemma \ref{lem:flow-of-sum} in Appendix \ref{app:flows}, one can easily see that $\varphi^\star(g) \leq \alpha H^2\varphi^\star(\underline{c}) + \varphi^\star(g')$,
where $g_h'(s,a) := \widetilde{O}(H^2/\max\left(\overline{\Delta}_h(s,a), \epsilon \right)^2)$ does not depend on $\delta$, $\underline{c}$ is the ``optimal'' lower bound function from \eqref{th:instance-lb}, and $\alpha$ is a numerical constant. Hence, in the asymptotic regime ($\delta \rightarrow 0$), $\varphi^\star(g)$ matches our lower bound up to a $O(H^2)$ multiplicative factor.}

\begin{remark}
  \hl{
    Since Theorem \ref{th:instance-lb} was derived for Gaussian rewards, EPRL is instance-optimal only when the reward distribution is Gaussian. This is not surprising since it is well known from the bandit literature \citep[e.g.,][]{lai1985asymptotically} that sample complexity bounds scaling with a sum of inverse squared gaps are optimal only for Gaussian distributions. Note, however, that EPRL works in greater generality and achieves complexity $\varphi^\star(\underline{c})$ for any $\sigma^2$-sub-Gaussian distribution without knowing its specific form (e.g., whether it is Gaussian or not). What is the optimal rate for other common distributions (e.g., bounded rewards in $[0,1]$) and how to achieve it remains an open question.}
  %While the above upper bound holds for rewards in $[0,1]$, it can be trivially extended to any sub-Gaussian distribution by using the threshold given in Footnote 4. In this case, the bound remains the same up to constants and, when instantiated for Gaussian distributions with unit variance, the multiplicative mismatch factor of $\varphi^\star(g)$ becomes exactly $192H^2$.}
\end{remark}

%\footnote{The above upper bound holds for rewards in $[0,1]$ and writing the lower bound for Gaussian distributions with variance $\sigma^2 = 1/4$, the multiplicative mismatch factor becomes exactly $192H^2$.} 

Finally, our sample complexity bound has an extra multiplicative logarithmic term which roughly scales as $O(\log(H)\log(H\log(1/\delta)/\epsilon))$. While this term makes the dependence on $\delta$ sub-optimal by a $\log\log(1/\delta)$ factor, we show in Appendix \ref{app:tree} that it can be removed in the specific case of tree-based MDPs \citep{dann21ReturnGap}.

\begin{remark}
  \blue{
  We believe that the sub-optimality on $H$ could be reduced to a single $H$ factor by boosting the lower bound. In Appendix \ref{app:tree}, we show that this is indeed possible in tree-based MDPs. As for the upper bound, reducing $H^2$ to $H$ is likely to require tighter concentration bounds on values.}
\end{remark}

\begin{remark}
  In Appendix \ref{app:max-diam}, we prove that, when using the max-diameter sampling rule (Line 18 in Algorithm \ref{alg:elimination-alg}) at each step, the sample complexity is $\widetilde{O}(\sum_{(s,a,h)}H^2/\max(\overline{\Delta}_h(s,a),\overline{\Delta}_{\min}, \epsilon )^2)$. While this scales with the same gaps as Theorem \ref{th:max-cover-sample-comp}, it is only a naive upper bound to the minimum flow value (see Section \ref{sec:lower-bounds}). The intuition is that max-diameter sampling alone does not ensure that all triplets are visited sufficiently often, which prevents us from tightly controlling their elimination times.
\end{remark}

\paragraph{Proof sketch} The complete proof is given in Appendix~\ref{app:max-cover}. It first relies on the following crucial result which relates the deterministic return gaps to the sum of confidence bonuses.

\begin{lemma}[Diameter vs gaps]\label{lem:diam-vs-gap}
With probability at least $1-\delta$, for any $t\in\mathbb{N}, h\in[H],s\in\cS_h,a\in\cA_h(s)$, if $a\in\cA_h^t(s)$ and the algorithm did not stop at the end of episode $t$,
\begin{align*}
 \max\left(\frac{\overline{\Delta}_h(s,a)}{4},\frac{\overline{\Delta}_{\min}}{4}, \frac{\epsilon}{2} \right) \leq
  \max_{\pi\in\Pi^{t-1}}\sum_{h=1}^H b_h^t(s_h^\pi,a_h^\pi),
\end{align*}
where $\overline{\Delta}_{\min} := \min_{h\in[H]}\min_{s\in\cS_h}\min_{a:\overline{\Delta}_h(s,a)>0}\overline{\Delta}_h(s,a)$ if there exists a unique optimal trajectory $(s_h^\star,a_h^\star)_{h\in[H]}$, and $\overline{\Delta}_{\min} := 0$ in the opposite case.
\end{lemma}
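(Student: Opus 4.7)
The plan is to condition on the good event $\cE$ from Lemma~\ref{lem:good-event-whp}, on which $|\hat{r}_h^t(s,a) - r_h(s,a)| \leq b_h^t(s,a)$ for every $(s,a,h,t)$. Unrolling the recursions defining $\overline{V}_1^{t,\pi}$ and $\underline{V}_1^{t,\pi}$ along the deterministic trajectory of any policy $\pi$ then yields, writing $D_t(\pi) := \sum_{h=1}^H b_h^t(s_h^\pi, a_h^\pi)$,
\begin{align*}
V_1^\pi(s_1) &\leq \overline{V}_1^{t,\pi}(s_1) \leq V_1^\pi(s_1) + 2 D_t(\pi), \\
V_1^\pi(s_1) - 2 D_t(\pi) &\leq \underline{V}_1^{t,\pi}(s_1) \leq V_1^\pi(s_1).
\end{align*}
Setting $D_t^\star := \max_{\pi \in \Pi^{t-1}} D_t(\pi)$ for the quantity appearing on the right-hand side of the target inequality, I will then establish separately $\epsilon/2 \leq D_t^\star$, $\overline{\Delta}_h(s,a)/4 \leq D_t^\star$, and $\overline{\Delta}_{\min}/4 \leq D_t^\star$, and take the maximum.

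The $\epsilon/2$ bound is immediate: non-triggering of the first stopping rule at the end of episode~$t$ implies $\max_{\pi \in \Pi^t} 2 D_t(\pi) > \epsilon$, and since $\Pi^t \subseteq \Pi^{t-1}$ this gives $\epsilon/2 < D_t^\star$. For the $\overline{\Delta}_h(s,a)/4$ bound, the survival of $a$ in $\cA_h^t(s)$ means, by the elimination rule, that there exists $\widetilde{\pi} \in \Pi_{s,a,h} \cap \Pi^{t-1}$ with $\overline{V}_1^{t,\widetilde{\pi}}(s_1) \geq \max_{\pi \in \Pi} \underline{V}_1^{t,\pi}(s_1)$. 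Lower-bounding the right-hand side by $\underline{V}_1^{t,\pi^\star}(s_1)$ for an optimal policy $\pi^\star$, which lies in $\Pi^{t-1}$ by Lemma~\ref{lem:opt-actions-not-eliminated}, and plugging in the two confidence-interval inequalities gives
\begin{align*}
V_1^{\widetilde{\pi}}(s_1) + 2 D_t(\widetilde{\pi}) \geq V_1^\star(s_1) - 2 D_t(\pi^\star).
\end{align*}
Since $\widetilde{\pi} \in \Pi_{s,a,h}$, the definition of the deterministic return gap then yields $\overline{\Delta}_h(s,a) \leq V_1^\star(s_1) - V_1^{\widetilde{\pi}}(s_1) \leq 2 D_t(\widetilde{\pi}) + 2 D_t(\pi^\star) \leq 4 D_t^\star$.

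The $\overline{\Delta}_{\min}/4$ bound is the delicate piece and the main obstacle, since it must combine the failure of the \emph{second} stopping rule with the uniqueness of the optimal trajectory. If the optimal trajectory is not unique the claim is vacuous, as $\overline{\Delta}_{\min} = 0$. Otherwise, non-triggering of the second stopping rule produces some $(s',h')$ with $|\cA_{h'}^t(s')| \geq 2$. Uniqueness of the optimal trajectory guarantees that at most one action at any state-stage has zero deterministic return gap (either $s'$ lies off the unique optimal trajectory, so every action in $\cA_{h'}(s')$ has positive gap, or $s' = s_{h'}^\star$ and only $a_{h'}^\star$ has zero gap), so there must exist $a' \in \cA_{h'}^t(s')$ with $\overline{\Delta}_{h'}(s',a') \geq \overline{\Delta}_{\min} > 0$. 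Applying the previous argument to this surviving $(s',a',h')$ yields $\overline{\Delta}_{\min}/4 \leq \overline{\Delta}_{h'}(s',a')/4 \leq D_t^\star$. Combining the three inequalities concludes the proof.
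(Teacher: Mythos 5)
Your proof is correct and follows essentially the same route as the paper's: condition on the good event, derive the gap bound from survival of $a$ under the elimination rule together with the fact that an optimal policy remains active (Lemma~\ref{lem:opt-actions-not-eliminated}), get the $\epsilon/2$ bound from non-triggering of the first stopping rule, and get the $\overline{\Delta}_{\min}/4$ bound from non-triggering of the second stopping rule combined with uniqueness of the optimal trajectory. The only cosmetic difference is that you instantiate a particular maximizer $\widetilde{\pi}$ where the paper manipulates the maxima directly; the argument is otherwise identical.
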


In our analysis, we refer to the set of consecutive time steps $\{t\in\mathbb{N} : k_t=k\}$ as the $k$-th \emph{period}. Using the fact that in period $k+1$ each active triplet has been visited at least $k$ times \hl{(which allows to upper bound each bonus $b_h^t(s_h^\pi,a_h^\pi)$ for $\pi \in \Pi^{t-1}$ by a quantity scaling in $\sqrt{1/k}$)}, one can use Lemma~\ref{lem:diam-vs-gap} to obtain an upper bound $\overline{\kappa}_{s,a,h} \simeq \frac{H^2\log(1/\delta)}{\max\left(\overline{\Delta}_h(s,a),\overline{\Delta}_{\min}, \epsilon \right)^2}$ on the last period in which $(s,a,h)$ is active (Lemma \ref{lem:elim-periods} in Appendix~\ref{app:max-cover}). A crucial step of the proof is then to upper bound the duration of the $k$-th period,  $d_k := \sum_{t=1}^\tau \indi{k_t = k}$.

\begin{lemma}\label{lem:period-duration-main-paper}
 $d_k \leq 2(\log(H)+1)\phi^\star(\underline{c}^{k})$ where $\underline{c}^{k}_h(s,a) = \ind(a \in \cA_h^{\overline{t}_k -1}(s), n_h^{\overline{t}_{k}-1}(s,a) < k)$.
\end{lemma}
The intuition behind this result is as follows. Recall that the goal of the max-coverage sampling rule in period $k$ is to visit at least once each $(s,a,h)$ that is active (i.e., $a \in \cA_h^{\overline{t}_k -1}(s)$) and undersampled (i.e., $n_h^{\overline{t}_{k}-1}(s,a) < k$). By definition, the minimum flow $\phi^\star(\underline{c}^{k})$ is the minimum number of policies that need to be played to achieve this goal. Interestingly, Lemma \ref{lem:period-duration-main-paper} shows that the number of policies played by max-coverage to visit all active undersampled triplets is very close to its theoretical minimum, despite the fact that the algorithm never computes an actual minimum flow. We prove this by interpreting max-coverage sampling as a greedy maximization of some coverage function (related to a minimum flow problem) and leveraging the theory of sub-modular maximization \citep[e.g.,][]{krause2014submodular}. 
%establish this result by first showing that max-coverage greedily maximizes a certain sub-modular function which relates to a minimum flow problem, and then leveraging the theory of sub-modularity \citep[e.g.,][]{krause2014submodular} to control the optimization error.

Thanks to Lemma \ref{lem:period-duration-main-paper}, we have that \hl{\[\tau \leq 2(\log(H)+1)\sum_{k=1}^{k_\tau} \varphi^\star(\underline{c}^k),\]} where $k_\tau$ is the index of the period at which the algorithm stops. To bound this quantity we carefully apply the theory of minimum flows and their dual problem of \emph{maximum cuts}. Let us define a cut $\cC$ as any subset of states containing the initial state and let $\cE(\cC)$ be the set of arcs that connect states in $\cC$ with states not in $\cC$. The well-known min-flow-max-cut theorem (Theorem \ref{th:min-flow-max-cut-app} stated in Appendix \ref{app:flows}) states that, for any lower bound function $\underline{c}$, $\varphi^\star(\underline{c}) = \max_{\cC\in\mathfrak{C}} \sum_{(s,a,h) }\underline{c}_h(s,a)$, where $\mathfrak{C}$ denotes the set of all valid cuts.  Then,
\begin{align*}
  k\varphi^\star(\underline{c}^k) \leq \max_{\cC\in\mathfrak{C}}\sum_{(s,a,h)\in\cE(\cC)}  k \indi{a\in\cA_h^{\bar{t}_k-1}(s)}  \leq \max_{\cC\in\mathfrak{C}}\sum_{(s,a,h)\in\cE(\cC)}  (\overline{\kappa}_{s,a,h}+1) = \varphi^\star(g),
\end{align*}
where $g : \cE \rightarrow [0,\infty)$ is defined by $g_h(s,a) = \overline{\kappa}_{s,a,h}+1$. \hl{It follows that 
\begin{eqnarray*}
 \tau &\leq& 2(\log(H)+1)\sum_{k=1}^{k_\tau} \frac{1}{k}\varphi^\star(g) \\
 &\leq& 2(\log(H)+1)\left(\log(k_{\tau}) + 1\right)\varphi^\star(g) \\
 & \leq & 2(\log(H)+1)\left(\max_{(s,a,h)}\log(\overline{\kappa}_{s,a,h}) + 1\right) \varphi^\star(g).
\end{eqnarray*}
Using the expression of $\overline{\kappa}_{s,a,h}$ given in Lemma \ref{lem:elim-periods} of Appendix~\ref{app:max-cover} concludes the proof of the stated $\widetilde{O}(\varphi^\star(g))$ instance-dependent bound. For the worse-case bound, we refer the reader to Theorem~\ref{th:sample-comp-worst-case-adaptive}.}

\qed

%\input{sections/algo_phase}
%\input{sections/algo_greedy}
%!TEX root = ../main.tex 

\section{Experiments}\label{sec:expes}

We compare numerically EPRL to the minimax optimal BPI-UCRL algorithm \cite{Kaufmann21RFE}, adapted to the deterministic setting, on synthetic MDP instances. For EPRL, we experiment with two sampling rules: max-coverage (maxCov) and max-diameter (maxD, see Line 18 of Algorithm~\ref{alg:elimination-alg}). We defer to Appendix~\ref{sec:app_computational} some implementation details, including a precise description of the BPI-UCRL baseline.

We generate random ``easy'' deteministic MDP instances with Gaussian rewards of variance 1 using the following protocol. For fixed $S,A,H$ the mean rewards $r_h(s,a)$ are drawn i.i.d. from a uniform distribution over $[0,1]$ and for each state-action pair, the next state is chosen uniformly at random in $\{1,\dots,S\}$. Finally, we only keep MDP instances whose minimum value gap, denoted by $\Delta_{\min}$, is larger than $0.1$. Our first observation is that depending on the MDP, the identity of the best performing algorithm can be different. In Figure~\ref{fig:cases} we show the distribution of the sample complexity (estimated over 10 Monte Carlo simulations) for three different MDPs obtained from our sampling procedure with $S,A=2$ and $H=3$ and for algorithms that are run with parameters $\delta = 0.1$ and $\epsilon = 1.5\Delta_{\min}$.
 
\begin{figure}[h]
    \centering
    \begin{subfigure}[b]{0.32\textwidth}
    \includegraphics[width = \textwidth]{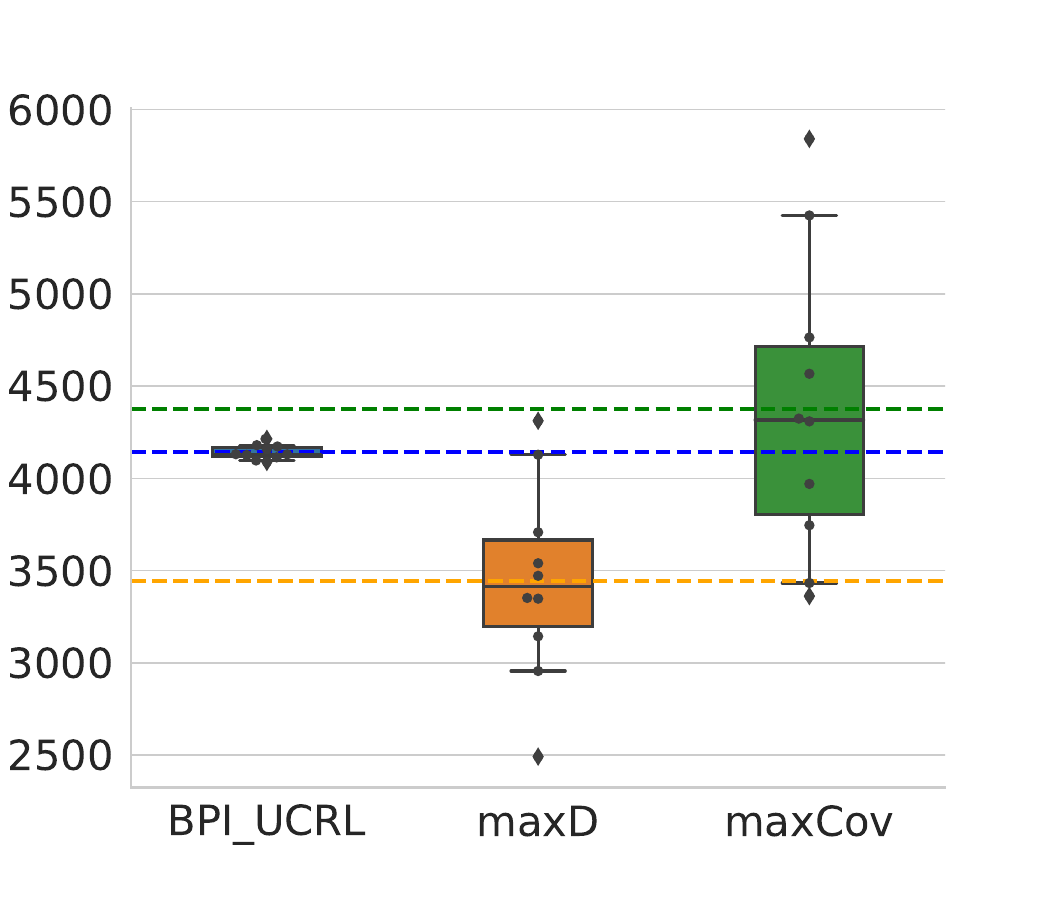}
    \label{fig:maxD}
    \end{subfigure}
    \hfill
    \begin{subfigure}[b]{0.32\textwidth}
    \includegraphics[width = \textwidth]{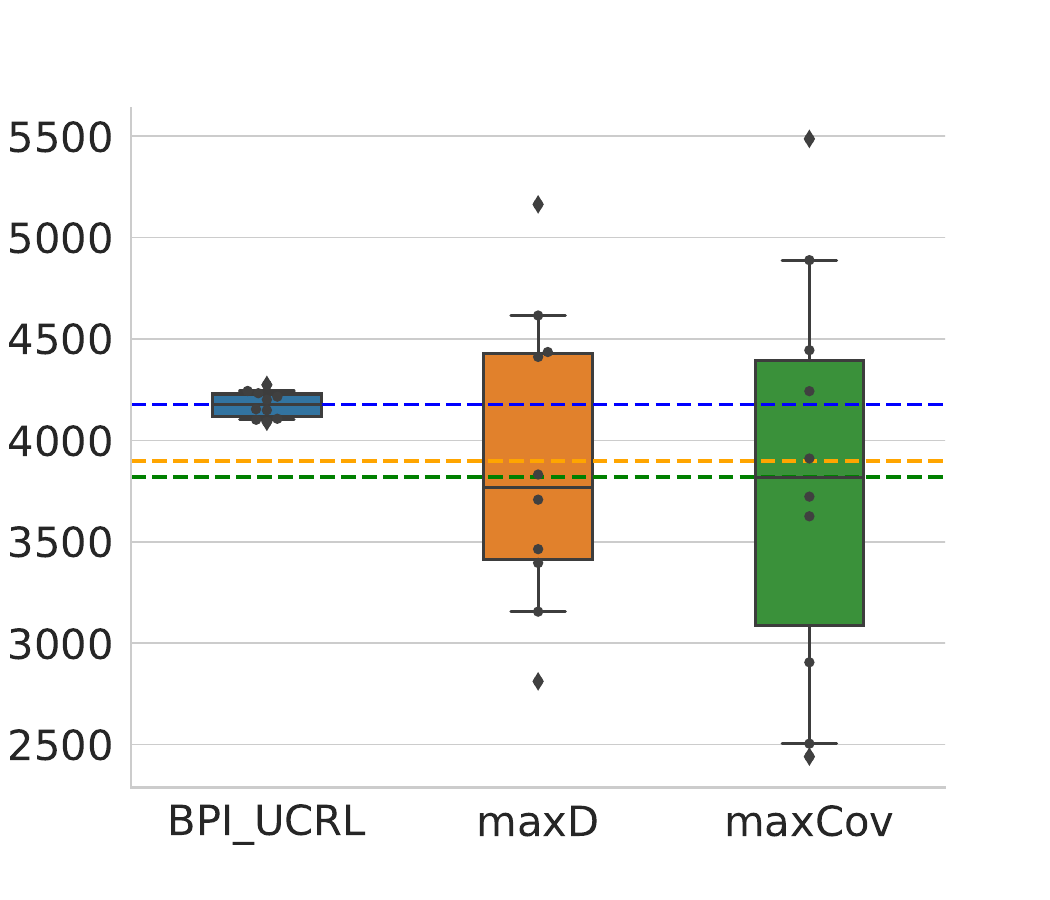}
   \label{fig:maxCov}
    \end{subfigure}
    \hfill
    \begin{subfigure}[b]{0.32\textwidth}
    \includegraphics[width = \textwidth]{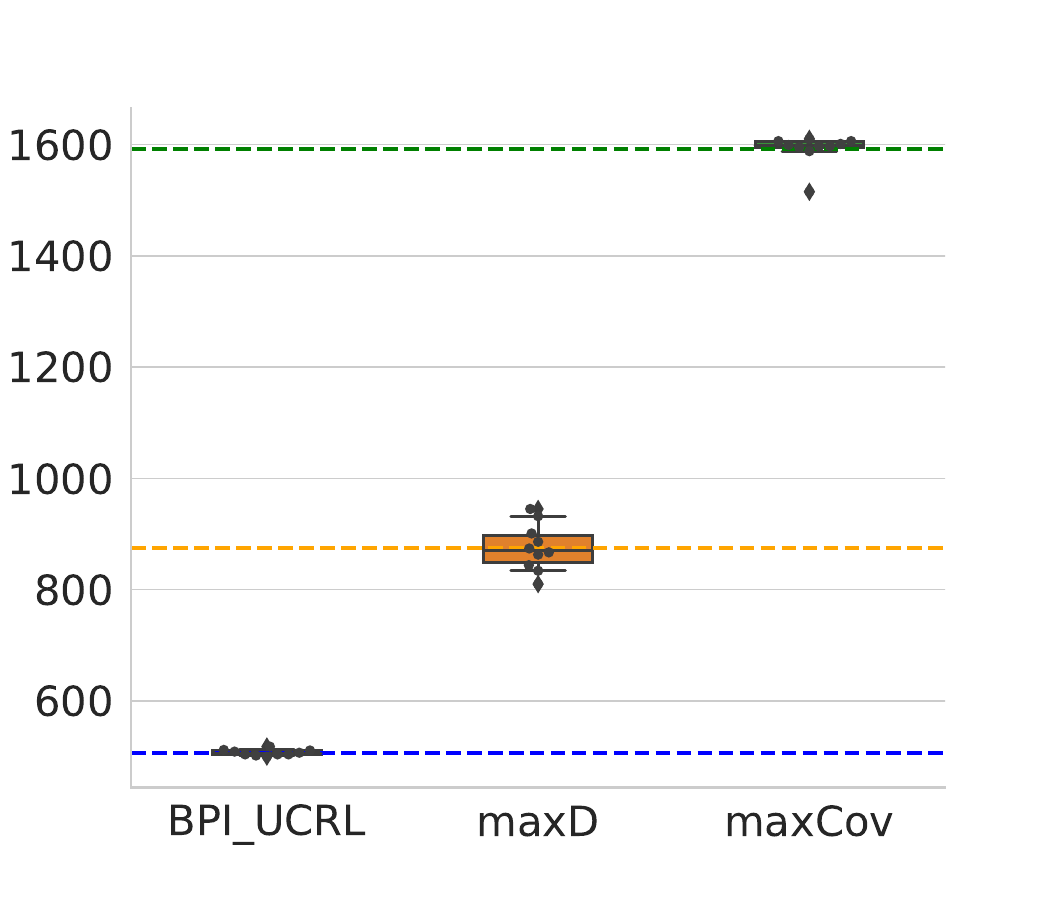}
   \label{fig:BPI_UCRL}
    \end{subfigure}
    \caption{Distribution of stopping times on particular MDPs over $10$ runs, with $\epsilon = 1.5 \Delta_{\min}$. The horizontal lines represent the average sample complexity.}
    \label{fig:cases}
\end{figure}

To get a better understanding of this phenomenon, we then generated $10$ MDP instances of size $(S,A,H) = (2,2,3)$ and for each MDP we ran EPRL and BPI-UCRL for 25 values of $\epsilon$ in a grid $[0.05 \Delta_{\min}, 10\Delta_{\min}]$ and $\delta = 0.1$.  We ran $10$ Monte-Carlo simulations for each value of the triplet $(\textrm{MDP}, \textrm{algorithm } \mathsf{A}, \epsilon)$, in order to estimate the expected sample complexity $\bE_{\mathsf{A}}[\tau_{\delta}]$. In Figure~\ref{fig:scatterplot} we plot the relative performance (ratio of sample complexities) 
of different algorithms as a function of the value of $\varepsilon/\Delta_{\min}$: each point corresponds to a different MDP and a different value of $\varepsilon$. We observe that for large values of $\epsilon/\Delta_{\min}$, BPI-UCRL has a smaller sample complexity than both versions of EPRL, with a ratio never exceeding $2$ (resp. $3$) for max-diameter (resp. max-coverage). However, in the more interesting small $\epsilon/\Delta_{\min}$ regime EPRL is better by several orders of magnitude. This is expected since, for small $\epsilon$, EPRL is able, through its elimination rule, to identify the optimal policy long before the diameter goes below $\epsilon$. We observe that the threshold of $\epsilon/\Delta_{\min}$ at which EPRL algorithms become a better choice than BPI-UCRL seems to vary with the MDP.

\begin{figure}[h]
     \centering
    \begin{subfigure}[b]{0.40\textwidth}
    \includegraphics[width = \textwidth]{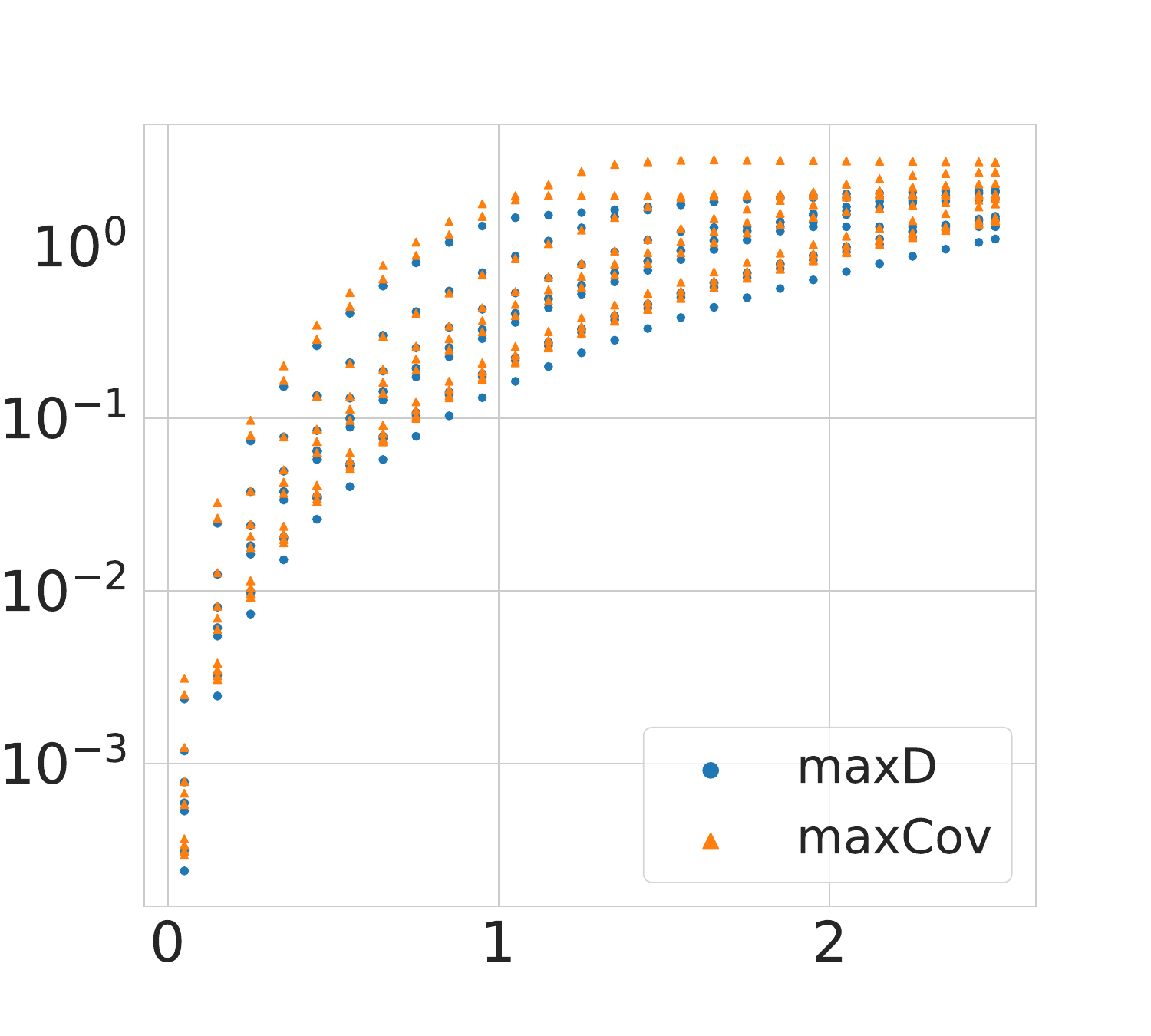}
    \end{subfigure}
    \begin{subfigure}[b]{0.40\textwidth}
    \includegraphics[width =\textwidth]{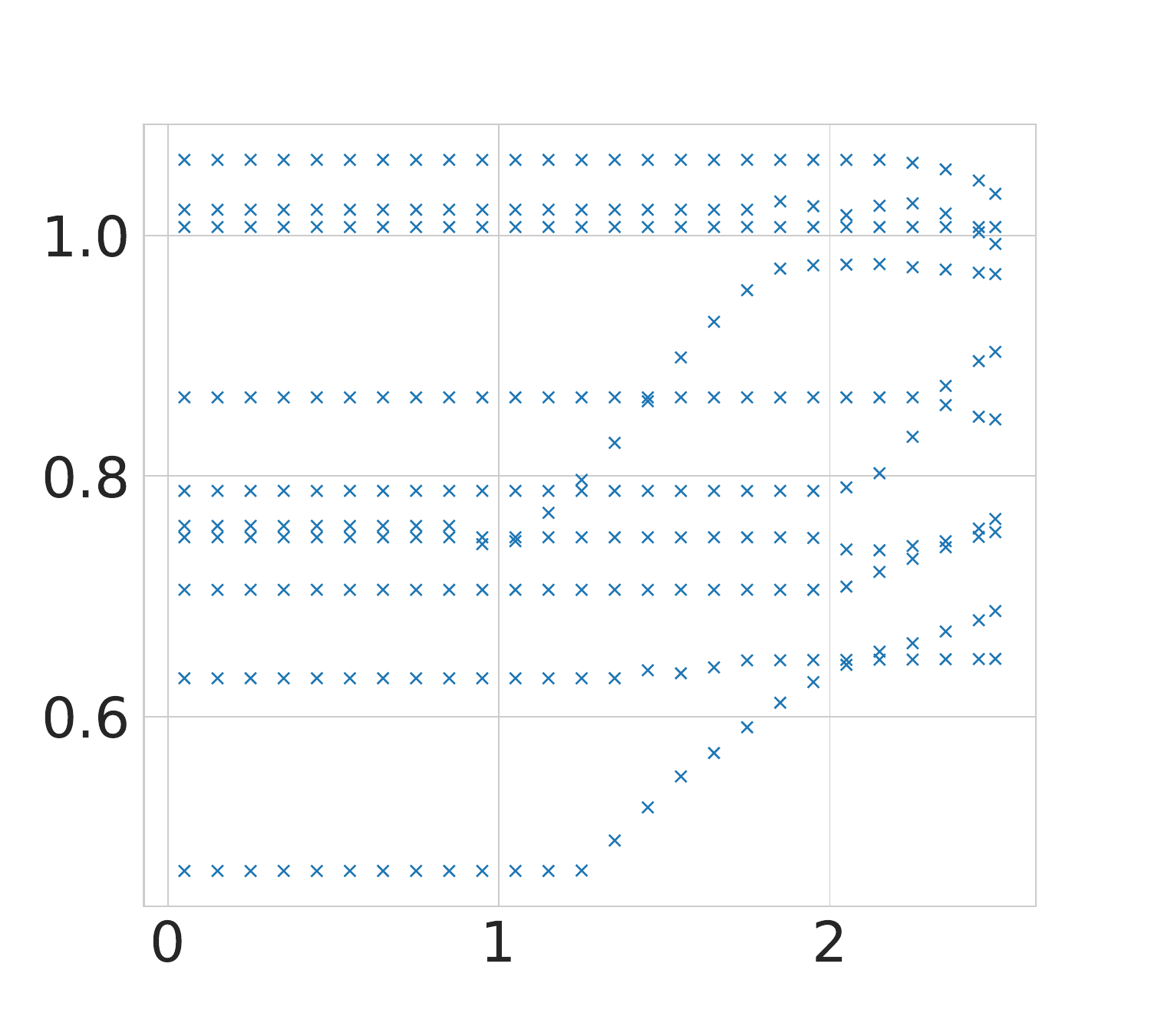}
    %\hfill (prev. size = 45)
    \end{subfigure}
    \caption{Ratios in log-scale $\bE_{\mathrm{A}}[\tau_\delta]/\bE_{\mathrm{BPI-UCRL}}[\tau_\delta]$ for $\mathrm{A}$ in $\{ \textrm{maxD, maxCov}\}$ (left) and $\bE_{\mathrm{maxD}}[\tau_\delta]/\bE_{\mathrm{maxCov}}[\tau_\delta]$ (right) as a function of $\epsilon/\Delta_{\min}$.}
   \label{fig:scatterplot}
\end{figure}

Our experiments also reveal an intriguing phenomenon: the use of max-diameter sampling within EPRL often outperforms max-coverage sampling, even if there exists MDPs (2 out of 10 in our experiments) in which max-coverage is indeed empirically better. We leave as future work to obtain a better characterizations of MDPs for which EPRL with max-coverage sampling performs best.

\section{Discussion}

We derived an instance-dependent and a worst-case lower bound characterizing the complexity of PAC RL in deterministic MDPs, and proposed a general elimination algorithm together with a novel maximum-coverage sampling rule that nearly matches them \blue{(up to $O(H^2)$ and logarithmic factors)}. We conclude with some discussion about our results and future directions.

\vspace{-0.2cm}

\paragraph{Max-coverage vs max-diameter}

While minimax optimality can be easily achieved with very simple strategies (like max-diameter or BPI-UCRL), instance optimality requires careful algorithmic design. Our coverage-based strategy is built around the idea of ``uniformly'' exploring the whole MDP, while using an elimination strategy to ensure that no $(s,a,h)$ is sampled much more than what the lower bound prescribes. Notably, this sampling rule is very simple, while exiting PAC RL algorithms with instance-dependent complexity are all quite involved \citep{wagenmaker21IDPAC,al2021navigating}. Moreover, max-coverage sampling naturally extends to stochastic MDPs, e.g., by doing optimistic planning on an MDP with a reward function equal to $1$ for under-sampled triplets and $0$ for the others. Finally, in our experiments on random instances, we observed that max-diameter is often comparable or better than max-coverage. We leave as future work to investigate whether the latter is also provably near instance-optimal.

\vspace{-0.2cm}

\paragraph{Computational aspects}

Our sampling rule requires solving one dynamic program per episode, which takes $O(N)$ time. The bottleneck is the elimination rule, which requires $O(N^2)$ per-episode time complexity to solve one dynamic program for each active triplet. However, we note that eliminations could be checked periodically (e.g., even at exponentially-separated times) without significantly compromising the sample complexity guarantees.

\vspace{-0.2cm}

\paragraph{Improving our results}

Our instance-dependent upper bound for max-coverage sampling is sub-optimal by a factor $H^2$ and a multiplicative $O(\log\log(1/\delta))$ term. In Appendix \ref{app:tree}, we show that, for the specific sub-class of tree-based MDPs \citep{dann21ReturnGap}, we can obtain improved results in all these aspects. In particular, we show that (1) the lower bound scales with an extra factor $H$ and it is fully explicit, (2) the multiplicative log terms in the sample complexity of coverage-based sampling can be removed, and (3) maximum-diameter sampling also achieves near instance-optimal guarantees.

\vspace{-0.2cm}

\paragraph{Beyond Gaussian distributions}

As it is common, e.g., in the bandit literature, the gaps in our lower and upper bounds are optimal only for Gaussian reward distributions. Extending Theorem \ref{th:instance-lb} to general distributions is actually simple (see, e.g., \cite{kaufmann2016complexity} and Lemma \ref{lem:CD_mdps} in Appendix \ref{app:lower-bounds}). However, this would yield gaps written in terms of KL divergences between arm distributions rather than differences of mean rewards as in the Gaussian case. How to match such gaps is an interesting open question.

\vspace{-0.2cm}

\paragraph{Instance optimality in stochastic MDPs}
The main open question is how to achieve (near) instance-optimality for PAC RL in stochastic MDPs. We believe that many of the results presented in this paper could help in this direction. First, our instance-dependent lower bound could be extended to the stochastic case by modifying return gaps to include visitation probabilities and minimum flows to account for stochastic navigation constraints. Second, on the algorithmic side, our maximum-coverage sampling rule easily extends to stochastic MDPs as mentioned above, while our elimination rule could also be adapted by computing the optimistic return of policies visiting a certain $(s,a,h)$ with a least some probability, which corresponds to a constrained MDP problem \citep[e.g.,][]{efroni2020exploration}. Studying how these components behave in stochastic MDPs is an exciting direction for future work.

\begin{ack}Aymen Al-Marjani ackowledges the support of the Chaire SeqALO (ANR-20-CHIA-0020). 
Emilie Kaufmann acknoweldges the support of the French National Research Agency under the BOLD project (ANR-19-CE23-0026-04).
\end{ack}

\bibliographystyle{unsrt} 
\bibliography{biblio_bpi}

%%%%%%%%%%%%%%%%%%%%%%%%%%%%%%%%%%%%%%%%%%%%%%%%%%%%%%%%%%%%%%%%%%%%%%%%%%%%%%%
%%%%%%%%%%%%%%%%%%%%%%%%%%%%%%%%%%%%%%%%%%%%%%%%%%%%%%%%%%%%%%%%%%%%%%%%%%%%%%%
\newpage

\section*{Checklist}

%%% BEGIN INSTRUCTIONS %%%
% The checklist follows the references.  Please
% read the checklist guidelines carefully for information on how to answer these
% questions.  For each question, change the default \answerTODO{} to \answerYes{},
% \answerNo{}, or \answerNA{}.  You are strongly encouraged to include a {\bf
% 	justification to your answer}, either by referencing the appropriate section of
% your paper or providing a brief inline description.  For example:
% \begin{itemize}
% 	\item Did you include the license to the code and datasets? \answerYes{See Section....}
% 	\item Did you include the license to the code and datasets? \answerNo{The code and the data are proprietary.}
% 	\item Did you include the license to the code and datasets? \answerNA{}
% \end{itemize}
% Please do not modify the questions and only use the provided macros for your
% answers.  Note that the Checklist section does not count towards the page
% limit.  In your paper, please delete this instructions block and only keep the
% Checklist section heading above along with the questions/answers below.
%%% END INSTRUCTIONS %%%

\begin{enumerate}

	\item For all authors...
	\begin{enumerate}
		\item Do the main claims made in the abstract and introduction accurately reflect the paper's contributions and scope?
		\answerYes{}
		\item Did you describe the limitations of your work?
		\answerYes{See the discussion section.}
		\item Did you discuss any potential negative societal impacts of your work?
		\answerNA{Our results are theoretical and the paper is not oriented towards a speciﬁc application, so a wider broader impact discussion is not applicable.}
		\item Have you read the ethics review guidelines and ensured that your paper conforms to them?
		\answerYes{}
	\end{enumerate}

	\item If you are including theoretical results...
	\begin{enumerate}
		\item Did you state the full set of assumptions of all theoretical results?
		\answerYes{These are stated in the preamble of every Theorem.}
		\item Did you include complete proofs of all theoretical results?
		\answerYes{See the appendix.}
	\end{enumerate}

	\item If you ran experiments...
	\begin{enumerate}
		\item Did you include the code, data, and instructions needed to reproduce the main experimental results (either in the supplemental material or as a URL)?
		\answerYes{See the appendix.}
		\item Did you specify all the training details (e.g., data splits, hyperparameters, how they were chosen)?
		\answerYes{See the appendix.}
		\item Did you report error bars (e.g., with respect to the random seed after running experiments multiple times)?
		\answerYes{}
		\item Did you include the total amount of compute and the type of resources used (e.g., type of GPUs, internal cluster, or cloud provider)?
		\answerYes{See the appendix.}
	\end{enumerate}

	\item If you are using existing assets (e.g., code, data, models) or curating/releasing new assets...
	\begin{enumerate}
		\item If your work uses existing assets, did you cite the creators?
		\answerYes{}
		\item Did you mention the license of the assets?
		\answerYes{}
		\item Did you include any new assets either in the supplemental material or as a URL?
		\answerYes{}
		\item Did you discuss whether and how consent was obtained from people whose data you're using/curating?
		\answerNA{}
		\item Did you discuss whether the data you are using/curating contains personally identifiable information or offensive content?
		\answerNA{}
	\end{enumerate}

	\item If you used crowdsourcing or conducted research with human subjects...
	\begin{enumerate}
		\item Did you include the full text of instructions given to participants and screenshots, if applicable?
		\answerNA{}
		\item Did you describe any potential participant risks, with links to Institutional Review Board (IRB) approvals, if applicable?
		\answerNA{}
		\item Did you include the estimated hourly wage paid to participants and the total amount spent on participant compensation?
		\answerNA{}
	\end{enumerate}

\end{enumerate}
\clearpage

\appendix

% Appendix table of contents
\part{Appendix}

% \input{app_aftermainpaperdeadline}

% \clearpage
\parttoc
\newpage

%!TEX root = ../main.tex 

\section{Additional Related Work} \label{app:related}

In this section, we mention other PAC learning problems that are related to our setting, and discuss problem-dependent guarantees for regret minimization.

\paragraph{Other PAC frameworks} The PAC RL problem, which concerns the \emph{identification} of an $\epsilon$-optimal policy, should not be confused with the PAC-MDP setting introduced by \cite{Kakade03PhD}. The latter is closer to regret minimization, as the agent interacts with the MDP online and seeks to maximize the number of learning steps where an $\epsilon$-optimal policy is played. It has been studied mostly in the discounted infinite-horizon setting. We refer the reader to \cite{strehl2009reinforcement} for a review of this setting. Other PAC pure exploration problems have been studied in the literature. In Monte-Carlo planning, the goal is to find an $\epsilon$-optimal action in a given state with high probability, rather than a complete policy. Most works have obtained worst-case upper bounds on the sample complexity of planning, scaling in terms of $\epsilon$ and some appropriate notion of near-optimality dimension \citep{Kearns02SS,bubeck2010olop,Grill16TrailBlazer}. Another line of work has derived problem-dependent guarantees in MDPs with a finite branching factor \citep{Feldman14BRUE,jonsson2020planning}, exhibiting a scaling with the value gap at step $h=1$. Finally, in reward-free exploration \citep{Jin20RewardFree,Kaufmann21RFE,Du21OptimalRFE,Menard21RFE} or task agnostic exploration \citep{Zhang20TAE} the goal is to explore the MDP in order to be able to find a near-optimal policy w.r.t. a reward function which is revealed only after the exploration phase. In the minimax sense, this setting is harder than our PAC RL problem: with an arbitray set of possible reward functions, existing algorithms exhibit an extra multiplicative dependence on the number of states in their sample complexity.

\paragraph{Instance-dependent bounds for regret minimization} The majority of instance-dependent results in the RL literature concerns regret minimization. The earliest works in this context focused on ergodic average-reward MDPs \citep{burnetas1997optimal,tewari2007opotimistic,ok2018exploration}.
These works presented asymptotic lower bounds (expressed as linear programs) on the expected regret of any ``good'' strategy, together with algorithms matching them as the number of learning steps tend to infinity. Average-reward communicating MDPs were studied by \cite{auer2008near,Tranos21}, with the latter proposing an asymptotic regret lower bound for deterministic MDPs. In the episodic setting, \cite{simchowitz2019non,xu2021fine} derived finite-time regret bounds which are roughly $O(\sum_{s,a,h} \frac{\log T}{\Delta_h(s,a)})$, where $T$ is the number of episodes and $\Delta_h(s,a)$ are the value gaps defined above. These results were later improved by \cite{dann21ReturnGap}, who derived regret bounds of the same shape but scaling with tighter ``return gaps''. Moreover, \cite{dann21ReturnGap} and \cite{tirinzoni2021fully} concurrently derived similar asymptotic instance-dependent lower bounds for the episodic setting. However, similarly to the one of \cite{al2021adaptive}, these lower bounds are written as non-convex optimization problems and it is an open question whether and how they can be matched.

\paragraph{PAC identification in structured bandits} 
Learning in a finite-horizon MDP can be seen as a \emph{structured bandit} problem with one arm for each deterministic policy whose return can be described by only $N$ parameters (the mean rewards), see, e.g., Appendix B.6 of \cite{tirinzoni2021fully}. As in our setting, instance-dependent lower bounds for structured bandits are often written as optimization problems with no closed-form solution. For this reason, the majority of (near) instance-optimal algorithms for structured bandits either repeatedly solve such an optimization problem during the learning process \citep{garivier2016optimal} or solve it incrementally using, e.g., no-regret learners \citep{degenne2019non}, primal-dual methods \citep{tirinzoni2020asymptotically}, or Frank-Wolfe  \citep{wang2021fast}. Notably, our coverage-based sampling rules achieve near instance optimality without doing any of this. This is advantageous for at least two reasons: (1) the optimization problem in \eqref{eq:global-lb} depends on unknown quantities (such as the gaps) whose estimation typically requires performing additional exploration than what prescribed by the lower bound, hence negatively affecting the sample complexity. (2) Repeatedly solving the minimum flow problem \eqref{eq:global-lb}, despite being a linear program, can be very computationally demanding, while findind an integer minimum flow or computing its greedy approximation is much more efficient. We wonder whether, taking inspiration from our work, near optimal strategies for general structured bandits could be designed without ever solving the optimization problems from lower bounds.

%!TEX root = ../main.tex 

\section{Minimum Flows and Maximum Cuts} \label{app:flows}

First note that a deterministic MDP (without reward) $\cM := (\cS, \cA, \{f_h\}_{h\in[H]}, s_1, H)$ can be represented as a \emph{layered directed acyclic graph} (DAG) $\cG(\cM) := (\cN, \cE, s_1, s_{H+1})$ with nodes $\cN := \{(s,h) : h\in[H],s\in\cS_h\}$, arcs $\cE := \{(s,a,h) : h\in[H],s\in\cS_h,a\in\cA_h(s)\}$, a unique \emph{source} node $(s_1,1)$, and a fictitious \emph{sink} node $(s_{H+1},H+1)$ which is the endpoint of every arc $(s,a,H)\in\cE$. In particular, for node $(s,h)\in\cN$, there is one arc for each $a\in\cA_h(s)$ which connects the node to $(f_h(s,a),h+1)$. The graph is \emph{layered}, in the sense that the set of nodes can be partitioned into $H$ subsets $(\{(s,h) : s\in\cS_h\})_{h\in[H]}$, one for each stage, and transitions are possible only between adjacent stages. Let $\cI_h(s) := \{(s',a')\in\cS\times\cA \mid s'\in\cS_{h-1},a'\in\cA_{h-1}(s), f_{h-1}(s',a')=s\}$ be the set of incoming arcs into $(s,h)$.

\subsection{The minimum flow problem}

In Section \ref{sec:prelim}, we introduced a specific instance of the minimum flow problem for layered DAGs with unbounded capacities. Here we introduce the general problem as described, e.g., by \cite{ciurea2004sequential}. While we still use notation for layered DAGs, we note that all results in this section hold for general directed graphs.

Recall that a \emph{flow} is a non-negative function $\eta : \cE \rightarrow [0,\infty)$ satisfying the navigation constraints whose value is given by $\varphi(\eta) := \sum_{a\in\cA_1(s_1)} \eta_1(s_1,a)$.
Let $\underline{c},\overline{c} : \cE \rightarrow [0,\infty)$ be two non-negative functions. We say that a flow $\eta$ is \emph{feasible} if
\begin{align*}
\underline{c}_h(s,a) \leq \eta_h(s,a) \leq \overline{c}_h(s,a) \quad \forall (s,a,h) \in \cE.
\end{align*}
That is, $\underline{c}_h(s,a)$ acts as a lower bound on the flow we require through arc $(s,a,h)$, while $\overline{c}_h(s,a)$ is the capacity of that arc. Finding a feasible flow of minimum value can be clearly solved as a linear program,
\begin{equation*}
	\begin{aligned}
    &\underset{\eta\in\mathbb{R}^{SAH}}{\mathrm{minimize}} \sum_{a\in\cA_1(s_1)} \eta_1(s_1,a),
    \\ & \text{subject to}
     \\
     & \quad \sum_{(s',a') \in \cI_h(s)} \eta_{h-1}(s',a') = \sum_{a\in\cA_h(s)} \eta_{h}(s,a) \quad \forall (s,h) \in \cN \setminus \{(s_1,1), (s_{H+1},H+1)\},
    \\ 
    & \quad \underline{c}_h(s,a) \leq \eta_h(s,a) \leq \overline{c}_h(s,a) \quad \forall (s,a,h) \in \cE.
	\end{aligned}
	\end{equation*}
We let $\varphi^\star(\underline{c},\overline{c})$ be its optimal value.

\paragraph{Residual graph}

The \emph{residual} of an arc $(s,a,h)\in\cE$ is defined as
\begin{align*}
\rho_h(s,a) := \eta_h(s,a) - \underline{c}_h(s,a)
\end{align*}
For each $(s,a,h)\in\cE$, we also define the residual of a fictitious backward arc (which does not exist in our layered DAG) as
\begin{align*}
\rho_h^{\mathrm{bw}}(s,a) := \overline{c}_h(s,a) - \eta_h(s,a).
\end{align*}
Then, we define the \emph{residual graph} $\cG_\eta(\cM)$ as a graph with the same nodes as $\cG(\cM)$ and one arc for each forward or backward arc of $\cG(\cM)$ with strictly positive residual. Note that, in our layered DAG setting, even if the original graph $\cG_\eta(\cM)$ has only forward arcs (transitions are only possible from two successive stages), its residual graph $\cG_\eta(\cM)$ might contain backward arcs if the fictitious backward arcs introduced before have positive residual. Intuitively, a forward arc $(s,a,h)$ in $\cG_\eta(\cM)$ means that we can decrease the flow in $(s,a,h)$ by at most $\rho_h(s,a)$ units, while its corresponding backward arc means that we can increase the flow by at most $\rho_h^{\mathrm{bw}}(s,a)$ units. Finally, we call any path from the source node $(s_1,1)$ to the sink node $(s_{H+1},H+1)$ in $\cG_\eta(\cM)$ a \emph{decreasing path}. This is a path where we can reduce the amount of flow while still satisfying all constraints.

\paragraph{Maximum cuts}

A \emph{source-sink cut} is a partition of the set of nodes $\cN$ into two subsets $\cC \subseteq \cN$ and $\cN \setminus \cC$ such that $(s_1,1) \in \cC$ and $(s_{H+1},H+1) \in \cN \setminus \cC$. As such, we will identify a cut by a single subset of states $\cC \subseteq \cN \setminus \{(s_{H+1},H+1)\}$ such that $(s_1,1) \in \cC$. The set of \emph{forward arcs} of a cut $\cC$ is
\begin{align*}
\cE(\cC) := \{(s,a,h)\in\cE : (s,h) \in \cC, (f_h(s,a),h+1) \in \cN \setminus \cC\}.
\end{align*}
The value of a cut $\cC$, as defined by \cite{ciurea2004sequential}, is
\begin{align*}
\psi(\cC, \underline{c}, \overline{c}) := \sum_{(s,a,h) \in \cE(\cC)} \underline{c}_h(s,a) - \sum_{(s,a,h) \in \cE_{\mathrm{bw}}(\cC)} \overline{c}_h(s,a),
\end{align*}
where we also define $\cE_{\mathrm{bw}}(\cC) := \{(s,a,h) \in\cE : (s,h) \in \cN \setminus \cC, (f_h(s,a),h+1) \in \cC \}$ as the set of \emph{backward arcs} in the cut. We now present an important result (Lemma \ref{lem:flow-vs-cut}) which shows that the value of any feasible flow is at least the value of any cut. Its proof is based on the following lemma.

\begin{lemma}\label{lem:flow-in-out-cut}
Let $\eta$ be any flow (not necessarily feasible) and $\cC$ be any source-sink cut. Then,
\begin{align*}
\varphi(\eta) = \sum_{(s,a,h)\in\cE(\cC)} \eta_h(s,a) - \sum_{(s,a,h)\in\cE_{\mathrm{bw}}(\cC)} \eta_h(s,a).
\end{align*}
\end{lemma}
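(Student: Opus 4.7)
The plan is to reduce the statement to a standard ``net flow across a cut'' identity, obtained by summing the conservation constraint \eqref{eq:flow-definition} over all nodes of $\cC$ and observing that internal arcs cancel in pairs. I would not use any feasibility of $\eta$; the argument relies only on the balance identity, which is exactly why the lemma holds for arbitrary flows.

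First I would rewrite $\varphi(\eta)$ using the conservation equation at every node $(s,h)\in\cC$. For the source $(s_1,1)$ there is no incoming arc, so $\sum_{a\in\cA_1(s_1)}\eta_1(s_1,a)=\varphi(\eta)$ by definition. For every other node $(s,h)\in\cC\setminus\{(s_1,1)\}$, \eqref{eq:flow-definition} gives
\begin{align*}
\sum_{a\in\cA_h(s)}\eta_h(s,a)\;-\;\sum_{(s',a')\in\cI_h(s)}\eta_{h-1}(s',a')\;=\;0.
\end{align*}
Summing the left-hand side over all $(s,h)\in\cC$ therefore equals $\varphi(\eta)$.

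Next I would expand that sum and track each arc $(s,a,h)\in\cE$. An arc contributes $+\eta_h(s,a)$ to the sum whenever its tail $(s,h)$ lies in $\cC$, and contributes $-\eta_h(s,a)$ whenever its head $(f_h(s,a),h+1)$ lies in $\cC$. There are four cases for the endpoints:
\begin{align*}
\text{(i) }&\text{tail in }\cC,\ \text{head in }\cC:\text{ contributes }+1-1=0;\\
\text{(ii) }&\text{tail in }\cC,\ \text{head outside }\cC:\text{ this is a forward cut arc, contributes }+\eta_h(s,a);\\
\text{(iii) }&\text{tail outside }\cC,\ \text{head in }\cC:\text{ this is a backward cut arc, contributes }-\eta_h(s,a);\\
\text{(iv) }&\text{tail outside, head outside }\cC:\text{ contributes }0.
\end{align*}
(For arcs with head at the sink $(s_{H+1},H+1)$, the head is by definition outside $\cC$, which is consistent with case (ii) whenever the tail is in $\cC$.) Collecting the nonzero contributions gives
\begin{align*}
\varphi(\eta)\;=\;\sum_{(s,a,h)\in\cE(\cC)}\eta_h(s,a)\;-\;\sum_{(s,a,h)\in\cE_{\mathrm{bw}}(\cC)}\eta_h(s,a),
\end{align*}
which is the desired identity.

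The only real bookkeeping obstacle is handling the source and sink correctly. The source has no incoming arcs, so its balance reads $\sum_a \eta_1(s_1,a)=\varphi(\eta)$ rather than $0$, and I must be careful to add this as a single boundary contribution. The sink $(s_{H+1},H+1)$ is by convention outside $\cC$, so arcs ending there behave uniformly as forward or internal-to-outside arcs in the case analysis above. With those conventions fixed, the proof is a one-line rearrangement after the case enumeration.
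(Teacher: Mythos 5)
Your proposal is correct and follows essentially the same route as the paper's proof: sum the conservation constraint over the nodes of $\cC$ (with the source contributing $\varphi(\eta)$), then observe that arcs internal to $\cC$ cancel in pairs, leaving exactly the forward minus backward cut-arc contributions. The explicit four-case bookkeeping and the source/sink conventions you spell out are implicit in the paper's step (c) but add nothing beyond it.
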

\begin{proof}
We have
\begin{align*}
\varphi(\eta) 
&\stackrel{(a)}{=} \sum_{a\in\cA_1(s_1)} \eta_1(s_1,a)
\\ &\stackrel{(b)}{=} \sum_{(s,h)\in\cC} \left( \sum_{a\in\cA_h(s)} \eta_h(s,a) - \sum_{(s',a')\in\cI_h(s)} \eta_{h-1}(s',a') \right)
\\ &\stackrel{(c)}{=} \sum_{(s,a,h)\in\cE(\cC)} \eta_h(s,a) - \sum_{(s,a,h)\in\cE_{\mathrm{bw}}(\cC)} \eta_h(s,a).
\end{align*}
where (a) is from the definition of flow value, (b) uses the balance constraints together with $(s_1,1)\in\cC$, and (c) uses that the flow through every arc with both endpoints in $\cC$ cancels since it appears once in the left term and once in the right one.
\end{proof}

\begin{lemma}\label{lem:flow-vs-cut}
Let $\eta$ be any feasible flow and $\cC$ be any source-sink cut. Then,
\begin{align*}
\varphi(\eta) \geq \psi(\cC, \underline{c}, \overline{c}).
\end{align*}
\end{lemma}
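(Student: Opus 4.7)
The plan is to derive this inequality as an immediate consequence of Lemma~\ref{lem:flow-in-out-cut}, which gives a clean expression of $\varphi(\eta)$ in terms of forward and backward flow across any cut, combined with the feasibility bounds $\underline{c}_h(s,a) \leq \eta_h(s,a) \leq \overline{c}_h(s,a)$ on each arc.

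First I would apply Lemma~\ref{lem:flow-in-out-cut} to the feasible flow $\eta$ and the cut $\cC$, which yields
\begin{align*}
\varphi(\eta) = \sum_{(s,a,h)\in\cE(\cC)} \eta_h(s,a) - \sum_{(s,a,h)\in\cE_{\mathrm{bw}}(\cC)} \eta_h(s,a).
\end{align*}
Next, I would lower-bound the first sum and upper-bound the second by invoking feasibility: for the forward arcs of the cut, $\eta_h(s,a) \geq \underline{c}_h(s,a)$, while for the backward arcs, $\eta_h(s,a) \leq \overline{c}_h(s,a)$. Substituting both estimates into the displayed equality gives
\begin{align*}
\varphi(\eta) \geq \sum_{(s,a,h)\in\cE(\cC)} \underline{c}_h(s,a) - \sum_{(s,a,h)\in\cE_{\mathrm{bw}}(\cC)} \overline{c}_h(s,a) = \psi(\cC, \underline{c}, \overline{c}),
\end{align*}
which is exactly the claim.

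There is essentially no obstacle here beyond correctly using the direction of the feasibility inequalities (lower bound on forward, upper bound on backward, with a minus sign in front of the backward term flipping the inequality the right way). All the non-trivial structural work has already been absorbed into Lemma~\ref{lem:flow-in-out-cut}, so the proof reduces to a two-line estimation once that lemma is applied. Worth noting in passing: this inequality is the ``weak duality'' half of the min-flow-max-cut theorem (Theorem~\ref{th:min-flow-max-cut}) and, in the specialized layered-DAG setting with unbounded capacities used in Section~\ref{sec:prelim}, $\overline{c}\equiv +\infty$ forces $\cE_{\mathrm{bw}}(\cC)=\emptyset$ for any cut whose value is not $-\infty$, recovering the simpler cut-value formula~\eqref{eq:cut-value} used in the main text.
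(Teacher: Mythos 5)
Your proof is correct and follows exactly the same route as the paper's: apply Lemma~\ref{lem:flow-in-out-cut} to express $\varphi(\eta)$ as the forward minus backward flow across the cut, then use the feasibility bounds $\underline{c}_h(s,a) \leq \eta_h(s,a) \leq \overline{c}_h(s,a)$ on the forward and backward arcs respectively. Nothing is missing.
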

\begin{proof}
We have
\begin{align*}
\varphi(\eta) 
&\stackrel{(a)}{=} \sum_{(s,a,h)\in\cE(\cC)} \eta_h(s,a) - \sum_{(s,a,h)\in\cE_{\mathrm{bw}}(\cC)} \eta_h(s,a)
\\ &\stackrel{(b)}{\geq} \sum_{(s,a,h)\in\cE(\cC)} \underline{c}_h(s,a) - \sum_{(s,a,h)\in\cE_{\mathrm{bw}}(\cC)} \overline{c}_h(s,a)
\\ &\stackrel{(c)}{=} \psi(\cC, \underline{c}, \overline{c}),
\end{align*}
where (a) follows from Lemma \ref{lem:flow-in-out-cut}, (b) uses the feasibility constraints, and (c) uses the definition of value of a cut.
\end{proof}
Thanks to Lemma \ref{lem:flow-vs-cut}, we know that, if we find a flow and a cut whose values coincide, then it must be that we found a minimum flow and its value coincides with the one of a maximum cut. This is what is shown in the next theorem.

\begin{theorem}[Theorem 1.1 of \cite{ciurea2004sequential}]\label{th:min-flow-max-cut-app}
If there exists a feasible flow, the value of the minimum flow with non-negative lower bounds $\underline{c}$ equals the value of the maximum source-sink cut.
\end{theorem}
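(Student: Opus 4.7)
The plan is to establish the nontrivial direction ($\varphi^\star \leq \max_{\cC} \psi(\cC,\underline{c},\overline{c})$) constructively, via residual graphs and decreasing paths, in analogy with the classical Ford--Fulkerson argument for max-flow/min-cut but adapted to the min-flow formulation described earlier in the excerpt. The opposite inequality $\varphi^\star \geq \max_\cC \psi(\cC,\underline{c},\overline{c})$ is already a direct consequence of Lemma~\ref{lem:flow-vs-cut}, so no extra work is needed there.

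The first step is to pick a feasible flow $\eta^\star$ attaining the minimum value $\varphi^\star(\underline{c},\overline{c})$; such a flow exists by the assumption that the feasibility set is nonempty, together with the fact that the objective is a linear function bounded below by zero on a nonempty polyhedron. I would then claim that the residual graph $\cG_{\eta^\star}(\cM)$ contains no path from the source $(s_1,1)$ to the sink $(s_{H+1},H+1)$, i.e., no \emph{decreasing path}. Indeed, given such a path $P$, one could decrease $\eta^\star_h(s,a)$ by a uniform small amount $\varepsilon>0$ on every forward arc of $P$ and increase it by $\varepsilon$ on every backward arc of $P$; with $\varepsilon$ smaller than the minimum residual along $P$, the resulting function would still satisfy the balance constraints \eqref{eq:flow-definition} and the bounds $\underline{c}\le \eta\le\overline{c}$, while reducing the outflow at the source by $\varepsilon$, contradicting minimality.

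The second step is to extract the cut. Let $\cC$ be the set of nodes reachable from $(s_1,1)$ in the residual graph $\cG_{\eta^\star}(\cM)$. By construction $(s_1,1)\in\cC$, and by the previous step $(s_{H+1},H+1)\notin\cC$, so $\cC$ is a valid source--sink cut. For any forward arc $(s,a,h)\in\cE(\cC)$ we have $(s,h)\in\cC$ and $(f_h(s,a),h+1)\notin\cC$, so the residual of the corresponding forward arc must be zero, i.e., $\eta^\star_h(s,a)=\underline{c}_h(s,a)$; otherwise this arc would extend reachability from $\cC$. Symmetrically, for any backward arc $(s,a,h)\in\cE_{\mathrm{bw}}(\cC)$ the fictitious backward arc in the residual graph must have zero residual, giving $\eta^\star_h(s,a)=\overline{c}_h(s,a)$.

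The third and final step is to combine these equalities with Lemma~\ref{lem:flow-in-out-cut} applied to $\eta^\star$ and $\cC$:
\begin{align*}
\varphi^\star(\underline{c},\overline{c}) = \varphi(\eta^\star) = \sum_{(s,a,h)\in\cE(\cC)} \eta^\star_h(s,a) - \sum_{(s,a,h)\in\cE_{\mathrm{bw}}(\cC)} \eta^\star_h(s,a) = \sum_{(s,a,h)\in\cE(\cC)} \underline{c}_h(s,a) - \sum_{(s,a,h)\in\cE_{\mathrm{bw}}(\cC)} \overline{c}_h(s,a) = \psi(\cC,\underline{c},\overline{c}).
\end{align*}
Together with Lemma~\ref{lem:flow-vs-cut} this proves that $\cC$ maximizes $\psi$ and that the two optimal values coincide. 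The main delicate point is the very first step, namely the formal justification that a minimizer exists and that one can safely push $\varepsilon$ units of flow around a decreasing path without violating any constraint; this only requires that $\varepsilon$ be taken smaller than $\min_{(s,a,h)\in P}\min\{\rho_h(s,a),\rho_h^{\mathrm{bw}}(s,a)\}$ restricted to the arcs actually traversed by $P$ in each direction, which is strictly positive by definition of the residual graph.
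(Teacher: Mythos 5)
Your proposal is correct and follows essentially the same route as the paper's proof: the paper also argues via the residual graph (minimality implies no decreasing path, then takes the set of nodes reachable from the source as the cut, shows forward arcs sit at $\underline{c}$ and backward arcs at $\overline{c}$, and concludes with Lemma~\ref{lem:flow-in-out-cut} and Lemma~\ref{lem:flow-vs-cut}). The only cosmetic difference is that the paper organizes the argument as a cycle of equivalences also establishing Theorem~\ref{th:min-flow-no-path}, while you proceed linearly from an explicit minimizer and additionally justify its existence.
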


\begin{theorem}[Theorem 1.2 of \cite{ciurea2004sequential}]\label{th:min-flow-no-path}
A feasible flow $\eta$ is minimum if, and only if, the residual graph $\cG_\eta(\cM)$ contains not decreasing path (i.e., no path from source to sink).
\end{theorem}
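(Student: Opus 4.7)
The plan is to prove both directions of the equivalence, mirroring the classical max-flow/min-cut argument but adapted to our minimization setting.

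For the forward direction ($\eta$ minimum $\Rightarrow$ no decreasing path), I would argue by contrapositive. Assume there exists a decreasing path $P$ from $(s_1,1)$ to $(s_{H+1},H+1)$ in $\cG_\eta(\cM)$. Let $\delta > 0$ be the minimum residual (either $\rho_h(s,a)$ or $\rho_h^{\mathrm{bw}}(s,a)$) along the arcs of $P$. I would construct a new flow $\eta'$ by decreasing $\eta_h(s,a)$ by $\delta$ on every forward arc of $P$ and increasing $\eta_h(s,a)$ by $\delta$ on every backward arc of $P$. A short verification shows $\eta'$ still satisfies the balance constraints \eqref{eq:flow-definition} (each intermediate node of $P$ has its in/out flow adjusted symmetrically) and the feasibility bounds (by choice of $\delta$). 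Since $P$ leaves the source along a forward arc, $\varphi(\eta') = \varphi(\eta) - \delta < \varphi(\eta)$, contradicting minimality.

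For the reverse direction (no decreasing path $\Rightarrow$ $\eta$ minimum), I would construct an explicit cut whose value equals $\varphi(\eta)$ and invoke Lemma~\ref{lem:flow-vs-cut}. Define $\cC := \{(s,h) \in \cN : (s,h) \text{ is reachable from } (s_1,1) \text{ in } \cG_\eta(\cM)\}$. By hypothesis $(s_{H+1},H+1) \notin \cC$, so $\cC$ is a valid source-sink cut. For any forward arc $(s,a,h) \in \cE(\cC)$, reachability would extend to $(f_h(s,a),h+1)$ if $\rho_h(s,a) > 0$; since this is forbidden, $\eta_h(s,a) = \underline{c}_h(s,a)$. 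Similarly, for any backward arc $(s,a,h) \in \cE_{\mathrm{bw}}(\cC)$, the fictitious backward arc from $(f_h(s,a),h+1)$ to $(s,h)$ in $\cG_\eta(\cM)$ would extend reachability unless $\rho_h^{\mathrm{bw}}(s,a) = 0$, so $\eta_h(s,a) = \overline{c}_h(s,a)$.

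Plugging these equalities into Lemma~\ref{lem:flow-in-out-cut} gives
\begin{align*}
\varphi(\eta) = \sum_{(s,a,h) \in \cE(\cC)} \underline{c}_h(s,a) - \sum_{(s,a,h) \in \cE_{\mathrm{bw}}(\cC)} \overline{c}_h(s,a) = \psi(\cC, \underline{c}, \overline{c}).
\end{align*}
Combined with Lemma~\ref{lem:flow-vs-cut}, which yields $\varphi(\eta') \geq \psi(\cC, \underline{c}, \overline{c}) = \varphi(\eta)$ for every feasible flow $\eta'$, this shows $\eta$ is minimum. The main subtlety I anticipate is bookkeeping around the fictitious backward arcs: one must verify that decreasing flow along $P$ in the forward direction and adjusting backward arcs correctly preserves both balance and feasibility, and that the reachability set $\cC$ is correctly defined with respect to the augmented residual graph (including those backward arcs).
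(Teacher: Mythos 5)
Your proposal is correct and follows essentially the same route as the paper: the paper proves the three-way equivalence (existence of a cut attaining $\varphi(\eta)$) $\Leftrightarrow$ ($\eta$ minimum) $\Leftrightarrow$ (no decreasing path), using the same augmenting-path contradiction for one direction and the same reachability-set cut $\cC$ together with Lemma~\ref{lem:flow-in-out-cut} and Lemma~\ref{lem:flow-vs-cut} for the other. Your write-up merely spells out the $\delta$-augmentation step that the paper leaves implicit; no gap.
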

Note that Theorem \ref{th:min-flow-max-cut-app} is the equivalent of Theorem \ref{th:min-flow-max-cut} stated in Section \ref{sec:prelim} for DAGs with unbounded capacities. We prove both theorems in the following unified proof.
\begin{proof}[Proof of Theorem \ref{th:min-flow-max-cut-app} and Theorem \ref{th:min-flow-no-path}]
Let $\eta$ be a feasible flow. We prove both theorems by showing that the following three statements are equivalent:
\begin{enumerate}
\item there exists a cut $\cC$ such that $\varphi(\eta) = \psi(\cC, \underline{c}, \overline{c})$;
\item $\eta$ is a minimum flow;
\item there is no path from source to sink in $\cG_\eta(\cM)$.
\end{enumerate}
Note that, by Lemma \ref{lem:flow-vs-cut} we clearly have that $1 \implies 2$. In fact, if a stricly better flow that $\eta$ existed, call it $\eta'$, then we would have
\begin{align*}
\varphi(\eta') < \varphi(\eta) = \psi(\cC, \underline{c}, \overline{c}),
\end{align*}
which is a contradiction since $\varphi(\eta') \geq \psi(\cC, \underline{c}, \overline{c})$ by Lemma \ref{lem:flow-vs-cut}.

Clearly, $2 \implies 3$ by definition of decreasing path. If a path from source to sink existed in $\cG_\eta(\cM)$, then we could decrease the flow along it while still satisfying all constraints. Hence, $\eta$ would not be a minimum flow, which is a contradiction.

It remains to prove that $3 \implies 1$. We do so by building an explicit cut $\cC$ from the residual graph $\cG_\eta(\cM)$ which satifies property $1$. This uses the same construction as in the well-known proof of the max-flow-min-cut theorem. Suppose that $\eta$ is a feasible flow with no decreasing paths in $\cG_\eta(\cM)$. Let $\cC$ be the set of nodes that are reachable from $(s_1,1)$ in $\cG_\eta(\cM)$. It must be that $(s_1,1) \in \cC$ and $(s_{H+1},H+1) \notin \cC$ since the sink node is unreachable from the source node in $\cG_\eta(\cM)$. Therefore, $\cC$ is a valid cut. From Lemma \ref{lem:flow-in-out-cut}, we know that
\begin{align*}
\varphi(\eta) = \sum_{(s,a,h)\in\cE(\cC)} \eta_h(s,a) - \sum_{(s,a,h)\in\cE_{\mathrm{bw}}(\cC)} \eta_h(s,a).
\end{align*}
It only remains to prove that $\eta_h(s,a) = \underline{c}_h(s,a)$ for all $(s,a,h)\in\cE(\cC)$ and $\eta_h(s,a) = \overline{c}_h(s,a)$ for all $(s,a,h)\in\cE_{\mathrm{bw}}(\cC)$. 

Take any $(s,a,h)\in\cE(\cC)$. Since $(s,h) \in \cC$ and $(f_h(s,a),h+1) \notin \cC$, we must have that the forward arc $(s,a,h)$ does not belong to $\cG_\eta(\cM)$. But this means that $\rho_h(s,a) = 0$, which in turns implies that $\eta_h(s,a) = \underline{c}_h(s,a)$. This proves the first claim.

Now take any $(s,a,h)\in\cE_{\mathrm{bw}}(\cC)$. Here we have the opposite situation: $(f_h(s,a),h+1) \in \cC$ but $(s,h) \notin \cC$. This means that there is no arc from the first node to the second in $\cG_\eta(\cM)$. But if $\eta_h(s,a) < \overline{c}_h(s,a)$ we would have $\rho_h^{\mathrm{bw}}(s,a) > 0$ and thus there would be a backward arc between those two nodes. This is a contradiction, and thus it must be that $\eta_h(s,a) = \overline{c}_h(s,a)$. This concludes the proof of $3 \implies 1$, which in turns proves the main theorems.
\end{proof}

\subsection{Layered DAGs with unlimited capacity}

In all our applications, we will consider DAGs with unlimited capacity, i.e., $\overline{c}_h(s,a) = \infty$ for all $s\in\cS,a\in\cA,h\in[H]$. In this case, some of the previously-introduced quantities can be simplified using the notation adopted in Section \ref{sec:prelim}. First, we can simplify the notation for a minimum flow $\varphi^\star(\underline{c})$ and for the value of a cut $\psi(\cC, \underline{c})$ by dropping $\overline{c}$. The upper-bound constraints in the definition of feasible flow and in the LP can be simply dropped. Moreover, backward arcs in the residual graph have always residual equal to $\infty$. This means that backward arcs are always present in the residual graph, which has the intuitive meaning that we can always arbitrarily increase the flow along each forward arc in the original graph.

Now note that, by definition of value of a cut, if a cut $\cC$ contains an available backward arc (i.e., one of the arcs in the original graph connects a node outside the cut with a node inside the cut), its value is $-\infty$. Therefore, if a feasible flow exists (whose value must be non-negative), then, by Theorem \ref{th:min-flow-max-cut}, we know that a cut $\cC$ with backward arcs cannot be a maximum cut.
%Since we seek cuts with maximum value, a cut with available backward arcs can never be the maximizer  and thus can be ignored. In layered DAGs, this essentially means that a maximum cut $\cC$ cannot contain a node $(s,h)$ without containing all nodes belonging to any path from $(s_1,1)$ to $(s,h)$. 
Therefore, we can define the set of \emph{valid cuts} $\mathfrak{C}$ as
\begin{align*}
\mathfrak{C} := \left\{ \cC \subseteq \cN \setminus \{(s_{H+1},H+1)\} \mid (s_1,1) \in \cC,  \cE_{\mathrm{bw}}(\cC) = \emptyset \right\}.
\end{align*}
Then, for all $\cC \in \mathfrak{C}$, we clearly have $\psi(\cC, \underline{c}) = \sum_{(s,a,h) \in \cE(\cC)} \underline{c}_h(s,a)$ since there is no backward arc. Moreover, by Theorem \ref{th:min-flow-max-cut-app} together with the fact that cuts not belonging to $\mathfrak{C}$ cannot be maximizers,
\begin{align*}
\varphi^\star(\underline{c}) = \max_{\cC \in \mathfrak{C}} \psi(\cC, \underline{c}) = \max_{\cC \in \mathfrak{C}} \sum_{(s,a,h) \in \cE(\cC)} \underline{c}_h(s,a).
\end{align*}
%This is exactly the variant of the min-flow-max-cut theorem stated in Theorem \ref{th:min-flow-max-cut}.
We formally state this result in the following theorem.
\begin{theorem}\label{th:min-flow-max-cut}
    Consider a layered DAG with unlimited capacity. If there exists a feasible flow,
    \begin{align*}
    \varphi^\star(\underline{c}) = \max_{\cC \in \mathfrak{C}} \psi(\cC, \underline{c}).
    \end{align*}
 \end{theorem}

\paragraph{Useful properties}

We prove some simple properties of flows and cuts which will be useful later on.

\begin{lemma}[Monotonicity]\label{lem:flow-mono}
Let $\underline{c}^1,\underline{c}^2 : \cE \rightarrow [0,\infty)$ be such that $\underline{c}^1_h(s,a) \leq \underline{c}^2_h(s,a)$ for all $(s,a,h)\in\cE$. Then,
\begin{align*}
\varphi^\star(\underline{c}^1) \leq \varphi^\star(\underline{c}^2). 
\end{align*}
\end{lemma}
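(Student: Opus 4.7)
My plan is to prove this monotonicity lemma directly from the LP formulation of the minimum flow problem. The key observation is that enlarging the lower bound function $\underline{c}$ shrinks the feasible region of the LP, so minimizing the same linear objective \eqref{eq:flow-value} over a smaller set can only increase (or leave unchanged) the optimum.

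More concretely, the first step will be to let $\eta$ be any feasible flow for $\underline{c}^2$, i.e., a non-negative function on $\cE$ satisfying the navigation constraints \eqref{eq:flow-definition} together with $\eta_h(s,a) \geq \underline{c}^2_h(s,a)$ for every arc. Since by assumption $\underline{c}^1_h(s,a) \leq \underline{c}^2_h(s,a) \leq \eta_h(s,a)$ for every $(s,a,h)\in\cE$, the flow $\eta$ also satisfies \eqref{eq:flow-feasible} with respect to $\underline{c}^1$, and hence is feasible for the minimum-flow problem with lower bounds $\underline{c}^1$. Therefore, the feasible set for $\underline{c}^2$ is a subset of the feasible set for $\underline{c}^1$. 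Since both LPs minimize the same objective $\varphi(\eta) = \sum_{a\in\cA_1(s_1)} \eta_1(s_1,a)$, minimizing over the larger set yields a value that is at most the value of minimizing over the smaller set, giving $\varphi^\star(\underline{c}^1) \leq \varphi^\star(\underline{c}^2)$.

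There is essentially no obstacle here — this is a one-line argument once the LP formulation is in place — but I should handle one edge case: the result is only meaningful when a feasible flow exists for $\underline{c}^2$, so that $\varphi^\star(\underline{c}^2) < \infty$. If no feasible flow exists for $\underline{c}^2$, we adopt the convention $\varphi^\star(\underline{c}^2) = +\infty$ and the inequality holds trivially. In our setting of layered DAGs with unlimited capacity a feasible flow always exists (e.g., one can route sufficient flow along any single source-to-sink path to dominate $\underline{c}^2$ everywhere, then add enough on other arcs to satisfy remaining demands), so this edge case is essentially vacuous.

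As an alternative (and as a sanity check) I could argue via Theorem~\ref{th:min-flow-max-cut}: for every cut $\cC \in \mathfrak{C}$, the cut value $\psi(\cC,\underline{c}) = \sum_{(s,a,h)\in\cE(\cC)} \underline{c}_h(s,a)$ is monotone in $\underline{c}$ arc-by-arc, so $\psi(\cC,\underline{c}^1) \leq \psi(\cC,\underline{c}^2)$; taking the maximum over $\cC \in \mathfrak{C}$ on both sides and applying the min-flow-max-cut identity yields the same conclusion. I will present the LP-based argument since it is the most direct and avoids invoking the heavier machinery.
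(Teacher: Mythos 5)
Your argument is correct and is essentially identical to the paper's proof, which likewise observes from the LP formulation that any feasible flow for $\underline{c}^2$ is feasible for $\underline{c}^1$, so the optimum over the larger feasible set can only be smaller. The extra remarks on infeasibility and the max-cut alternative are fine but not needed.
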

\begin{proof}
This can be immediately seen from the LP formulation: any feasible flow $\eta$ for $\underline{c}^2$ is also feasible for $\underline{c}^1$.
\end{proof}

\begin{lemma}[Flow bounds]\label{lem:flow-bounds}
For any lower bound function $\underline{c}$,
\begin{align*}
\max_{h\in[H]} \sum_{s\in\cS_h}\sum_{a\in\cA_h(s)} \underline{c}_h(s,a) \leq \varphi^\star(\underline{c}) \leq \sum_{h\in[H]} \sum_{s\in\cS_h}\sum_{a\in\cA_h(s)} \underline{c}_h(s,a). 
\end{align*}
\end{lemma}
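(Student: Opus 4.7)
The plan is to prove the two bounds separately, using an explicit feasible flow for the upper bound and a specific valid cut combined with the min-flow-max-cut theorem for the lower bound.

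For the upper bound, I would exhibit a feasible flow $\eta$ whose value equals $\sum_{h,s,a}\underline{c}_h(s,a)$. The construction is to decompose the contribution of each arc into a source-to-sink path: for every arc $(s,a,h)\in\cE$, pick any path from $(s_1,1)$ through $(s,a,h)$ to $(s_{H+1},H+1)$ — such a path always exists since $\cS_h$ consists of reachable states and every state has a continuation to the sink in a layered DAG — and route $\underline{c}_h(s,a)$ units of flow along it. Let $\eta$ be the sum of these per-arc flows. Each elementary path flow satisfies the balance constraints \eqref{eq:flow-definition}, so $\eta$ does too. By construction $\eta_h(s,a)\geq \underline{c}_h(s,a)$ for every arc, hence $\eta$ is feasible. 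Since each elementary flow contributes exactly $\underline{c}_h(s,a)$ to the value at the source, we get $\varphi(\eta)=\sum_{h,s,a}\underline{c}_h(s,a)$, yielding the upper bound by minimality of $\varphi^\star(\underline{c})$.

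For the lower bound, fix any $h\in[H]$ and consider the ``prefix'' cut
\[
\cC_h := \{(s,h') \in \cN : h'\leq h\}.
\]
Clearly $(s_1,1)\in\cC_h$ and $(s_{H+1},H+1)\notin\cC_h$, so $\cC_h$ is a source-sink cut. Since $\cG(\cM)$ is layered, any arc leaves a stage $h'$ and enters stage $h'+1$; a backward arc would require $h'>h$ and $h'+1\leq h$, which is impossible. Hence $\cE_{\mathrm{bw}}(\cC_h)=\emptyset$ and $\cC_h\in\mathfrak{C}$. The forward arcs of $\cC_h$ are exactly the arcs at stage $h$, so
\[
\psi(\cC_h,\underline{c}) = \sum_{s\in\cS_h}\sum_{a\in\cA_h(s)}\underline{c}_h(s,a).
\]
Applying Theorem \ref{th:min-flow-max-cut-app} (assuming feasibility, which we just established above via $\eta$) gives $\varphi^\star(\underline{c})\geq \psi(\cC_h,\underline{c})$, and maximizing over $h$ yields the lower bound.

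Neither step is technically hard: the only mild subtlety is that the path-based construction for the upper bound implicitly uses reachability of every arc from source and to sink, which is guaranteed by the definition of $\cS_h$ and the layered structure with a unique sink. The lower bound is essentially a one-line application of min-flow-max-cut to a canonical cut, so there is no real obstacle beyond unpacking the definitions.
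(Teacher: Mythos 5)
Your proof is correct. The lower bound is exactly the paper's argument: the prefix cut $\cC_h$ has no backward arcs because the graph is layered, its forward arcs are precisely the stage-$h$ arcs, and Theorem~\ref{th:min-flow-max-cut} does the rest. Where you diverge is the upper bound. The paper stays entirely on the ``dual'' side: since $\varphi^\star(\underline{c}) = \max_{\cC\in\mathfrak{C}}\sum_{(s,a,h)\in\cE(\cC)}\underline{c}_h(s,a)$ and $\cE(\cC)\subseteq\cE$ for any cut, the upper bound is immediate in one line. You instead work on the ``primal'' side, exhibiting an explicit feasible flow as a superposition of source-to-sink path flows, one per arc, each carrying $\underline{c}_h(s,a)$ units; this is a valid and standard construction, and the reachability facts you invoke (every $s\in\cS_h$ is reachable from the source, and every node can be continued to the sink since $\cA_h(s)$ is nonempty for reachable states) do hold in this setting. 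Your route is slightly longer for the upper bound, but it buys something the paper's proof glosses over: Theorem~\ref{th:min-flow-max-cut} is stated conditionally on the existence of a feasible flow, and your path construction supplies exactly that witness, so your argument is self-contained where the paper's implicitly assumes feasibility. Both approaches are sound.
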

\begin{proof}
Both inequalities are easy to see from Theorem \ref{th:min-flow-max-cut} and the definition of value of a cut. The upper bound is trivial since $\varphi^\star(\underline{c}) = \max_{\cC \in \mathfrak{C}} \sum_{(s,a,h) \in \cE(\cC)} \underline{c}_h(s,a)$ and the set of outgoing arcs $\cE(\cC)$ from a cut $\cC$ can contain at most all possible arcs $\cE$. To see the lower bound, note that $\cC_h := \{s\in\cS_l : l \leq h\}$ is a valid cut for any $h\in[H]$ whose outgoing arcs are all those connecting states at stage $h$ with states at stage $h+1$. Thus,
\begin{align*}
\varphi^\star(\underline{c}) = \max_{\cC \in \mathfrak{C}} \sum_{(s,a,h) \in \cE(\cC)} \underline{c}_h(s,a) \geq \max_{h\in[H]} \sum_{(s,a,h) \in \cE(\cC_h)} \underline{c}_h(s,a) = \max_{h\in[H]} \sum_{s\in\cS_h}\sum_{a\in\cA_h(s)} \underline{c}_h(s,a).
\end{align*}
\end{proof}

\begin{lemma}\label{lem:flow-of-sum}
Let $\underline{c}^1,\underline{c}^2$ be two non-negative lower bound functions and $\alpha > 0$. Then,
\begin{align*}
\varphi^\star(\alpha \underline{c}^1 + \underline{c}^2) \leq \alpha\varphi^\star( \underline{c}^1)  + \varphi^\star(\underline{c}^2).
\end{align*}
\end{lemma}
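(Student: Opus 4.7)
The plan is to give a short direct proof by combining optimal flows. The key observation is that flow feasibility is preserved under non-negative linear combinations, both because the balance equations \eqref{eq:flow-definition} are linear and because the lower-bound constraints \eqref{eq:flow-feasible} combine additively when both functions are non-negative.

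First, I would invoke the LP formulation (with unlimited capacities, as in Section~\ref{sec:prelim}) to let $\eta^1$ be a minimum flow for $\underline{c}^1$ and $\eta^2$ be a minimum flow for $\underline{c}^2$, so that $\varphi(\eta^1) = \varphi^\star(\underline{c}^1)$ and $\varphi(\eta^2) = \varphi^\star(\underline{c}^2)$. (Existence of minimizers is routine since the feasible sets are non-empty closed polyhedra and the objective is bounded below by~$0$.) Next, I would set $\eta := \alpha \eta^1 + \eta^2$ and verify it is a feasible flow for $\alpha \underline{c}^1 + \underline{c}^2$. The balance equation \eqref{eq:flow-definition} is preserved because it is linear in $\eta$, and non-negativity plus $\eta_h(s,a) \geq \alpha \underline{c}^1_h(s,a) + \underline{c}^2_h(s,a)$ follows termwise from the feasibility of $\eta^1$ and $\eta^2$ together with $\alpha \geq 0$.

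Finally, since the value functional $\varphi$ defined in \eqref{eq:flow-value} is linear, $\varphi(\eta) = \alpha \varphi(\eta^1) + \varphi(\eta^2) = \alpha \varphi^\star(\underline{c}^1) + \varphi^\star(\underline{c}^2)$. By definition, $\varphi^\star(\alpha \underline{c}^1 + \underline{c}^2)$ is the minimum value of a feasible flow for the combined lower bound function, hence it is upper bounded by $\varphi(\eta)$, yielding the stated inequality. An equally clean alternative, which I would mention as a cross-check, is to use Theorem~\ref{th:min-flow-max-cut} to write each minimum flow as a maximum cut value and apply the elementary bound $\max_{\cC} (f(\cC)+g(\cC)) \leq \max_{\cC} f(\cC) + \max_{\cC} g(\cC)$ to the sum decomposition of $\sum_{(s,a,h)\in\cE(\cC)} (\alpha \underline{c}^1_h(s,a) + \underline{c}^2_h(s,a))$.

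I do not expect a real obstacle: linearity of both the flow constraints and the objective makes the proof essentially one line. The only subtlety worth stating explicitly is that the argument requires $\alpha \geq 0$ (so that the scaled lower bound stays in the right direction) and non-negativity of both $\underline{c}^1,\underline{c}^2$ (used implicitly to ensure feasible flows exist in the unlimited-capacity setting).
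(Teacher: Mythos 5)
Your proof is correct and matches the paper's argument in substance: the paper also combines two minimum flows, merely splitting the step into homogeneity ($\varphi^\star(\alpha\underline{c}^1)=\alpha\varphi^\star(\underline{c}^1)$ via a change of variables) followed by subadditivity ($\eta^1+\eta^2$ is feasible for $\underline{c}^1+\underline{c}^2$), whereas you form $\alpha\eta^1+\eta^2$ in one shot. No gaps; your observation about needing $\alpha\geq 0$ and non-negativity of the lower bounds is accurate.
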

\begin{proof}
We first prove that $\varphi^\star(\alpha \underline{c}^1) = \alpha\varphi^\star( \underline{c}^1)$. This can be easily seen from the linear programming formulation stated above: by performing the change of variables $\eta_h'(s,a) = \eta_h(s,a)/\alpha$, we obtain exactly $\alpha$ times the value of a minimum flow with lower bound function $\underline{c}^1$. Next, we prove that $\varphi^\star( \underline{c}^1 + \underline{c}^2) \leq \varphi^\star( \underline{c}^1)  + \varphi^\star(\underline{c}^2)$. Let $\eta^1$ and $\eta^2$ be minimum flows for the problems with lower bounds $\underline{c}^1$ and $\underline{c}^2$, respectively. The proof follows by noting that $\eta := \eta^1 + \eta^2$ is a feasible flow for the problem with lower bound function $\underline{c}^1 + \underline{c}^2$ and that is value is exactly $\varphi(\eta) = \varphi(\eta^1) + \varphi(\eta^2) = \varphi^\star(\underline{c}^1)  + \varphi^\star(\underline{c}^2)$. Combining these two results concludes the proof.
\end{proof}

\subsection{Minimum flows and minimum policy covers}\label{app:min-cover}

A crucial problem in the analysis of our sampling rules is the problem of computing a \emph{minimum policy cover}. Formally, given a subset $\cE' \subseteq \cE$ of the arcs (i.e., of the state-action-stage triplets), the goal is to find a set of policies $\Pi_{\mathrm{cover}} \subseteq \Pi$ of \emph{minimum size} such that
\begin{align*}
\forall (s,a,h) \in \cE', \exists \pi \in \Pi_{\mathrm{cover}} : (s_h^\pi,a_h^\pi) = (s,a).
\end{align*}
That is, $\Pi_{\mathrm{cover}}$ is the smallest set of policies that, played together, visit all arcs in $\cE'$. This problem can be easily reduced to a minimum flow problem with lower bound function
\begin{align*}
\underline{c}_h(s,a) := \indi{(s,a,h)\in\cE'},
\end{align*}
which intuitively demands at least one visit to all $(s,a,h)\in\cE'$, and zero visits from the other triplets. Moreover, since $\underline{c}$ is integer-valued, an integer minimum flow exists which can be computed by existing algorithms \citep[e.g.,][]{brandizi2012graph2tab}. Suppose that $\eta$ is one such integer minimum flow. A policy cover can be easily extracted from it by the procedure shown in Algorithm \ref{alg:extract-cover}, which is similar to the method proposed by  \cite{brandizi2012graph2tab} to obtain a minimum path cover in a layered DAG.

\begin{algorithm}[t]
\caption{Extract policy cover from minimum flow}\label{alg:extract-cover}
\begin{algorithmic}
\STATE \textbf{Input:} deterministic MDP (without reward) $\cM := (\cS, \cA, \{f_h\}_{h\in[H]}, s_1, H)$, feasible integer flow $\eta$
\STATE Initialize $\Pi_{\mathrm{cover}} \leftarrow \emptyset$
\WHILE{$\varphi(\eta) > 0$}
\STATE Initialize a policy $\pi$ with arbitrary actions
\FOR{$h=1,\dots,H$}
\STATE $\pi_h(s_h) \leftarrow \argmax_{a\in\cA_h(s_h)}\eta_h(s,a)$
\STATE $\eta_h(s_h,\pi_h(s_h)) \leftarrow \eta_h(s_h,\pi_h(s_h)) - 1$
\STATE $s_{h+1} \leftarrow f_h(s_h,\pi_h(s_h))$
\ENDFOR
\STATE $\Pi_{\mathrm{cover}} \leftarrow \Pi_{\mathrm{cover}} \cup \{\pi\}$
\ENDWHILE
\end{algorithmic}
\end{algorithm}

\paragraph{Eliminating arcs}

In our applications, we compute minimum policy covers for subsets of the arcs $\cE'$ that contain only non-eliminated actions. One natural question is: what if, instead of setting the lower bound function to zero for arcs not in $\cE'$, we use a lower bound function that is uniformly equal to one but solve the minimum flow problem on a sub-graph with only arcs in $\cE'$ available? One argument against this idea is that the resulting minimum policy cover might have a strictly larger size. Figure \ref{fig:cover-elim} shows an example. Suppose that we want to compute a minimum policy cover visiting all arcs except $\bar{a}$. Then, if we set the lower bound function for $\bar{a}$ to zero and solve the minimum flow problem on the full MDP, we get an optimal value of $1+m$ by sending a flow of one on each action in $s_1$ and then redirecting $m-1$ flow to $\bar{a}$. If instead we make $\bar{a}$ unavailable, we cannot do this trick and the optimal flow becomes $2m$.

\begin{figure}
\centering
\includegraphics[scale=0.8]{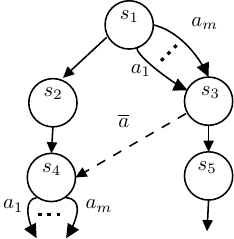}
\caption{Example to show why eliminating actions from the sampling rule is not a good idea. The size of a minimum policy cover is $m+1$ when $\bar{a}$ is used and $2m$ when it is unavailable.}\label{fig:cover-elim}
\end{figure}

%!TEX root = ../main.tex 

\newpage 
\section{Lower Bounds}\label{app:lower-bounds}

We first present an important result to derive lower bounds, a change-of-distribution lemma which is a direct instantiation of Lemma 1 in \cite{kaufmann2016complexity} (see also Lemma 8 in \cite{tirinzoni2021fully} and references therein).

\begin{lemma}\label{lem:CD_mdps} Let $\cM$ and $\widetilde{\cM}$ be two MDPs with identical state-action space and deterministic transitions but possibly different
rewards distributions denoted by $(\nu_h^{\cM}(s,a))_{s,a,h}$ and $(\nu_h^{\widetilde{\cM}}(s,a))_{s,a,h}$ respectively. For every algorithm, every stopping time $\tau$ and every event $\cE \in \cF_{\tau}$, it holds that 
\[\sum_{h=1}^{H}\sum_{s \in \cS_h} \sum_{a \in \cA_h} \bE_{\cM}[n_{h}^{\tau}(s,a)] \mathrm{KL}\left(\nu_h^{\cM}(s,a),\nu_h^{\widetilde{\cM}}(s,a)\right)\geq \kl\left(\bP_{\cM}(\cE),\bP_{\widetilde{\cM}}(\cE)\right)\]
where $\KL$ denotes the Kullback-Leibler divergence and $\kl(x,y) = x\log(x/y) + (1-x)\log((1-x)/(1-y))$ the binary relative entropy.
\end{lemma}

If the rewards follow Gaussian distributions with variance $\sigma^2$, we have \[\mathrm{KL}\left(\nu_h^{\cM}(s,a),\nu_h^{\widetilde{\cM}}(s,a)\right) = \frac{\left(r_h^{\cM}(s,a) - r_h^{\widetilde{\cM}}(s,a)\right)^2}{2\sigma^2}.\] %Moreover, in the sequel we will repeatedly use the lower bound $\kl(\delta,1-\delta) \geq \log\left(\frac{1}{3\delta}\right)$.

\subsection{Instance-dependent lower bound}\label{proof:LB_instance}

In this section, we prove Theorem \ref{th:instance-lb}. We first state and prove three lemmas which bound the local number of visits to different $(s,a,h)$. Then, we combine them to prove the main result.

\begin{lemma}[Lower bound for sub-optimal pairs]\label{lem:lb-suboptimal}
For any $h\in[H]$ and any non-$\epsilon$-optimal pair $(s,a) \notin \cZ_h^\epsilon$,
\begin{align*}
    \bE[n_{h}^{\tau}(s, a)] \geq \frac{2\hl{\sigma^2}\log(1/3\delta)}{(\overline{\Delta}_h(s,a) + \epsilon)^2}.
\end{align*}
\end{lemma}
\begin{proof}
Consider the alternative MDP $\widetilde{\cM} := (\cS, \cA, \{f_h,\widetilde{\nu}_h\}_{h\in[H]}, s_1, H)$ which is equivalent to $\cM$ except that the reward is modified only at $(s,a,h)$ as $\widetilde{\nu}_h(s,a) = \cN(r_h(s,a) + \Delta, \hl{\sigma^2})$ with $\Delta > \overline{\Delta}_h(s,a) + \epsilon$, while $\widetilde{\nu}_{h'}(s',a') = {\nu}_{h'}(s',a')$ on all other state-action-stage triplets. It is easy to see that, for any $\pi\in\Pi_{s,a,h}$,
\begin{align*}
\widetilde{V}_1^\pi(s_1) = \sum_{l=1}^H \widetilde{r}_l(s_l^\pi,a_l^\pi) = \sum_{l=1}^H {r}_l(s_l^\pi,a_l^\pi) + \Delta = V_1^\pi(s_1) + \Delta.
\end{align*}
Similarly, for any policy $\pi\notin\Pi_{s,a,h}$, we have $\widetilde{V}_1^{\pi}(s_1) = {V}_1^{\pi}(s_1)$.
Let $ \pi^0 \in \argmax_{\pi\in \Pi_{s,a,h}} V_1^\pi(s_1)$. Then,
\begin{align*}
   \widetilde{V}_1^{\pi^0}(s_1) 
   &= V_1^{\pi^0}(s_1)+ {\Delta}
   \\ &> V_1^{\pi^0}(s_1)+ \overline{\Delta}_h(s,a)+\epsilon\\
   &= V_1^{\star}(s_1)+\epsilon \\
   &\ge \max_{\pi \notin \Pi_{s,a,h}} V_1^{\pi}(s_1)+\epsilon
   \\ &= \max_{\pi \notin \Pi_{s,a,h}} \widetilde{V}_1^{\pi}(s_1)+\epsilon.
\end{align*}
This means that\footnote{\blue{Recall that we use $\widehat{\pi}$ to denote the policy returned by the recommendation rule of the algorithm}} $\bP_{\widetilde{\cM}}(\widehat{\pi} \in \Pi_{s,a,h}) \geq 1-\delta$. Moreover, $\bP_{{\cM}}(\widehat{\pi} \in \Pi_{s,a,h}) \leq \delta$ since $(s,a,h)$ is not visited by any $\epsilon$-optimal policy. Therefore, Lemma~\ref{lem:CD_mdps} implies that
\begin{align*}
   \bE[n_h^\tau(s,a)] 
   \geq \frac{2\hl{\sigma^2}}{\Delta^2} \mathrm{kl}(\bP_{{\cM}}(\widehat{\pi} \in \Pi_{s,a,h}),\bP_{\widetilde{\cM}}(\widehat{\pi} \in \Pi_{s,a,h}))
   \geq \frac{2\hl{\sigma^2}}{\Delta^2}\mathrm{kl}(\delta,1-\delta) 
   \geq \frac{2\hl{\sigma^2}}{\Delta^2} \log(1/3\delta).
\end{align*}
This holds for any $\Delta > \overline{\Delta}_h(s,a) + \epsilon$ and the proof is concluded by taking the limit.
\end{proof}

\begin{lemma}[Lower bound for non-unique $\epsilon$-optimal pairs]\label{lem:lb-optimal-nonunique}
For any $h\in[H]$ and any $\epsilon$-optimal pair $(s,a) \in \cZ_h^\epsilon$, if $|\cZ_h^\epsilon| > 1$,
\begin{align*}
    \bE[n_{h}^{\tau}(s, a)] \geq \frac{\hl{\sigma^2}\log(1/4\delta)}{4\epsilon^2}.
\end{align*}
\end{lemma}
\begin{proof}
Take any pair $(s,a)$ in $\cZ_h^\epsilon$. We distinguish two cases.

\paragraph{Case 1: $\bP_{\cM}(\widehat{\pi} \in \Pi_{s,a,h}) \leq 1/2$.}

We can build the same alternative MDP $\widetilde{\cM}$ as in the proof of Lemma \ref{lem:lb-suboptimal}, for which we have $\bP_{\widetilde{\cM}}(\widehat{\pi} \in \Pi_{s,a,h}) \geq 1-\delta$. Thus, using Lemma~\ref{lem:CD_mdps},
\begin{align*}
   \bE[n_h^\tau(s,a)] 
   &\geq \frac{2\hl{\sigma^2}}{\Delta^2} \mathrm{kl}(\bP_{{\cM}}(\widehat{\pi} \in \Pi_{s,a,h}),\bP_{\widetilde{\cM}}(\widehat{\pi} \in \Pi_{s,a,h}))
   \\ &\geq \frac{2\hl{\sigma^2}}{\Delta^2}\mathrm{kl}(1/2,1-\delta)
    \\ &= \frac{2\hl{\sigma^2}}{\Delta^2}\mathrm{kl}(1/2,\delta)  
   \\ &\geq \frac{\hl{\sigma^2}}{\Delta^2} \log(1/4\delta),
\end{align*}
where we used the fact that $\mathrm{kl}(x,y) = \mathrm{kl}(1-x,1-y)$ and $\mathrm{kl}(x,y) \geq x\ln(1/y) - \ln(2)$. By taking the limit $\Delta \rightarrow \overline{\Delta}_h(s,a) + \epsilon$ and using $\overline{\Delta}_h(s,a) \leq \epsilon$, we thus conclude that $\bE[n_h^\tau(s,a)] \geq \frac{\hl{\sigma^2}\log(1/4\delta)}{(\overline{\Delta}_h(s,a) + \epsilon)^2} \geq \frac{\hl{\sigma^2}\log(1/4\delta)}{4\epsilon^2}$. 

\paragraph{Case 2: $\bP_{\cM}(\widehat{\pi} \in \Pi_{s,a,h}) > 1/2$.}

Consider the alternative MDP $\widetilde{\cM} := (\cS, \cA, \{f_h,\widetilde{\nu}_h\}_{h\in[H]}, s_1, H)$ which is equivalent to $\cM$ except that the reward is modified only at $(s,a,h)$ as $\widetilde{\nu}_h(s,a) = \cN(r_h(s,a) - \Delta, \hl{\sigma^2})$ with $\Delta > 2\epsilon - \overline{\Delta}_h(s,a)$, while $\widetilde{\nu}_{h'}(s',a') = {\nu}_{h'}(s',a')$ on all other state-action-stage triplets. Then, for $ \pi^0 \in \underset{\pi\in \Pi\setminus\Pi_{s,a,h}}{\argmax} V_1^\pi(s_1)$,
\begin{align*}
\widetilde{V}_1^{\pi^0}(s_1) 
=  V_1^{\pi^0}(s_1) 
&\geq V_1^\star(s_1) - \epsilon \pm \max_{\pi\in\Pi_{s,a,h}}{V}_1^{\pi}(s_1)
\\ &= \overline{\Delta}_h(s,a) - \epsilon + \max_{\pi\in\Pi_{s,a,h}}{V}_1^{\pi}(s_1)
\\ &= \overline{\Delta}_h(s,a) - \epsilon + \max_{\pi\in\Pi_{s,a,h}}\widetilde{V}_1^{\pi}(s_1) + \Delta > \max_{\pi\in\Pi_{s,a,h}}\widetilde{V}_1^{\pi}(s_1) + \epsilon,
\end{align*}
where the first inequality is due to the fact that, since $\cZ_h^\epsilon > 1$, there exists at least one $\epsilon$-optimal policy which does not visit $(s,a)$ at step $h$ (i.e., which belongs to $\Pi\setminus\Pi_{s,a,h}$).
This implies that $\bP_{\widetilde{\cM}}(\widehat{\pi} \in \Pi_{s,a,h}) \leq \delta$. Thus, using Lemma~\ref{lem:CD_mdps},
\begin{align*}
   \bE[n_h^\tau(s,a)] 
   &\geq \frac{2\hl{\sigma^2}}{\Delta^2} \mathrm{kl}(\bP_{{\cM}}(\widehat{\pi} \in \Pi_{s,a,h}),\bP_{\widetilde{\cM}}(\widehat{\pi} \in \Pi_{s,a,h}))
   \geq \frac{2\hl{\sigma^2}}{\Delta^2}\mathrm{kl}(1/2,\delta) 
   \geq \frac{\hl{\sigma^2}}{\Delta^2} \log(1/4\delta).
\end{align*}
By taking the limit $\Delta \rightarrow 2\epsilon - \overline{\Delta}_h(s,a)$, we thus conclude that $\bE[n_h^\tau(s,a)] \geq \frac{\hl{\sigma^2}\log(1/4\delta)}{(2\epsilon - \overline{\Delta}_h(s,a))^2} \geq \frac{\hl{\sigma^2}\log(1/4\delta)}{4\epsilon^2}$. 
\end{proof}

\begin{lemma}[Lower bound for unique $\epsilon$-optimal pairs]\label{lem:lb-optimal-unique}
For any $h\in[H]$ and any $\epsilon$-optimal pair $(s,a) \in \cZ_h^\epsilon$, if $|\cZ_h^\epsilon| = 1$,
\begin{align*}
   \bE[n_h^\tau(s,a)] \geq \frac{2\hl{\sigma^2}\log(1/3\delta)}{(\overline{\Delta}_{\min}^h + \epsilon)^2},
\end{align*}
where $\overline{\Delta}_{\min}^h := \min_{(s',a') : \overline{\Delta}_h(s',a') > 0} \overline{\Delta}_h(s',a')$.
\end{lemma}
\begin{proof}
Note that, since $(s,a) \in \cZ_h^\epsilon$ and $|\cZ_h^\epsilon| = 1$, then $\Pi^\epsilon \cap (\Pi\setminus\Pi_{s,a,h}) = \emptyset$ (i.e., all $\epsilon$-optimal policies visit $(s,a,h)$). Therefore, $\bP_{\cM}(\widehat{\pi} \in \Pi_{s,a,h}) \geq 1-\delta$. We now use a construction similar to the one in Case 2 of the proof of Lemma \ref{lem:lb-optimal-nonunique}.

Consider the alternative MDP $\widetilde{\cM} := (\cS, \cA, \{f_h,\widetilde{\nu}_h\}_{h\in[H]}, s_1, H)$ which is equivalent to $\cM$ except that the reward is modified only at $(s,a,h)$ as $\widetilde{\nu}_h(s,a) = \cN(r_h(s,a) - \Delta, \hl{\sigma^2})$ with $\Delta > \max_{\pi\in\Pi_{s,a,h}}{V}_1^{\pi}(s_1) - \max_{\pi\in \Pi\setminus\Pi_{s,a,h}}V_1^\pi(s_1) + \epsilon$, while $\widetilde{\nu}_{h'}(s',a') = {\nu}_{h'}(s',a')$ on all other state-action-stage triplets. Then, for $ \pi^0 \in \underset{\pi\in \Pi\setminus\Pi_{s,a,h}}{\argmax} V_1^\pi(s_1)$,
\begin{align*}
\widetilde{V}_1^{\pi^0}(s_1) 
=  V_1^{\pi^0}(s_1) 
&= \max_{\pi\in \Pi\setminus\Pi_{s,a,h}}V_1^\pi(s_1) \pm \max_{\pi\in\Pi_{s,a,h}}{V}_1^{\pi}(s_1)
\\ &= \max_{\pi\in \Pi\setminus\Pi_{s,a,h}}V_1^\pi(s_1) - \max_{\pi\in\Pi_{s,a,h}}{V}_1^{\pi}(s_1) + \max_{\pi\in\Pi_{s,a,h}}\widetilde{V}_1^{\pi}(s_1) + \Delta
\\ &> \max_{\pi\in\Pi_{s,a,h}}\widetilde{V}_1^{\pi}(s_1) + \epsilon.
\end{align*}
This implies that $\bP_{\widetilde{\cM}}(\widehat{\pi} \in \Pi_{s,a,h}) \leq \delta$. Thus, applying Lemma~\ref{lem:CD_mdps},
\begin{align*}
   \bE[n_h^\tau(s,a)] 
   &\geq \frac{2\hl{\sigma^2}}{\Delta^2} \mathrm{kl}(\bP_{{\cM}}(\widehat{\pi} \in \Pi_{s,a,h}),\bP_{\widetilde{\cM}}(\widehat{\pi} \in \Pi_{s,a,h}))
   \geq \frac{2\hl{\sigma^2}}{\Delta^2}\mathrm{kl}(1-\delta,\delta) 
   \geq \frac{2\hl{\sigma^2}}{\Delta^2} \log(1/3\delta).
\end{align*}
Now note that, since an optimal policy belongs to $\Pi_{s,a,h}$,
\begin{align*}
\max_{\pi\in\Pi_{s,a,h}}{V}_1^{\pi}(s_1) - \max_{\pi\in \Pi\setminus\Pi_{s,a,h}}V_1^\pi(s_1) 
& = V_1^\star(s_1) - \max_{\pi\in \Pi\setminus\Pi_{s,a,h}}V_1^\pi(s_1)
\\ &= V_1^\star(s_1) - \max_{s'\in\cS_h}\max_{a'\in\cA_h(s') : (s',a') \neq (s,a)} \max_{\pi\in\Pi_{s',a',h}}V_1^\pi(s_1)
\\ & = \min_{(s',a') : \overline{\Delta}_h(s',a') > 0} \overline{\Delta}_h(s',a') = \overline{\Delta}_{\min}^h.
\end{align*}
By taking the limit $\Delta \rightarrow \overline{\Delta}_{\min}^h + \epsilon$, we conclude that $\bE[n_h^\tau(s,a)] \geq \frac{2\hl{\sigma^2}\log(1/3\delta)}{(\overline{\Delta}_{\min}^h + \epsilon)^2}$. 
\end{proof}

We are now ready to prove the main theorem.

\begin{proof}[Proof of Theorem \ref{th:instance-lb}]
The first statement follows easily from Lemma \ref{lem:lb-suboptimal}, Lemma \ref{lem:lb-optimal-nonunique}, and Lemma \ref{lem:lb-optimal-unique}. In fact, for $(s,a)\notin \cZ_h^\epsilon$, Lemma \ref{lem:lb-suboptimal} yields
\begin{align*}
\bE[n_h^\tau(s,a)] \geq \frac{2\hl{\sigma^2}\log(1/3\delta)}{(\overline{\Delta}_h(s,a) + \epsilon)^2} \geq \frac{\hl{\sigma^2}\log(1/3\delta)}{2\max(\overline{\Delta}_h(s,a), \epsilon)^2} = \frac{\hl{\sigma^2}\log(1/3\delta)}{2\max(\overline{\Delta}_h(s,a), \overline{\Delta}_{\min}^h, \epsilon)^2}
\end{align*}
since $\overline{\Delta}_h(s,a) \leq \overline{\Delta}_{\min}^h$. For $(s,a)\in\cZ_h^\epsilon$ when $|\cZ_h^\epsilon| > 1$, the result follows trivially from Lemma \ref{lem:lb-optimal-nonunique} by noting that $\overline{\Delta}_{\min}^h = 0$ and $\overline{\Delta}_h(s,a) \leq \epsilon$. For $(s,a)\in\cZ_h^\epsilon$ when $|\cZ_h^\epsilon| = 1$, using Lemma \ref{lem:lb-optimal-unique} with $\overline{\Delta}_h(s,a) \leq \epsilon$,
\begin{align*}
   \bE[n_h^\tau(s,a)] \geq \frac{2\hl{\sigma^2}\log(1/3\delta)}{(\overline{\Delta}_{\min}^h + \epsilon)^2} \geq \frac{\hl{\sigma^2}\log(1/3\delta)}{2\max(\overline{\Delta}_{\min}^h, \epsilon)^2} = \frac{\hl{\sigma^2}\log(1/3\delta)}{2\max(\overline{\Delta}_h(s,a), \overline{\Delta}_{\min}^h, \epsilon)^2}.
\end{align*}
To prove the second statement, note that the visitation counts $n_h^\tau(s,a)$ form a feasible flow. Therefore, given local lower bounds $\underline{c}_h(s,a)$ for each state-action-stage triplet $(s,a,h)$, the minimum expected stopping when satisfying all such lower bounds is exactly the value of the minimum flow $\varphi^\star(\underline{c})$.
\end{proof}

\subsection{Worst-case lower bound}\label{app:LB_minimax}

In order to derive a worst-case lower bound, we build a deterministic variant of the hard MDP instance introduced by \cite{Omar21LB} for the time-inhomogeneous stochastic setting. A major complication is that stochasticity plays a crucial role for obtaining the right minimax dependence on the horizon $H$ in the latter context. Here we present a different analysis where we shall achieve the optimal dependence by leveraging the theory of minimum flows.

An example of our hard instance is shown in Figure \ref{fig:hard-instance}. Fix some $S\geq 2$, $A \geq 2$, and $H \geq 3\log_2(S)$. As common in existing worst-case lower bounds for MDPs \citep{BanditBook,Omar21LB}, we arrange $S-1$ states in a full binary tree. As such, we will assume that $S-1 = \sum_{i=0}^{d-1} 2^i = 2^d - 1$ for some integer $d\geq 1$ which represents the depth of the tree. The condition $H \geq 3\log_2(S)$ is to make sure that there are enough stages to build the binary tree. These assumptions are made only to simplify the exposition, while our result can be easily generalized to any number of states and stages by considering non-complete binary trees (see also Appendix D of \cite{Omar21LB}).

\begin{figure}
\centering
\includegraphics[scale=0.65]{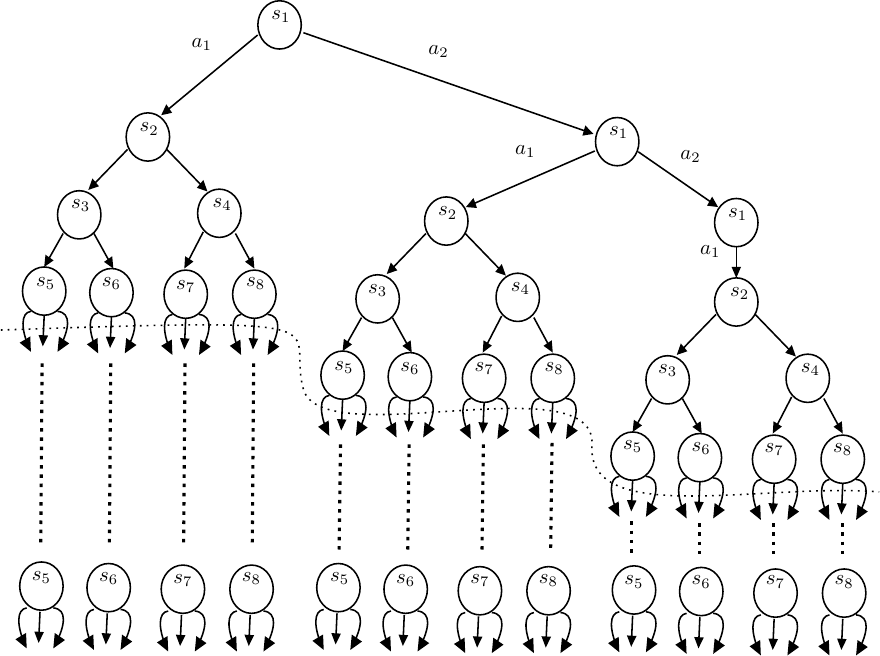}
\caption{Example of hard instance with $S=8$, $A=3$, and $\overline{H}=3$. \hl{The mean reward is zero everywhere, and its distribution is Gaussian with unit variance.}}\label{fig:hard-instance}
\end{figure}

The starting state $s_1$ has two available actions, $a_1$ and $a_2$. Action $a_1$ makes the agent transition to the root $s_2$ of the binary tree containing all other $S-1$ states. When taking action $a_2$, the agent remains in state $s_1$. Such an action is only available up to stage $\bar{H}-1$, where the value of $\bar{H}$ will be specified later. In stage $\bar{H}$, only $a_1$ is available and the agent must thus transition to state $s_2$. In other words, the agent can reach the root of the binary tree $s_2$ from stage $h=2$ (when taking $a_1$ at stage $1$) to stage $h=\bar{H}+1$ (when taking action $a_2$ for all $h = 1,\dots,\bar{H}-1$ and $a_1$ in stage $\bar{H}$). In the leaf states of the binary tree, all $A$ actions are available whose effect effect is to keep the agent in the same state until the final stage.

The intuition behind the hardness of this MDP instance is as follows. In order to learn a near-optimal policy, the agent must figure out which leaf state-action pair of the tree to reach (roughly $SA$ choices) and at which stage to reach it (exactly $\bar{H}$ choices). In graph-theoretical words, any flow leaving the initial state must pass through one of the tree leaves at one stage from $1+d$ to $\bar{H}+d$. If we ``cut'' the tree at those state-action-stage triplets (see the dashed line in Figure \ref{fig:hard-instance}), we have that the value of the flow (and thus $\tau$) can be written as the sum of visits to $\Omega(SA\bar{H})$ triplets. By constructing variants of this hard MDP where we raise or lower the reward of some of these triplets by roughly $\epsilon/\overline{H}$, we can prove (Lemma \ref{lem:lb-zero-reward} stated in Appendix \ref{app:lower-bounds}) that each of these triplets needs to be explored roughly $\Omega(\frac{\bar{H}}{\epsilon^2}\log(1/\delta))$ times. Summing them up and choosing $\bar{H} \geq \Omega(H)$ yields that $\tau$ must be at least $\Omega(\frac{SAH^2}{\epsilon^2}\log(1/\delta))$.

\begin{theorem}\label{th:worst-case-lb}
For any $S,A\geq 2$ and $H\geq 3\log_2(S)$ such that $S=2^d$ for some integer $d\geq 1$, there exists an MDP $\cM$ with $S$ states, $A$ actions, and $H$ stages such that any algorithm which is $(\epsilon,\delta)$-PAC on the class $\mathfrak{M}_1$ must satisfy
\begin{align*}
\bE_{\cM}[\tau] \geq \frac{SAH^2}{72\epsilon^2}\log(1/4\delta).
\end{align*}
\end{theorem}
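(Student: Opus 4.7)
The argument combines the graph-theoretic tools of Section~\ref{sec:prelim} with a dedicated change-of-measure lemma for the hard instance (Lemma~\ref{lem:lb-zero-reward} in Appendix~\ref{app:lower-bounds}). Throughout, I work with the zero-reward version $\cM_0$ of the MDP depicted in Figure~\ref{fig:hard-instance} and fix $\bar{H}:=\lfloor(H-d)/2\rfloor$, which satisfies $\bar{H}\geq H/3$ under the standing assumption $H\geq 3\log_2 S=3d$. Since every policy has return $0$ on $\cM_0$, the PAC requirement is vacuous there, and the lower bound will be delivered by contrasting $\cM_0$ with a family of alternative MDPs built by locally perturbing rewards at leaf state-action pairs.

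The first step is to identify, in the layered DAG $\cG(\cM_0)$, a source-sink cut $\cC\in\mathfrak{C}$ (pictured by the dashed line in Figure~\ref{fig:hard-instance}) whose set $\cE(\cC)$ of forward arcs consists of exactly the $\Omega(SA\bar{H})$ leaf-action-stage triplets $(s,a,h)$ with $s$ a leaf of the binary tree, $a\in\cA$, and $h\in\{1+d,\dots,\bar{H}+d\}$. Such a cut has no backward arcs and satisfies $|\cE(\cC)|\geq SA\bar{H}/2$, because every trajectory of $\cM_0$ descends the tree exactly once, commits to some leaf $s$ at an admissible arrival stage $h$, and plays some action $a\in\cA$ at that stage before remaining at the leaf. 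Since the expected-visit function $\eta_h(s,a):=\bE_{\cM_0}[n_h^\tau(s,a)]$ is a flow of value $\varphi(\eta)=\bE_{\cM_0}[\tau]$, flow conservation across $\cC$ yields the identity
\begin{equation*}
\bE_{\cM_0}[\tau]\;=\;\sum_{(s,a,h)\in\cE(\cC)}\bE_{\cM_0}\bigl[n_h^\tau(s,a)\bigr].
\end{equation*}

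The second step is a per-arc lower bound. For each $(s,a,h)\in\cE(\cC)$, I introduce the alternative $\cM_{s,a,h}\in\mathfrak{M}_1$ that differs from $\cM_0$ only in raising the leaf reward $r_\ell(s,a)$ from $0$ to $\alpha:=2\epsilon/(H-h+1)=\Theta(\epsilon/\bar{H})$ for every $\ell\geq h$. Calibrated this way, the optimal return in $\cM_{s,a,h}$ equals $(H-h+1)\alpha=2\epsilon$ and is attained only by policies that reach $(s,a)$ by stage $h$ and replay $a$ thereafter, while every other policy returns at most $\epsilon$ and is therefore not $\epsilon$-optimal under $\cM_{s,a,h}$. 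Inverting the change-of-measure inequality (and carefully charging the cumulative per-visit KL contribution to the single arrival arc $(s,a,h)$ of the cut), Lemma~\ref{lem:lb-zero-reward} yields
\begin{equation*}
\bE_{\cM_0}\bigl[n_h^\tau(s,a)\bigr]\;\geq\;\frac{H-h+1}{36\,\epsilon^2}\log(1/4\delta)\;\geq\;\frac{\bar{H}}{36\,\epsilon^2}\log(1/4\delta).
\end{equation*}

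Substituting this per-arc bound into the identity of Step~1 and using $|\cE(\cC)|\geq SA\bar{H}/2$ gives $\bE_{\cM_0}[\tau]\geq SA\bar{H}^2/(72\epsilon^2)\log(1/4\delta)\geq SAH^2/(72\epsilon^2)\log(1/4\delta)$, which is the desired bound. I expect the main obstacle to be the per-arc bound of Step~2: the perturbation $\alpha$ must be small enough to yield the extra $\bar{H}$ factor after inverting the change-of-measure inequality, yet large enough so that the $\epsilon$-optimal policies of $\cM_{s,a,h}$ are forced through the specific cut arc $(s,a,h)$. In particular, one must avoid charging the information to the larger union of visits $\sum_{\ell\geq h}n_\ell^\tau(s,a)$ at subsequent leaf stages, which would only give back the minimax-loose $SAH/\epsilon^2$ bound; establishing this transfer of KL mass onto the single arrival arc is precisely what Lemma~\ref{lem:lb-zero-reward} is designed to handle.
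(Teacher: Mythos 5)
Your high-level plan (a cut through the $\Theta(SA\bar{H})$ leaf--action--stage triplets, each charged $\Omega(\bar{H}\log(1/\delta)/\epsilon^2)$ visits) is exactly the intuition the paper sketches before Theorem~\ref{th:worst-case-lb}, but the two steps you use to implement it both break, and the paper's actual proof is structured precisely to avoid them. The fatal gap is the per-arc bound in your Step~2. The alternative $\cM_{s,a,h}$ perturbs the reward of $(s,a)$ at \emph{every} stage $\ell\ge h$ by $\alpha=\Theta(\epsilon/\bar{H})$, so the change-of-measure inequality lower-bounds only the \emph{aggregate} $\sum_{\ell\ge h}\bE_{\cM}[n_\ell^\tau(s,a)]$ by $\Theta(\bar{H}^2\log(1/\delta)/\epsilon^2)$; it says nothing about how that mass is distributed over the $\bar{H}+1$ perturbed arcs. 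There is no ``transfer of KL mass onto the single arrival arc'': an algorithm may place essentially all of its visits to $(s,a)$ at stages $\ell>h$ (for instance by always waiting at $s_1$ until stage $\bar{H}$ before descending the tree) and still accumulate enough information against $\cM_{s,a,h}$, and one can check that in $\cM_{s,a,h}$ the policy reaching the leaf only at stage $\bar{H}+d$ and playing $a$ is itself $\epsilon$-optimal, so correctness does not force early arrivals either. Hence $\bE_{\cM}[n_h^\tau(s,a)]\gtrsim\bar{H}\log(1/\delta)/\epsilon^2$ for a fixed arrival arc is not a consequence of Lemma~\ref{lem:lb-zero-reward} --- that lemma is deliberately stated as a bound on the segment sum only --- and it is false in general.

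A second, independent problem is the identity in your Step~1. Because the leaves are absorbing, a trajectory reaching a leaf at stage $j_0+d<\bar{H}+d$ crosses one leaf--action arc at \emph{every} subsequent stage of your window $\{1+d,\dots,\bar{H}+d\}$; your arc set therefore contains consecutive arcs of a single path, is not the forward-arc set of any source--sink cut, and satisfies $\sum_{(s,a,h)\in\cE(\cC)}\bE_{\cM}[n_h^\tau(s,a)]\geq\bE_{\cM}[\tau]$ with a possible factor-$\bar{H}$ overshoot --- the useless direction for a lower bound. The paper's proof handles both issues jointly: it keeps the segment-sum bounds of Lemma~\ref{lem:lb-zero-reward}, sums them over all $nA\bar{H}$ policies $\pi^{iaj}$, and then re-indexes the resulting triple sum by the offset $l\in\{0,\dots,\bar{H}\}$ within each segment, identifying each fixed-$l$ slice with a cut of value $\bE_{\cM}[\tau]$; the over-counting then costs only a factor $\bar{H}+1$, which cancels exactly one of the two factors of $\bar{H}+1$ produced by the segment bound and leaves the claimed $SAH^2/\epsilon^2$ rate. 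To repair your argument you would need to replace the per-arc charging by this (or an equivalent) amortization over entire segments.
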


%\blue{We note that EPRL matches the minimax lower bound for deterministic MDPs (see Theorem \ref{th:max-cover-sample-comp}) and that this bound is actually $H$ times smaller than its counterpart for the stochastic case. This comes as no surprise, as faster rates have been obtained for deterministic MDPs in other RL settings, e.g. in offline RL \cite{Yin2021TowardsIO}}. 
Now we state a result that is key in the proof of Theorem \ref{th:worst-case-lb}.

\begin{lemma}\label{lem:lb-zero-reward}
Let $\cM$ be any MDP with $r_h(s,a) = 0$ for all $s,a,h$. Then, for any $1 \leq \bar{h} \leq \bar{H} \leq H$ and any policy $\pi\in\Pi$,
\begin{align*}
\sum_{h=\bar{h}}^{\bar{H}}\mathbb{E}[n_h^\tau(s_h^\pi,a_h^\pi)] \geq \frac{(\bar{H}-\bar{h}+1)^2}{4\epsilon^2}\log(1/4\delta).
\end{align*}
\end{lemma}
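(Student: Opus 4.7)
My plan is to prove the bound via a \emph{two-sided} change-of-measure argument, constructing a pair of alternative MDPs in $\mathfrak{M}_1$ that perturb the rewards along $\pi$'s trajectory on the window $[\bar h, \bar H]$ in opposite directions. The sum $\bE_\cM\big[\sum_h n_h^\tau(s_h^\pi,a_h^\pi)\big]$ controls the entire Gaussian KL budget between $\cM$ and any such alternative, and this effectively turns the argument into a scalar hypothesis test whose information cost scales as $K^2/\epsilon^2$ with $K := \bar H - \bar h + 1$ (a Cauchy--Schwarz--type effect: distributing a cumulative reward signal of size $\Theta(\epsilon)$ across $K$ stages requires each stage to be resolved to precision $\Theta(\epsilon/K)$). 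For each policy $\pi' \in \Pi$ I introduce the \emph{match count} $m(\pi') := |\{h \in [\bar h,\bar H] : (s_h^{\pi'},a_h^{\pi'}) = (s_h^\pi,a_h^\pi)\}|$.

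\textbf{Alternatives and separating event.} For a parameter $c > 2\epsilon/K$ to be optimised at the end, I would define $\widetilde \cM_+$ and $\widetilde \cM_-$ to be identical to $\cM$ except that at each triplet $(s_h^\pi, a_h^\pi, h)$ with $h \in [\bar h, \bar H]$ the reward distribution is replaced by $\cN(+c,1)$ or $\cN(-c,1)$, respectively. Then $\widetilde V_1^{\pi'}(s_1) = c\, m(\pi')$ in $\widetilde \cM_+$ and $\widetilde V_1^{\pi'}(s_1) = -c\, m(\pi')$ in $\widetilde \cM_-$. In $\widetilde \cM_+$ the optimum is $Kc$, attained by $\pi$, so $\epsilon$-optimality of any $\pi'$ forces $m(\pi') \geq K - \epsilon/c > K/2$. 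In $\widetilde \cM_-$, assuming some policy avoids $\pi$ on the window (making the optimum $0$), $\epsilon$-optimality forces $m(\pi') \leq \epsilon/c < K/2$. With the separating event $B := \{m(\widehat\pi) > K/2\}$, the $(\epsilon,\delta)$-PAC guarantee applied on both alternatives yields $\bP_{\widetilde \cM_+}(B) \geq 1-\delta$ and $\bP_{\widetilde \cM_-}(B) \leq \delta$.

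\textbf{Change of measure and case analysis.} Applying Lemma~1 of \cite{kaufmann2016complexity} (as in the proofs of Lemmas~\ref{lem:lb-suboptimal}--\ref{lem:lb-optimal-unique}) to either pair $(\cM, \widetilde \cM_+)$ or $(\cM, \widetilde \cM_-)$, and using that the only triplets on which the reward distributions differ are the $K$ points along $\pi$'s trajectory (each contributing a Gaussian KL of $c^2/2$), I obtain
\begin{equation*}
    \frac{c^2}{2}\sum_{h=\bar h}^{\bar H} \bE_\cM[n_h^\tau(s_h^\pi,a_h^\pi)] \;\geq\; \mathrm{kl}\!\left(\bP_\cM(B),\,\bP_{\widetilde \cM_\pm}(B)\right).
\end{equation*}
Writing $p := \bP_\cM(B)$: if $p \leq 1/2$ I use the $+$-side and the convexity of $\mathrm{kl}(\cdot,q)$ to get $\mathrm{kl}(p, 1-\delta) \geq \mathrm{kl}(1/2, 1-\delta)$; if $p > 1/2$ the $-$-side gives $\mathrm{kl}(p,\delta) \geq \mathrm{kl}(1/2,\delta)$. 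Both expressions equal $\tfrac12 \log\!\big(1/(4\delta(1-\delta))\big) \geq \tfrac12 \log(1/(4\delta))$, so in either case $\sum_{h=\bar h}^{\bar H} \bE_\cM[n_h^\tau(s_h^\pi,a_h^\pi)] \geq \log(1/(4\delta))/c^2$; letting $c \downarrow 2\epsilon/K$ produces the advertised bound $K^2 \log(1/(4\delta))/(4\epsilon^2)$.

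\textbf{Main obstacle.} The crux is that one has no direct handle on $\bP_\cM(B)$: on the zero-reward MDP the PAC guarantee is vacuous since every policy is $\epsilon$-optimal, so a single alternative is insufficient. The two-sided construction is what resolves this: regardless of where $\bP_\cM(B)$ falls relative to $1/2$, one of $\widetilde \cM_+, \widetilde \cM_-$ is far enough in total variation to force the required kl lower bound. A secondary technicality is the existence of a policy with $m = 0$, needed to pin the optimum of $\widetilde \cM_-$ at $0$; this holds on the hard instance underlying Theorem~\ref{th:worst-case-lb} and, more generally, the same argument still delivers the inequality with $K$ replaced by $K - 2\, m_{\min}$.
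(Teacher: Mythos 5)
Your proposal is correct and follows essentially the same route as the paper's proof: a two-case analysis on whether $\bP_\cM(E^\pi)\leq 1/2$, with an alternative that raises the rewards along $\pi$'s window by $2\epsilon/(\bar H-\bar h+1)$ in one case and lowers them in the other, followed by the same application of Lemma~1 of \cite{kaufmann2016complexity} and the bounds $\mathrm{kl}(1/2,\delta)=\mathrm{kl}(1/2,1-\delta)\geq \tfrac12\log(1/4\delta)$. Your remark that the lowered-reward alternative implicitly requires a policy avoiding $\pi$'s trajectory on the window is a point the paper's proof also glosses over (and which holds on the hard instance where the lemma is applied), so it does not constitute a gap relative to the paper.
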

\begin{proof}
Fix some $1 \leq \bar{h} \leq \bar{H} \leq H$ and policy $\pi\in\Pi$. Define the event
\begin{align*}
E^\pi := \left\{\sum_{h=\bar{h}}^{\bar{H}} \indi{(s_h^{\hat{\pi}},a_h^{\hat{\pi}}) = (s_h^\pi,a_h^\pi) } \geq \frac{\bar{H}-\bar{h}+1}{2}\right\}.
\end{align*}
We distinguish two cases.

\paragraph{Case 1: $\bP_{\cM}(E^\pi) \leq 1/2$.}

Consider the alternative MDP $\widetilde{\cM} := (\cS, \cA, \{f_h,\widetilde{\nu}_h\}_{h\in[H]}, s_1, H)$ which is equivalent to $\cM$ except that the reward is modified as
\begin{align*}
\widetilde{\nu}_h(s,a) = 
\begin{cases}
\cN(2\epsilon/(\bar{H}-\bar{h}+1), 1) &\text{if } (s,a)=(s_h^\pi,a_h^\pi), \bar{h} \leq h \leq \bar{H},
\\ {\nu}_h(s,a) & \text{otherwise.}
\end{cases}
\end{align*}
Note that, for any policy $\pi'\in\Pi$, since the mean reward in $\widetilde{\cM}$ is non-zero only for state-action pairs visited by $\pi$ in stages from $\bar{h}$ to $\bar{H}$,
\begin{align*}
\widetilde{V}_1^{\pi'}(s_1) = \frac{2\epsilon}{\bar{H}-\bar{h}+1}\sum_{h=\bar{h}}^{\bar{H}} \indi{(s_h^{\pi'},a_h^{\pi'}) = (s_h^\pi,a_h^\pi) }.
\end{align*}
This implies that $\widetilde{V}_1^\pi(s_1) = \widetilde{V}_1^\star(s_1) = 2\epsilon$. Moreover, $\widetilde{V}_1^{\pi'}(s_1) < \epsilon$ if $\sum_{h=\bar{h}}^{\bar{H}} \indi{(s_h^{\pi'},a_h^{\pi'}) = (s_h^\pi,a_h^\pi) } < \frac{\bar{H}-\bar{h}+1}{2}$. Therefore, the event $E^\pi$ must have $\bP_{\widetilde{\cM}}(E^\pi) \geq 1-\delta$, otherwise the returned policy would not be $\epsilon$-optimal in $\widetilde{\cM}$. Thus, applying Lemma~\ref{lem:CD_mdps},
\begin{align*}
   \sum_{h=\bar{h}}^{\bar{H}}\bE[n_h^\tau(s_h^\pi,a_h^\pi)] 
   &\geq \frac{(\bar{H}-\bar{h}+1)^2}{2\epsilon^2} \mathrm{kl}(\bP_{\cM}(E^\pi), \bP_{\widetilde{\cM}}(E^\pi))
   \geq \frac{(\bar{H}-\bar{h}+1)^2}{4\epsilon^2}\log(1/4\delta),
\end{align*}
where the second inequality uses the same steps as in the proof of Case 1 of Lemma \ref{lem:lb-optimal-nonunique}.

\paragraph{Case 2: $\bP_{\cM}(E^\pi) > 1/2$.}

We use a similar construction as in Case 1, except that this time we build a new MDP by lowering the reward at pairs visited by $\pi$. Consider the alternative MDP $\widetilde{\cM} := (\cS, \cA, \{f_h,\widetilde{\nu}_h\}_{h\in[H]}, s_1, H)$ which is equivalent to $\cM$ except that the reward is modified as
\begin{align*}
\widetilde{\nu}_h(s,a) = 
\begin{cases}
\cN(-\Delta, 1) &\text{if } (s,a)=(s_h^\pi,a_h^\pi), \bar{h} \leq h \leq \bar{H},
\\ {\nu}_h(s,a) & \text{otherwise.}
\end{cases}
\end{align*}
Here $\Delta > 2\epsilon/(\bar{H}-\bar{h}+1)$. Note that $\widetilde{V}_1^\star(s_1) = 0$, which is attained by any policy not visiting state-action pairs visited by $\pi$ from $\bar{h}$ to $\bar{H}$. Moreover, for any $\pi'\in\Pi$, $\widetilde{V}_1^{\pi'}(s_1) < -\epsilon$ if $\sum_{h=\bar{h}}^{\bar{H}} \indi{(s_h^{\pi'},a_h^{\pi'}) = (s_h^\pi,a_h^\pi) } \geq \frac{\bar{H}-\bar{h}+1}{2}$ and thus $\pi'$ is not $\epsilon$-optimal for $\widetilde{\cM}$. Therefore, $\bP_{\widetilde{\cM}}(E^\pi) \leq \delta$. Thus, applying Lemma~\ref{lem:CD_mdps},
\begin{align*}
   \sum_{h=\bar{h}}^{\bar{H}}\bE[n_h^\tau(s_h^\pi,a_h^\pi)] 
   &\geq \frac{2}{\Delta^2} \mathrm{kl}(\bP_{\cM}(E^\pi), \bP_{\widetilde{\cM}}(E^\pi))
   \geq \frac{1}{\Delta^2}\log(1/4\delta),
\end{align*}
where the second inequality uses the same steps as in the proof of Case 2 of Lemma \ref{lem:lb-optimal-nonunique}. This holds for any $\Delta > 2\epsilon/(\bar{H}-\bar{h}+1)$ and the proof is concluded by taking the limit.
\end{proof}

We now prove the main theorem.

\begin{proof}[Proof of Theorem \ref{th:worst-case-lb}]
Let us consider the hard instance described above and exemplified in Figure \ref{fig:hard-instance}. Let us enumerate by $(\bar{s_i})_{i=1}^n$ the leaf states of the binary tree. For any $i\in[n],a\in[A],j\in[\bar{H}]$, let us denote by $\pi^{iaj}$ the policy which chooses action $a_1$ in $s_1$ at stage $j$ (and thus action $a_2$ in all stages before), which traverses the tree from stage $j+1$ to stage $j+d$, reaching the leaf state $\bar{s}_i$ and playing always action $a$ thereafter.

Now fix some $j\in[\bar{H}]$. Applying Lemma \ref{lem:lb-zero-reward} on the segment of stages from $j+d$ to $j+d+\bar{H}$,
\begin{align*}
\frac{1}{nA}\sum_{i=1}^n\sum_{a=1}^A \sum_{h=j+d}^{j+d+\bar{H}}\mathbb{E}[n_h^\tau(s_h^{\pi^{iaj}},a_h^{\pi^{iaj}})] \geq \frac{(\bar{H}+1)^2}{4\epsilon^2}\log(1/4\delta).
\end{align*}
Since this holds for any $j\in[\bar{H}]$,
\begin{align*}
\sum_{j=1}^{\bar{H}}\sum_{i=1}^n\sum_{a=1}^A \sum_{h=j+d}^{j+d+\bar{H}}\mathbb{E}[n_h^\tau(s_h^{\pi^{iaj}},a_h^{\pi^{iaj}})] \geq \frac{nA\bar{H}(\bar{H}+1)^2}{4\epsilon^2}\log(1/4\delta).
\end{align*}
Now note that the lefthand side can be equivalently written as
\begin{align*}
\sum_{j=1}^{\bar{H}}\sum_{i=1}^n\sum_{a=1}^A \sum_{h=j+d}^{j+d+\bar{H}}\mathbb{E}[n_h^\tau(s_h^{\pi^{iaj}},a_h^{\pi^{iaj}})] = \sum_{l=0}^{\bar{H}} \sum_{j=1}^{\bar{H}}\sum_{i=1}^n\sum_{a=1}^A \mathbb{E}[n_{j+d+l}^\tau(s_{j+d+l}^{\pi^{iaj}},a_{j+d+l}^{\pi^{iaj}})].
\end{align*}
Each element of the outer sum over $l=0,\dots,\bar{H}$ is exactly equal to $\bE[\tau]$ since it is the value of a cut at depth $l$ below each unrolled binary tree (e.g., the one shown in Figure \ref{fig:hard-instance} is for $l=0$). Therefore,
\begin{align*}
\bE[\tau] \geq \frac{nA\bar{H}(\bar{H}+1)}{4\epsilon^2}\log(1/4\delta).
\end{align*}
Now note that $n = 2^{d-1} = S/2$. Then, we only need to choose $\bar{H}$. Clearly, we need that $2\bar{H}+d \leq H$ to have all the segments above within the horizon, i.e., $H \geq 2\bar{H} + \log_2(S)$. If we choose $\bar{H} = H/3$, then $H \geq 3\log_2(s)$. With this choice we get
\begin{align*}
\bE[\tau] \geq \frac{SAH^2}{72\epsilon^2}\log(1/4\delta).
\end{align*}
\end{proof}

%!TEX root = ../main.tex 

\section{Sample Complexity Bounds (Proofs of Section \ref{sec:algorithms})}

\subsection{Good event}\label{app:good-event}

We consider the following concentration event
\begin{align*}
\cG := \left\{ \forall t \in \mathbb{N}, \forall h\in[H], s\in\cS_h,a\in\cA_h(s) : |\hat{r}_h^{t}(s,a) - r_h(s,a)| \leq b_h^t(s,a) \right\},
\end{align*}
where we recall that the bonuses $b_h^t(s,a)$ are defined in \eqref{eq:bonus}.
\begin{lemma}\label{lem:good-event-whp}
The good event $\cG$ holds with probability at least $1-\delta$.
\end{lemma}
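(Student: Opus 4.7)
The plan is to establish, for each reachable triplet $(s,a,h)$, a time-uniform Hoeffding-type concentration inequality on the sequence of rewards observed at $(s,a,h)$, and then union-bound over the $N$ such triplets. First I would fix $(s,a,h)$ and let $X_1,X_2,\dots$ denote the successive rewards received when $(s,a)$ is visited at stage $h$, with $\mathcal{F}_k$ the $\sigma$-algebra generated by the algorithm's history just before the $k$-th such visit. Because rewards lie in $[0,1]$ and $\mathbb{E}[X_k\mid\mathcal{F}_k] = r_h(s,a)$, Hoeffding's lemma makes each centered reward $X_k-r_h(s,a)$ conditionally $\tfrac{1}{4}$-subgaussian. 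Consequently, for every $\lambda\in\mathbb{R}$, the process
\[
M_n(\lambda) \;:=\; \exp\!\left(\lambda \sum_{k=1}^n (X_k - r_h(s,a)) - \frac{n\lambda^2}{8}\right)
\]
is a non-negative $(\mathcal{F}_n)$-supermartingale with $\mathbb{E}[M_0(\lambda)]=1$.

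Next, to obtain a deviation bound valid uniformly over $n$, I would apply the method of mixtures: integrate $M_n(\lambda)$ against a centered Gaussian prior $\rho(d\lambda)$ on $\lambda$, whose variance is tuned so that the resulting non-negative supermartingale $\widetilde M_n := \int M_n(\lambda)\,\rho(d\lambda)$ (with $\mathbb{E}[\widetilde M_0]=1$), combined with Ville's maximal inequality $\mathbb{P}(\exists n\geq 0:\widetilde M_n\geq N/\delta)\leq \delta/N$, produces precisely the threshold appearing in $\beta$. A standard Gaussian-integral computation then yields the bound: with probability at least $1-\delta/N$,
\[
\forall n\geq 1:\; \left|\tfrac{1}{n}\sum_{k=1}^n X_k - r_h(s,a)\right| \;\leq\; \sqrt{\tfrac{\beta(n,\delta)}{n}}, \quad \beta(n,\delta)=\tfrac{1}{2}\log\tfrac{e(n+1)N}{\delta}.
\]

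Whenever $n_h^t(s,a)=n\geq 1$, by definition $\hat r_h^t(s,a)=\tfrac{1}{n}\sum_{k=1}^n X_k$, so the previous inequality is exactly $|\hat r_h^t(s,a) - r_h(s,a)|\leq b_h^t(s,a)$. The edge case $n_h^t(s,a)=0$ is trivial: the $\wedge\,1$ clipping in \eqref{eq:bonus} gives $b_h^t(s,a)=1$, while the convention $\hat r_h^0(s,a)=0$ together with $r_h(s,a)\in[0,1]$ forces $|\hat r_h^t(s,a)-r_h(s,a)|\leq 1$ automatically. A union bound over the $N=\sum_{h,s\in\cS_h}|\cA_h(s)|$ reachable triplets then yields $\mathbb{P}(\cG)\geq 1-\delta$.

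The only nontrivial step is the method-of-mixtures computation matching the exact form of $\beta$; this is a standard but slightly finicky calculation requiring the Gaussian prior variance to be chosen so that both the multiplicative $1/2$ and the $\log(n+1)$ term in $\beta$ come out correctly. An alternative route via a peeling argument over geometrically growing windows $[2^k,2^{k+1})$, combined with Hoeffding's maximal inequality and a union bound over $k$, yields a bound of the same qualitative form up to constants and could be used instead if one prefers to avoid mixtures.
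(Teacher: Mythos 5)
Your overall architecture --- a time-uniform deviation inequality for the reward sequence at each reachable triplet, a union bound over the $N$ triplets, and the observation that the $\wedge\,1$ clipping makes the case $n_h^t(s,a)=0$ trivial --- is exactly the paper's. The difference is the concentration tool: the paper does not use a subgaussian Gaussian-mixture bound, but instead invokes a time-uniform inequality on $\mathrm{kl}(\hat r_h^t(s,a), r_h(s,a))$ for $[0,1]$-valued rewards (Proposition 2 of Jonsson et al., 2020) and converts it into an absolute-deviation bound via Pinsker's inequality; that route produces the threshold $\beta(n,\delta)=\tfrac12\log\tfrac{e(n+1)N}{\delta}$ \emph{exactly}.

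The genuine issue is your claim that the Gaussian prior variance can be tuned so that the method of mixtures yields precisely this $\beta$: it cannot. Mixing $M_n(\lambda)$ against $\cN(0,1/c)$ gives $\widetilde M_n = \sqrt{c/(n\sigma^2+c)}\,\exp\bigl(S_n^2/(2(n\sigma^2+c))\bigr)$ with $S_n$ the centered partial sum and $\sigma^2=1/4$, and Ville's inequality then yields, for all $n$,
\begin{align*}
S_n^2 \;\le\; 2\Bigl(\tfrac{n}{4}+c\Bigr)\Bigl(\log\tfrac{N}{\delta}+\tfrac12\log\tfrac{n/4+c}{c}\Bigr).
\end{align*}
The coefficient of $\log(N/\delta)$ here is $\tfrac{n}{2}+2c>\tfrac{n}{2}$ for every $c>0$, whereas the target $n\,\beta(n,\delta)$ has coefficient exactly $\tfrac{n}{2}$; the excess $2c\log(N/\delta)$ grows without bound as $\delta\to 0$ while the $\delta$-independent slack $\tfrac{n}{2}(1+\log(n+1))$ is fixed, so no choice of $c$ recovers the stated bonus for small $n$. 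Your inequality is correct but corresponds to a bonus inflated by roughly a factor $\sqrt{(n+1)/n}$, so as written it does not establish the event $\cG$ with the paper's $b_h^t$ as defined in \eqref{eq:bonus}. The fix is either to use the kl-based maximal inequality plus Pinsker, as the paper does, or to enlarge $\beta$ slightly and propagate the constant through the rest of the analysis; your peeling alternative carries the same caveat, as you yourself note it only matches ``up to constants.''
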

\begin{proof}
  \hl{The proof easily follows from Hoeffding's inequality for sub-Gaussian distributions and a union bound:
  \begin{align*}
    \bP(\neg \cG) &\leq \bP\left(\exists t\in\mathbb{N}, h\in[H],s\in\cS_h,a\in\cA_h(s) : |\hat{r}_h^{t}(s,a) - r_h(s,a)| > b_h^t(s,a) \right)
    \\ &\leq \sum_{h\in[H]}\sum_{s\in\cS_h}\sum_{a\in\cA_h(s)} \bP\left(\exists t\in\mathbb{N} : |\hat{r}_h^{t}(s,a) - r_h(s,a)| > b_h^t(s,a) \right)
    \\ &\leq \sum_{h\in[H]}\sum_{s\in\cS_h}\sum_{a\in\cA_h(s)} \bP\left(\exists n\in\mathbb{N}^* : |\hat{r}_{h,n}(s,a) - r_h(s,a)| > \sqrt{\frac{2\sigma^2\log\left(\frac{4n^2N}{\delta}\right)}{n}} \right)
    \\ & \leq N \sum_{n=1}^\infty \frac{\delta }{2Nn^2} \leq \delta,
  \end{align*}
  where $\hat{r}_{h,n}(s,a)$ denotes the mean reward estimate after $n$ samples at $(s,a,h)$.}
% Let $\mathrm{kl}(x,y)$ (resp. $\mathrm{TV}(x,y)$) denote the Kullback-Leibler divergence (resp. the total variation distance) between two Bernoulli distributions with means $x$ and $y$.
% By applying Pinsker's inequality followed by Proposition 2 of \cite{jonsson2020planning} and a union bound, we have that
% \begin{align*}
% &\bP\left(\exists t\in\mathbb{N}, h\in[H],s\in\cS_h,a\in\cA_h(s) : 2 |\hat{r}_h^{t}(s,a) - r_h(s,a)|^2 > \frac{\log(N/\delta) + \log\left(e(n_h^t(s,a) + 1)\right)}{n_h^t(s,a)} \right) 
% \\ & \quad = \bP\left(\exists t\in\mathbb{N}, h\in[H],s\in\cS_h,a\in\cA_h(s): 2 \mathrm{TV}(\hat{r}_h^{t}(s,a),r_h(s,a))^2 > \frac{\log(N/\delta) + \log\left(e(n_h^t(s,a) + 1)\right)}{n_h^t(s,a)} \right) 
% \\ & \quad \leq \bP\left(\exists t\in\mathbb{N}, h\in[H],s\in\cS_h,a\in\cA_h(s) : \mathrm{kl}(\hat{r}_h^{t}(s,a), r_h(s,a)) > \frac{\log(N/\delta) + \log\left(e(n_h^t(s,a) + 1)\right)}{n_h^t(s,a)} \right) \leq \delta.
% \end{align*}
% The proof is concluded by noting that $|\hat{r}_h^{t}(s,a) - r_h(s,a)| \leq 1$ holds almost surely, which justifies the clipping of the bonuses.
\end{proof}

\subsection{Properties of Algorithm \ref{alg:elimination-alg}}

\subsubsection{Correctness (Proof of Theorem \ref{th:pac})}\label{app:correctness}

For a given subset of policies $\Pi'\subseteq \Pi$, we define $\cS_h(\Pi') := \{s\in\cS_h \mid \exists \pi\in\Pi' : s_h^\pi=s\}$ as the set of states which are visited by some policy in $\Pi'$ at stage $h$. Thus, $\cS_h = \cS_h(\Pi)$ is the set of all states reachable at stage $h$, while $\cS_h(\Pi^\star)$ is the subset of states visited by optimal policies.

\begin{lemma}\label{lem:opt-actions-not-eliminated}
Under event $\cG$, for any $t\in\N$, $h\in[H]$, $s\in\cS_h(\Pi^\star)$, and $a\in\argmax_{a\in\cA_h(s)}Q_h^\star(s,a)$, we have $a\in\cA^{t}_h(s)$, i.e., $a$ is never eliminated.
\end{lemma}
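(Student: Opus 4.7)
The plan is to proceed by induction on $t$, using the validity of the confidence intervals on the good event $\cG$ together with the fact that the optimal return $V_1^\star(s_1)$ is attainable by a policy traversing the optimal state-action pair $(s,a,h)$.

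For the base case $t=0$, all actions are initially active so the claim holds trivially. For the inductive step, assume that at the end of episode $t-1$ every $(s',h')$ with $s' \in \cS_{h'}(\Pi^\star)$ still has its full set of optimal actions contained in $\cA_{h'}^{t-1}(s')$. Fix any $(s,a,h)$ with $s \in \cS_h(\Pi^\star)$ and $a \in \argmax_{a'\in\cA_h(s)} Q_h^\star(s,a')$. I first construct a witness optimal policy $\pi^\star$ such that $\pi^\star \in \Pi_{s,a,h} \cap \Pi^{t-1}$. Because $s \in \cS_h(\Pi^\star)$, there exists some optimal policy visiting $s$ at stage $h$; we modify it so that it plays $a$ at $(s,h)$, which is still optimal since $a\in\argmax Q_h^\star(s,\cdot)$. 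For every stage $\ell\in[H]$, the state $s_\ell^{\pi^\star}$ belongs to $\cS_\ell(\Pi^\star)$ and $\pi^\star_\ell(s_\ell^{\pi^\star})$ is an optimal action there, so the inductive hypothesis yields $\pi^\star_\ell(s_\ell^{\pi^\star}) \in \cA_\ell^{t-1}(s_\ell^{\pi^\star})$; hence $\pi^\star \in \Pi^{t-1}$.

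Next I use event $\cG$: the bound $|\hat{r}_h^t(s',a') - r_h(s',a')| \leq b_h^t(s',a')$ for every triplet immediately gives, by telescoping along any policy $\pi$, the confidence intervals
\begin{align*}
\underline{V}_1^{t,\pi}(s_1) \;\leq\; V_1^\pi(s_1) \;\leq\; \overline{V}_1^{t,\pi}(s_1).
\end{align*}
Applied to the witness $\pi^\star$ on the left-hand side of the elimination rule and to an arbitrary $\pi\in\Pi$ on the right-hand side, this yields
\begin{align*}
\max_{\pi\in\Pi_{s,a,h}\cap\Pi^{t-1}} \overline{V}_1^{t,\pi}(s_1) \;\geq\; \overline{V}_1^{t,\pi^\star}(s_1) \;\geq\; V_1^{\pi^\star}(s_1) \;=\; V_1^\star(s_1) \;\geq\; \max_{\pi\in\Pi} V_1^\pi(s_1) \;\geq\; \max_{\pi\in\Pi} \underline{V}_1^{t,\pi}(s_1),
\end{align*}
so the elimination condition fails and $a \in \cA_h^t(s)$, completing the induction.

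The only delicate point is the induction setup: one must verify that the witness optimal policy $\pi^\star$ actually lies in $\Pi^{t-1}$. This is why we propagate the stronger invariant that all optimal actions at \emph{every} state in $\cS_{h'}(\Pi^\star)$ are active, not merely at the particular $(s,h)$ under consideration; otherwise the argument would not close, since $\pi^\star$ touches states and actions at all stages $\ell\in[H]$. Once this invariant is in place, the rest of the proof reduces to the standard optimism/pessimism inequality granted by $\cG$.
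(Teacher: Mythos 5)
Your proof is correct and follows essentially the same route as the paper's: induction on $t$ with the invariant that all Bellman-optimal actions at states in $\cS_{h'}(\Pi^\star)$ remain active, a witness optimal policy in $\Pi_{s,a,h}\cap\Pi^{t-1}$, and the chain $\overline{V}_1^{t,\pi^\star}(s_1)\geq V_1^\star(s_1)\geq \max_{\pi}\underline{V}_1^{t,\pi}(s_1)$ from event $\cG$. Your explicit remark that the invariant must be propagated over \emph{all} optimal states simultaneously is exactly the (implicit) content of the paper's inductive hypothesis, so there is no substantive difference.
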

\begin{proof}
Take any $h\in[H],s\in\cS_h(\Pi^\star),a\in\argmax_{a\in\cA_h(s)}Q_h^\star(s,a)$. Let us prove the result by induction. It clearly holds for $t=0$ since the sets of active actions are initialized with the full sets of actions. Suppose it holds for $t-1$ with $t\geq 1$. Since this implies that $a\in\cA_h^{t-1}(s)$, it is enough to show that
\begin{align*}
 \max_{\pi\in\Pi_{s,a,h} \cap \Pi^{t-1}} \overline{V}_{1}^{t,\pi}(s_1) \geq \max_{\pi\in\Pi} \underline{V}_{1}^{t,\pi}(s_1)
\end{align*}
to guarantee that $a\in\cA_h^t(s)$. Then, for some optimal policy $\pi^\star\in\Pi^\star$,
\begin{align*}
\max_{\pi\in\Pi_{s,a,h} \cap \Pi^{t-1}} \overline{V}_{1}^{t,\pi}(s_1)
\geq \overline{V}_{1}^{t,\pi^\star}(s_1)
 \geq V_1^{\pi^\star}(s_1)
 = \max_{\pi\in\Pi} V_1^{\pi}(s_1) \geq \max_{\pi\in\Pi} \underline{V}_1^{t,\pi}(s_1),
%\geq \max_{\pi\in\Pi^{t-1}} \underline{V}_{1}^{t,\pi}(s_1),
\end{align*}
where the first inequality holds since there exists an optimal policy that visits $(s,a)$ at stage $h$ whose actions (at all visited states) are active by the inductive hypothesis, while the second and the third one are due to event $\cG$. This concludes the proof.
\end{proof}

\begin{lemma}\label{lem:correct-on-event-E}
Under event $\cG$, if the algorithm stops at the end of time $\tau \geq 1$ and returns a policy $\widehat{\pi}$, then $V_1^{\widehat{\pi}}(s_1) \geq V_1^{\star}(s_1) - \epsilon$.
\end{lemma}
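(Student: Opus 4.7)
My plan is to prove the lemma by case analysis on which of the two stopping rules triggered at time $\tau$, using Lemma~\ref{lem:opt-actions-not-eliminated} as the key structural input. That lemma tells us that on event $\cG$, for every $t$ there exists an optimal policy $\pi^\star \in \Pi^t$: just pick any $\pi^\star \in \Pi^\star$, and at each stage $h$ it visits some $s\in\cS_h(\Pi^\star)$ where the action $\pi^\star_h(s)$ is optimal, hence active by the lemma. In particular $\Pi^t \cap \Pi^\star \neq \emptyset$, and by optimism under $\cG$ we have $\max_{\pi\in\Pi^t}\overline{V}_1^{t,\pi}(s_1) \geq \overline{V}_1^{t,\pi^\star}(s_1) \geq V_1^{\pi^\star}(s_1) = V_1^\star(s_1)$.

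\textbf{Case 1 (diameter stopping rule).} Assume $\max_{\pi\in\Pi^{\tau}} \bigl(\overline{V}_1^{\tau,\pi}(s_1) - \underline{V}_1^{\tau,\pi}(s_1)\bigr) \leq \epsilon$. Since $\widehat{\pi}\in\argmax_{\pi\in\Pi^\tau}\overline{V}_1^{\tau,\pi}(s_1)$, the observation above yields $\overline{V}_1^{\tau,\widehat{\pi}}(s_1) \geq V_1^\star(s_1)$. On $\cG$, pessimism gives $\underline{V}_1^{\tau,\widehat{\pi}}(s_1) \leq V_1^{\widehat{\pi}}(s_1)$. Chaining these with the diameter bound,
\[
V_1^{\widehat{\pi}}(s_1) \;\geq\; \underline{V}_1^{\tau,\widehat{\pi}}(s_1) \;\geq\; \overline{V}_1^{\tau,\widehat{\pi}}(s_1) - \epsilon \;\geq\; V_1^\star(s_1) - \epsilon.
\]

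\textbf{Case 2 (singleton stopping rule).} Assume $|\cA^{\tau}_h(s)| \leq 1$ for every $h\in[H]$ and $s\in\cS_h$. The key claim is that $\widehat{\pi}$ traces out an optimal trajectory, i.e., $s_h^{\widehat{\pi}} \in \cS_h(\Pi^\star)$ and $\widehat{\pi}_h(s_h^{\widehat{\pi}}) \in \argmax_{a\in\cA_h(s_h^{\widehat{\pi}})} Q_h^\star(s_h^{\widehat{\pi}},a)$ for all $h$. I would prove this by induction on $h$: the base case $s_1\in\cS_1(\Pi^\star)$ is immediate, and given $s_h^{\widehat{\pi}} \in \cS_h(\Pi^\star)$, Lemma~\ref{lem:opt-actions-not-eliminated} guarantees that some optimal action lies in $\cA_h^\tau(s_h^{\widehat{\pi}})$; combined with the singleton condition this set reduces to a single optimal action, which $\widehat{\pi}$ must play since $\widehat{\pi}\in\Pi^\tau$; hence $s_{h+1}^{\widehat{\pi}} = f_h(s_h^{\widehat{\pi}}, \widehat{\pi}_h(s_h^{\widehat{\pi}})) \in \cS_{h+1}(\Pi^\star)$, closing the induction. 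It then follows that $V_1^{\widehat{\pi}}(s_1) = V_1^\star(s_1) \geq V_1^\star(s_1)-\epsilon$.

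\textbf{Remarks and pitfalls.} The proof is essentially a direct check, not requiring any new ideas beyond Lemma~\ref{lem:opt-actions-not-eliminated} and the validity of the confidence intervals. The only subtlety is the induction in Case~2, where one must be careful that ``active'' at a given state does not force that state to be reachable by some active policy, and that the convention $\max_{\pi\in\Pi_{s,a,h}\cap\Pi^{t-1}}\overline{V}_1^{t,\pi}(s_1) = -\infty$ for empty intersections does not accidentally eliminate optimal actions at reachable-by-optimal states; but Lemma~\ref{lem:opt-actions-not-eliminated} already handles this and guarantees the induction step goes through.
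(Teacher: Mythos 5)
Your proof is correct and follows essentially the same route as the paper: Case 1 is the paper's chain of inequalities (optimism on an always-active optimal policy, the argmax recommendation, pessimism, and the diameter stopping condition) written in the reverse direction, and Case 2 spells out as an explicit induction the paper's one-line observation that Lemma~\ref{lem:opt-actions-not-eliminated} plus the singleton condition forces $\widehat{\pi}$ to play only optimal actions along its own trajectory. No gaps; the extra detail in Case 2 is a harmless elaboration of the paper's argument.
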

\begin{proof}
We have two possible cases. First, suppose the algorithm stops with the first stopping rule. Under event $\cG$, we know that some optimal policy $\pi^\star\in\Pi^\star$ is always active, hence $\pi^\star \in \Pi^{\tau}$. Then,
\begin{align*}
V_1^{\star}(s_1) - V_1^{\widehat{\pi}}(s_1)
 = V_1^{\pi^\star}(s_1) - V_1^{\widehat{\pi}}(s_1)
  &\leq \overline{V}_1^{\pi^\star, \tau}(s_1) - \underline{V}_1^{\widehat{\pi},\tau}(s_1)
   \\ &\leq \max_{\pi\in\Pi^{\tau}} \overline{V}_1^{\pi, \tau}(s_1) - \underline{V}_1^{\widehat{\pi},\tau}(s_1)
   \\ &= \overline{V}_1^{\widehat{\pi}, \tau}(s_1) - \underline{V}_1^{\widehat{\pi},\tau}(s_1)
   \\ &\leq \max_{\pi\in\Pi^{\tau}} \left(\overline{V}_1^{{\pi}, \tau}(s_1) - \underline{V}_1^{{\pi},\tau}(s_1) \right) \leq \epsilon,
\end{align*}
where the first inequality holds by event $\cG$, the second inequality holds since $\pi^\star \in \Pi^{\tau}$, the equality is by definition of the recommendation rule, and the last inequality is due to the stopping rule.

In the second case, if the algorithm stops with the second stopping rule, then Lemma \ref{lem:opt-actions-not-eliminated} ensures that for all states visited by some optimal policy exactly the (necessarily unique) optimal action is left active. Therefore, the recommended policy plays only optimal actions in states that are visited by an optimal policy, which implies that the policy itself is optimal.
\end{proof}

\begin{proof}[Proof of Theorem \ref{th:pac}]
This is a simple combination of Lemma \ref{lem:correct-on-event-E}, which shows that the algorithm is $\epsilon$-correct on event $\cG$, and Lemma \ref{lem:good-event-whp}, which guarantees that $\cG$ holds with probability at least $1-\delta$.
\end{proof}

\subsubsection{Diameter vs Gaps (Proof of Lemma \ref{lem:diam-vs-gap})}

We prove Lemma \ref{lem:diam-vs-gap} stated in Section \ref{sec:algorithms}, an important result for Algorithm \ref{alg:elimination-alg} which relates the diameter of active policies to the sub-optimality gaps of non-eliminated actions. Here we prove that the result holds under the good event $\cG$, which in turns holds with probability at least $1-\delta$.

\begin{proof}[Proof of Lemma \ref{lem:diam-vs-gap}]
Suppose the good event $\cG$ holds and let $t$ be any episode at the end of which the algorithm did not stop. We derive separately a gap-dependent bound for sub-optimal state-action pairs and an $\epsilon$-dependent bound for all state-action pairs. 

\paragraph{Gap-dependent bound (suboptimal state-action pairs)}

Let $h\in[H], s\in\cS_h, a\in\cA_h(s)$ be such that $\overline{\Delta}_h(s,a) > 0$ (i.e., this is a sub-optimal state-action pair at stage $h$) and $a\in\cA^{t}_h(s)$ (i.e., the action has not been eliminated at the end of episode $t$). Then, by definition of the elimination rule,
\begin{align*}
\max_{\pi\in\Pi_{s,a,h}\cap \Pi^{t-1}} \overline{V}_{1}^{t,\pi}(s_1) \geq \max_{\pi\in\Pi} \underline{V}_{1}^{t,\pi}(s_1).
\end{align*}
Using the good event, this implies that, for any optimal policy $\pi^\star$,
\begin{align*}
\max_{\pi\in\Pi_{s,a,h}\cap \Pi^{t-1}} \left({V}_{1}^{\pi}(s_1) + 2\sum_{h=1}^H b_h^t(s_h^\pi,a_h^\pi)\right)
 &\geq \max_{\pi\in\Pi} \left( {V}_{1}^{\pi}(s_1) - 2\sum_{h=1}^H b_h^t(s_h^\pi,a_h^\pi)\right)
 \\ &\geq {V}_{1}^{\star}(s_1) - 2\sum_{h=1}^H b_h^t(s_h^{\pi^\star},a_h^{\pi^\star}).
\end{align*}
Thus,
\begin{align*}
2\max_{\pi\in\Pi^{t-1}}\sum_{h=1}^H b_h^t(s_h^\pi,a_h^\pi) + 2\sum_{h=1}^H b_h^t(s_h^{\pi^\star},a_h^{\pi^\star})
 \geq {V}_{1}^{\star}(s_1) - \max_{\pi\in\Pi_{s,a,h}}{V}_{1}^{\pi}(s_1) = \overline{\Delta}_h(s,a).
\end{align*}
Since all state-action pairs along each optimal trajectory are active under the good event (Lemma \ref{lem:opt-actions-not-eliminated}), $\sum_{h=1}^H b_h^t(s_h^{\pi^\star},a_h^{\pi^\star}) \leq \max_{\pi\in\Pi^{t-1}}\sum_{h=1}^H b_h^t(s_h^\pi,a_h^\pi)$. Therefore, expanding the definition of the bonuses,
\begin{align*}
 \overline{\Delta}_h(s,a) \leq
  4\max_{\pi\in\Pi^{t-1}}\sum_{h=1}^H b_h^t(s_h^\pi,a_h^\pi).
\end{align*}

\paragraph{$\epsilon$-dependent bound}

If the algorithm did not stop at the end of episode $t$, by the first stopping rule,
\begin{align*}
\frac{\epsilon}{2} 
\leq \max_{\pi\in\Pi^{t}} \sum_{h=1}^H b^{t}_h(s_h^\pi,a_h^\pi)
\leq \max_{\pi\in\Pi^{t-1}} \sum_{h=1}^H b^{t}_h(s_h^\pi,a_h^\pi).
\end{align*}

\paragraph{Gap-dependent bound (unique optimal trajectory)}

Finally, let us consider the special case where the optimal trajectory $(s_h^\star,a_h^\star)_{h\in[H]}$ is unique.
The derivation above holds for any state-action pair not belonging to an optimal trajectory (i.e., with positive gap). In this case, it can be trivially extended to optimal state-action pairs. Since the algorithm did not stop at the end of episode $t$, it must be that at least some sub-optimal state-action pair is active (otherwise there would be at most one active action in each state and the stopping condition would be verified). That is, there exist $h\in[H], s\in\cS_h, a\in\cA_h(s)$ with $\overline{\Delta}_h(s,a) \geq \overline{\Delta}_{\min} > 0$ such that $a\in\cA_h^t(s)$. Using the same derivation as above, we obtain
\begin{align*}
 \overline{\Delta}_{\min} \leq
  4\max_{\pi\in\Pi^{t-1}}\sum_{h=1}^H b_h^t(s_h^\pi,a_h^\pi).
\end{align*}
\end{proof}

\subsection{Maximum-coverage algorithm (Proof of Theorem \ref{th:max-cover-sample-comp})}\label{app:max-cover}

\subsubsection{Static maximum-coverage sampling} 
For the purpose of the analysis, we introduce a variant of the maximum-coverage sampling rule, that we refer to as {\it static maximum-coverage}. As we will see, static maximum-coverage and (standard) maximum-coverage are very related, in the sense that in each period $k$ there exists a function $C^k: 2^\Pi \rightarrow [0,\infty)$, called a coverage function, such that static-maximum coverage directly maximizes the function $C^k$, while maximum-coverage greedily builds a set of policies that maximizes it. The pseudo-code of static maximum-coverage is given in Algorithm \ref{alg:static-max-cov}. 

\begin{algorithm}[h]
\caption{Static maximum-coverage sampling}\label{alg:static-max-cov}
\begin{algorithmic}

\STATE \textbf{function} \textsc{StaticMaxCoverage}()
\STATE \quad Let $k_t \leftarrow \min_{h\in[H],s\in\cS_h,a\in\cA_h^{t-1}(s)} n_h^{t-1}(s,a) + 1$ and $\bar{t}_{k_t} \leftarrow \inf_{l\in\mathbb{N}}\{l : k_l=k_t\}$
\STATE \quad \textbf{if} $k_t > k_{t-1}$ \textbf{then}
\STATE \quad \quad Let $\underline{c}_h^{k_t}(s,a) \leftarrow \indi{a\in\cA_h^{\bar{t}_{k_t}-1}(s), n_h^{\bar{t}_{k_t}-1}(s,a) < k_t}$
\STATE \quad \quad Compute $\eta^{k_t}$, an integer minimum flow on $\cG(\cM)$ with lower bounds $\underline{c}^{k_t}$
\STATE \quad \quad Extract a minimum policy cover $\Pi_{\mathrm{cover}}^{t}$ from $\eta^{k_t}$ using Algorithm \ref{alg:extract-cover}
\STATE \quad \textbf{end if}
\STATE \quad \textbf{if} $t\ \mathrm{mod}\ 2 = 1$ \textbf{then}
\STATE \quad\quad Choose $\pi^t$ arbitrarily from $\Pi_{\mathrm{cover}}^{t}$ and remove it: $\Pi_{\mathrm{cover}}^{t+1} \leftarrow \Pi_{\mathrm{cover}}^{t} \setminus \{\pi^t\}$
\STATE \quad\quad \textbf{return} $\pi^t$
\STATE \quad \textbf{else}
\STATE \quad\quad Let $\Pi_{\mathrm{cover}}^{t+1} \leftarrow \Pi_{\mathrm{cover}}^{t}$
\STATE \quad \quad \textbf{return} $\pi^t \leftarrow \textsc{MaxDiameter()}$
\STATE \quad \textbf{end if}
\STATE \textbf{end function}
\STATE
\STATE \textbf{function} \textsc{MaxDiameter}()
\STATE \quad \textbf{return} $\pi^t \leftarrow \argmax_{\pi\in\Pi^{t-1}} \sum_{h=1}^H b_h^{t-1}(s_h^\pi,a_h^\pi)$
\STATE \textbf{end function}
\end{algorithmic}
\end{algorithm}

In words, static maximum-coverage precomputes a set of policies of minimum size, which we call a \emph{minimum policy cover}\footnote{A similar concept of ``policy cover'' was considered by \cite{agarwal2020pc}, where the authors propose an algorithm that incrementally builds a set of policies exploring the whole state-action space in the context of policy gradient methods.}, that guarantees at least one visit to all active under-sampled $(s,a,h)$, i.e., all those such that $a\in\cA_h^{\bar{t}_{k_t}-1}(s)$ and $n_h^{\bar{t}_{k_t}-1}(s,a) < k_t$. It is easy to see that (see also Appendix \ref{app:min-cover}) this problem can be reduced to finding a minimum flow with lower bound function $\underline{c}_h^{k_t}(s,a) := \indi{a\in\cA_h^{\bar{t}_{k_t}-1}(s), n_h^{\bar{t}_{k_t}-1}(s,a) < k_t}$.
Stated otherwise, we require the resulting flow to have a value of at least $1$ for every active $(s,a,h)$ that has less than $k$ visits. Once a minimum (integer) flow $\eta^k$ has been computed, a policy cover can be easily extracted using Algorithm \ref{alg:extract-cover}. Finally, once a minimum policy cover $\Pi_{\mathrm{cover}}^{\bar{t}_k}$ for period $k$ has been extracted, our sampling rule simply switches between playing a policy in this set to ensure good coverage of the whole MDP and playing the policy prescribed by the maximum-diameter sampling rule. 

\subsubsection{Main Theorem}

We state the following theorem, which simultaneously upper bound the sample complexity of max-coverage sampling and its static version.

\begin{theorem}[Formal statement of Theorem \ref{th:max-cover-sample-comp}]\label{th:formal}
With probability at least $1-\delta$, the sample complexity of Algorithm \ref{alg:elimination-alg} combined with either the static maximum-coverage (in which case $C_H := 1$) or the  maximum-coverage (in which case $C_H := \log(H) + 1$) sampling rule is bounded by
\begin{align*}
\tau \leq 2C_H\left(\max_{(s,a,h)}\log\big( g_h(s,a)\big) + 1\right) \varphi^\star(g),
\end{align*}
where $g : \cE \rightarrow [0,\infty)$ is the lower bound function defined by\hl{
\begin{align*}
g_h(s,a) := \frac{32\sigma^2 H^2}{\max\left(\overline{\Delta}_h(s,a),\overline{\Delta}_{\min}, \epsilon \right)^2}  \left( \log\left(\frac{4N^3}{\delta}\right) + L_h(s,a)\right) + 2
\end{align*}
with $L_h(s,a) := 8\log\left( \frac{16\sigma H\log\left(\frac{4N^3}{\delta}\right)}{\max\left(\overline{\Delta}_h(s,a),\overline{\Delta}_{\min}, \epsilon \right)}  \right)$. Moreover, with the same probability,
\begin{align*}
  \tau \leq \frac{256\sigma^2 SAH^2}{\epsilon^2} \left( \log\left(\frac{4SAH}{\delta}\right) + 4\log\left(\frac{512\sigma^2 SAH^2 \log\left(\frac{4SAH}{\delta}\right)}{\epsilon^2}\right) \right) + 8SAH + 2.
\end{align*}}
\end{theorem}

% Theorem~\ref{th:max-cover-sample-comp} stated in the main text follows from Theorem~\ref{th:formal} by noting that 
% \[\log\left(\frac{eN^2}{\delta}\right) + \log\log\left(\frac{eN^2}{\delta}\right) \leq \frac{3}{2}\log\left(\frac{eN^2}{\delta}\right).\]
In the next sub-sections, we prove Theorem~\ref{th:formal}.

\subsubsection{Decomposition into periods}\label{sec:periods}

Recall that $k_t$, defined in Algorithm \ref{alg:elimination-alg} for both the static maximum-coverage and the maximum-coverage sampling rules, is the ``target'' number of visits at time $t$. We shall refer to the set of consecutive time steps $\{t\in\mathbb{N} : k_t=k\}$ as the $k$-th \emph{period}. This is intuitively the set of time steps where the sampling rule is trying to make all active triplets reach $k$ visits.  Let $d_k := \sum_{t=1}^\tau \indi{k_t = k}$ be the duration of the $k$-th period. Note that the period could be empty (e.g., this might happen when some under-sampled triplets are eliminated), in which case we have $d_k = 0$. The sample complexity can thus be decomposed as
\begin{align*}
\tau = \sum_{k=1}^{k_\tau} \sum_{t=1}^\tau \indi{k_t = k} = \sum_{k=1}^{k_\tau} d_k.
\end{align*}
Our goal in this section is to bound the duration of each period. In particular, while the duration of the $k$-th period can be trivially bounded by twice the size of the minimum policy cover $\Pi_{\mathrm{cover}}^{\bar{t}_k}$ for the static maximum-coverage sampling rule, we shall see that a similar bound holds also for maximum-coverage.

\paragraph{Static maximum-coverage sampling}

The following bound can be easily derived from the definition of the sampling rule.

\begin{theorem}[Period duration for static maximum-coverage]\label{th:period-duration-max-cover}
When using the static maximum-coverage sampling rule, for any non-empty period $k\in\mathbb{N}$,
\begin{align*}
{d}_k \leq 2\varphi^\star(\underline{c}^k).
\end{align*}
\end{theorem}
\begin{proof}
If $k$ is a non-empty period, there exists a time $\bar{t}_k$ at which the period starts where a minimum policy cover $\Pi_{\mathrm{cover}}^{\bar{t}_k}$ is computed. The size of this cover is exactly the value $\varphi^\star(\underline{c}^k)$ of a minimimum flow computed with lower bound function $\underline{c}_h^k(s,a) := \indi{a\in\cA_h^{\bar{t}_{k}-1}(s), n_h^{\bar{t}_{k}-1}(s,a) < k}$. The stated bound easily follows from the fact that a new policy in the cover is played every two episodes and that the period necessarily ends when all policies in the cover have been played.
\end{proof}

\paragraph{Maximum-coverage sampling}

Let $\bar{t}_k := \inf_{t\in\mathbb{N}}\{t : k_t=k\}$ be the first time step in the period as defined in Algorithm \ref{alg:elimination-alg} (which exists if the period is non-empty). We start by proving the following result, which allows us to better characterize the duration of period $k$ in terms of the first time where all active triplets at the \emph{beginning} of the period receive at least $k$ visits.

%We start by simplifying the setting for the analysis. In particular, we shall analyze a less-adaptive variant of the algorithm which does not eliminate actions within the $k$-th period. In particular, the less-adaptive sampling rule plays, at time $t$ in period $k$,
%\begin{align}\label{eq:sampling-rule-no-elim}
%\tilde{\pi}^t \leftarrow \argmax_{\pi\in\Pi} \sum_{h=1}^H \indi{a_h^\pi\in\cA_h^{\bar{t}_k-1}(s_h^\pi), n_h^{t-1}(s_h^\pi,a_h^\pi) < k}
%\end{align}
%until $\min_{h\in[H],s\in\cS_h,a\in\cA_h^{\bar{t}_k-1}(s)} n_h^{t}(s,a) \geq k$. Let $\tilde{d}_k$ be the corresponding duration of the $k$-th period. The following lemma ensures that such duration cannot be smaller than in the original fully-adaptive algorithm.

\begin{lemma}\label{lem:period-less-adaptive}
For any non-empty period $k\in\mathbb{N}$, almost surely
\begin{align*}
d_k \leq \tilde{d}_k := \inf_{t\in\mathbb{N}}\{t : \min_{h\in[H],s\in\cS_h,a\in\cA_h^{\bar{t}_k-1}(s)} n_h^{t-1}(s,a) \geq k\} - \bar{t}_k  < \infty.
\end{align*}
\end{lemma}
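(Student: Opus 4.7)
The lemma has two independent claims, $d_k \leq \tilde{d}_k$ and $\tilde{d}_k < \infty$. For the first, the plan is a monotonicity argument on the active sets. By the definition of $k_t$, period $k$ would naturally end at the first time $t$ satisfying $\min_{h,s,a \in \cA_h^{t-1}(s)} n_h^{t-1}(s,a) \geq k$, so $d_k$ is at most this natural duration (an early stop can only make it smaller). The definition of $\tilde{d}_k$ is identical except that the minimum is taken over the \emph{initial} active sets $\cA_h^{\bar{t}_k-1}(s)$ rather than the current ones $\cA_h^{t-1}(s)$. Since eliminations are monotone, $\cA_h^{t-1}(s) \subseteq \cA_h^{\bar{t}_k-1}(s)$ for every $t \geq \bar{t}_k$, so taking the minimum over a smaller set can only make it larger. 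Hence whenever the $\tilde{d}_k$-condition is met, the period-ending condition is met too, yielding the inequality.

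For $\tilde{d}_k < \infty$, the plan is a per-step progress argument. First, by the definition of $k_{\bar{t}_k} = k$, every triplet in $\cA_h^{\bar{t}_k-1}(s)$ has visit count at least $k-1$ at time $\bar{t}_k - 1$, so any such triplet that is still under-sampled at a later time $t \geq \bar{t}_k$ (i.e., with $n_h^{t-1}(s,a) < k$) must have visit count exactly $k-1$; a single additional visit therefore suffices to remove it from the under-sampled set. For the non-adaptive max-coverage rule this is immediate: the pre-computed minimum policy cover visits by construction every under-sampled triplet at least once, so $\tilde{d}_k \leq 2\varphi^\star(\underline{c}^k) < \infty$. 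For the adaptive rule, I would argue that the greedy objective is at least $1$ whenever an under-sampled triplet remains: by the convention in the elimination rule, $a \in \cA_h^{\bar{t}_k-1}(s)$ implies that some policy in $\Pi^{\bar{t}_k-1}$ visits $(s,a,h)$ at time $\bar{t}_k-1$, and since this policy also lies in $\Pi$, evaluating the greedy objective on it already yields a value of at least $1$. Consequently, at every odd step in the period the adaptive rule visits at least one under-sampled triplet, strictly decreasing their number; since there are at most $N$ under-sampled triplets to begin with, this terminates in at most $2N$ total steps, giving $\tilde{d}_k \leq 2N < \infty$.

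The main subtle point --- the part I expect to be trickiest to state cleanly --- is the greedy progress claim for the adaptive rule. It relies crucially on the argmax being taken over \emph{all} of $\Pi$ rather than only the currently active policies $\Pi^{t-1}$: otherwise an under-sampled triplet could become unreachable by all active policies partway through the period and never be revisited, breaking the finiteness argument.
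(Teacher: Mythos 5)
Your proposal is correct and follows essentially the same route as the paper: the inequality $d_k \leq \tilde{d}_k$ comes from the monotonicity $\cA_h^{t-1}(s) \subseteq \cA_h^{\bar{t}_k-1}(s)$ for $t \geq \bar{t}_k$, and finiteness comes from the fact that the sampling rule visits (and hence, since every active triplet already has $k-1$ visits at the start of the period, removes) at least one under-sampled triplet every two steps, giving $\tilde{d}_k \leq 2N$. Your extra observations — that each under-sampled triplet sits at exactly $k-1$ visits so a single visit retires it, and that the argmax must range over all of $\Pi$ for the greedy objective to stay positive — are correct and in fact make explicit steps the paper leaves implicit.
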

\begin{proof}
Since period $k$ ends when all active pairs are visited at least $k$ times,
\begin{align*}
d_k = \inf_{t\in\mathbb{N}}\{t : k_t > k\} - \bar{t}_k 
&= \inf_{t\geq \bar{t}_k}\{t : \min_{h\in[H],s\in\cS_h,a\in\cA_h^{t-1}(s)} n_h^{t-1}(s,a) \geq k\} - \bar{t}_k .
\\ &\leq \inf_{t\in\mathbb{N}}\{t : \min_{h\in[H],s\in\cS_h,a\in\cA_h^{\bar{t}_k-1}(s)} n_h^{t-1}(s,a) \geq k\} - \bar{t}_k .
\end{align*}
where the inequality holds since $\cA_h^{t-1}(s) \subseteq \cA_h^{\bar{t}_k-1}(s)$ for each $h\in[H],s\in\cS_h$, and $t\geq\bar{t}_k$. To see why $\tilde{d}_k < \infty$, note that, by definition, the sampling rule visits at least one undersampled triplet $(s,a,h)$ (i.e., such that $n_h^{t-1}(s,a) < k$) every two steps. Since there are at most $N$ triplets that need to be visited, we get that $\tilde{d}_k \leq 2N < \infty$ almost surely.
\end{proof}

\paragraph{Reduction to submodular maximization}

Let us define the set function $C^k: 2^\Pi \rightarrow [0,\infty)$ as
\begin{align*}
C^k(\Pi') := \sum_{h=1}^H\sum_{s\in\cS_h}\sum_{a\in\cA_h^{\bar{t}_k-1}(s)} \indi{n_h^{\bar{t}_k-1}(s,a) < k, \exists \pi\in\Pi' : (s_h^{\pi},a_h^{\pi})=(s,a)} \quad \forall \Pi'\subseteq \Pi.
\end{align*}
Moreover, let $\bar{\Pi}_i^{k} := \{{\pi}^{t} \mid t = \bar{t}_k,\dots, \bar{t}_k+i-1\}$ be the set containing the first $i$ policies played by the maximum-coverage sampling rule in period $k$. We note that the first policy selection strategy (the one called at odd steps) is essentially a greedy algorithm approximating the maximization of $C^k$. In fact, maximizing $C^k$ corresponds to finding a set of policies that visit all active triplets at time $\bar{t}_k-1$ that have less than $k$ visits (which, by definition of period, means that they have exactly $k-1$ visits). Instead of directly maximizing the set function $C^k$, such policy selection strategy greedily builds the set $\bar{\Pi}_i^{k}$ by adding, at each round where it is used, the policy visiting the most of these undervisited triplets. Moreover, we note that $C^k$ is a \emph{coverage function}, a kind of function which is known to be monotone submodular and for which greedy maximization is very efficient \citep{nemhauser1978analysis}. Let us prove some of its important properties.

First, we relate the maximization of $C^k$ to the computation of a minimum flow with lower bound function $\underline{c}_h^k(s,a) \leftarrow \indi{a\in\cA_h^{\bar{t}_k-1}(s), n_h^{\bar{t}_k-1}(s,a) < k}$, i.e., the same as the one used by the static maximum-coverage sampling rule. Let $N_k := \sum_{h\in[H]}\sum_{s\in\cS_h}\sum_{a\in\cA_h^{\bar{t}_k-1}(s)}\indi{n_h^{\bar{t}_k-1}(s,a) < k}$ be the total number of triplets that need to be visited in period $k$.
\begin{proposition}[Maximization vs minimum flow]\label{prop:greedy-max-vs-flow}
For each $v\geq \varphi^\star(\underline{c}^k)$,
\begin{align*}
\max_{\Pi' \subseteq \Pi : |\Pi'| \leq v} C^k(\Pi') = \max_{\Pi' \subseteq \Pi} C^k(\Pi') = N_k.
\end{align*}
\end{proposition}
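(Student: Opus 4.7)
The plan is to sandwich $\max_{\Pi' \subseteq \Pi} C^k(\Pi')$ between $N_k$ (obvious upper bound) and $N_k$ (realized by a minimum policy cover of size $\varphi^\star(\underline{c}^k)$), which simultaneously yields the first equality whenever $v \geq \varphi^\star(\underline{c}^k)$.

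First I would establish the trivial upper bound. By definition, $C^k(\Pi')$ sums, over the triplets $(s,a,h)$ with $a\in\cA_h^{\bar{t}_k-1}(s)$ and $n_h^{\bar{t}_k-1}(s,a) < k$, an indicator that is at most $1$. There are exactly $N_k$ such triplets, so $C^k(\Pi') \leq N_k$ for every $\Pi' \subseteq \Pi$, and equality holds iff every undersampled active triplet is visited by some policy in $\Pi'$.

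Next I would show this upper bound is attained by a set of size $\varphi^\star(\underline{c}^k)$, by invoking the construction from Appendix \ref{app:min-cover}. The lower bound function $\underline{c}^k$ is integer-valued (it takes values in $\{0,1\}$), and the corresponding minimum flow problem is feasible because every triplet with $\underline{c}_h^k(s,a)=1$ is reachable (since $s\in\cS_h$ and $a$ is available). Hence there exists an integer minimum flow $\eta^k$ of value $\varphi^\star(\underline{c}^k)$, and Algorithm \ref{alg:extract-cover} decomposes it into a set $\Pi_{\mathrm{cover}}$ of exactly $\varphi^\star(\underline{c}^k)$ policies such that, for every $(s,a,h)$ with $\underline{c}_h^k(s,a)=1$, at least one policy in $\Pi_{\mathrm{cover}}$ visits $(s,a)$ at stage $h$. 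By the characterization of the equality case above, $C^k(\Pi_{\mathrm{cover}}) = N_k$.

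Finally I combine these two facts. For any $v \geq \varphi^\star(\underline{c}^k)$, the set $\Pi_{\mathrm{cover}}$ satisfies $|\Pi_{\mathrm{cover}}| \leq v$ and $C^k(\Pi_{\mathrm{cover}}) = N_k$, so
\begin{align*}
N_k \;=\; C^k(\Pi_{\mathrm{cover}}) \;\leq\; \max_{\Pi' \subseteq \Pi : |\Pi'| \leq v} C^k(\Pi') \;\leq\; \max_{\Pi' \subseteq \Pi} C^k(\Pi') \;\leq\; N_k,
\end{align*}
and both inequalities must be equalities. The main obstacle is only notational: one has to be careful that the correspondence ``minimum flow $\leftrightarrow$ minimum policy cover'' produces a set of policies rather than a multiset, and that duplicated paths in the decomposition (if any) are collapsed without changing $C^k$, which is immediate since $C^k$ depends only on the underlying set.
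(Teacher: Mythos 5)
Your proof is correct and follows essentially the same route as the paper's: bound $C^k$ above by $N_k$ and then exhibit a minimum policy cover of size $\varphi^\star(\underline{c}^k)$ (extracted from an integer minimum flow) that attains this value, so the constrained and unconstrained maxima coincide for every $v\geq\varphi^\star(\underline{c}^k)$. The extra care you take about feasibility and about the cover being a set rather than a multiset is a harmless refinement of the same argument.
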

\begin{proof}
Clearly, $C^k(\Pi') \leq N_k$ for all $\Pi'\subseteq\Pi$, which is attained when all undervisited state-action-stage triplets are visited at least once. When the cardinality of $\Pi'$ can be at least $\varphi^\star(\underline{c}^k)$, we can choose $\Pi'$ to include a set of $\varphi^\star(\underline{c}^k)$ policies realizing a minimum 1-flow (i.e., a minimum policy cover as the one computed by static maximum-coverage sampling in period $k$). These, by definition, cover all undervisited triplets, and thus attain the maximal value $N_k$.
\end{proof}
\begin{proposition}[Monotonicity]
For each $\Pi' \subseteq \Pi'' \subseteq \Pi$, $C^k(\Pi') \leq C^k(\Pi'')$.
\end{proposition}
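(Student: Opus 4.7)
The statement is a routine verification that $C^k$ inherits monotonicity from its definition as a sum of indicator functions of an existential predicate. I would prove it as follows.

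The plan is to establish monotonicity pointwise inside the sum, i.e., to show that each indicator function in the definition of $C^k$ is itself monotone in $\Pi'$, and then conclude by summing non-negative monotone quantities. Concretely, fix any triplet $(s,a,h)$ with $h\in[H]$, $s\in\cS_h$, and $a\in\cA_h^{\bar{t}_k-1}(s)$. The factor $\indi{n_h^{\bar{t}_k-1}(s,a) < k}$ does not depend on the argument $\Pi'$ at all, so it suffices to analyze the factor $\indi{\exists \pi\in\Pi' : (s_h^{\pi},a_h^{\pi})=(s,a)}$.

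Suppose $\Pi'\subseteq\Pi''\subseteq\Pi$ and fix $(s,a,h)$ as above. If the predicate $\exists \pi\in\Pi' : (s_h^{\pi},a_h^{\pi})=(s,a)$ is false, then the corresponding indicator for $\Pi'$ equals $0$, so it is trivially at most the corresponding indicator for $\Pi''$ (which is non-negative). If the predicate is true for $\Pi'$, then there exists some $\pi\in\Pi'$ with $(s_h^\pi,a_h^\pi)=(s,a)$; since $\Pi'\subseteq\Pi''$, the same $\pi$ belongs to $\Pi''$, witnessing the predicate for $\Pi''$ as well. Thus in both cases, the indicator for $\Pi'$ is at most the indicator for $\Pi''$.

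Summing these pointwise inequalities over $h\in[H]$, $s\in\cS_h$, and $a\in\cA_h^{\bar{t}_k-1}(s)$ yields $C^k(\Pi')\leq C^k(\Pi'')$, as desired. There is essentially no obstacle here; the only thing to be careful about is that the active action set $\cA_h^{\bar{t}_k-1}(s)$ and the visit counts $n_h^{\bar{t}_k-1}(s,a)$ are defined with respect to the fixed reference time $\bar{t}_k-1$ and therefore do not depend on the argument $\Pi'$, so the sum itself is indexed over a fixed set and the monotonicity transfers termwise.
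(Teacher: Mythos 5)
Your proof is correct and is essentially the same argument as the paper's (which simply notes that $\Pi''$ visits at least all triplets visited by $\Pi'$); you have merely spelled out the termwise indicator comparison that the paper leaves implicit. No issues.
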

\begin{proof}
This is trivial: since $\Pi''$ contains $\Pi'$, it must visit at least all the triplets visited by $\Pi'$.
\end{proof}
\begin{proposition}[Sub-modularity]
Function $f$ is sub-modular, i.e., for every $\Pi' \subseteq \Pi'' \subseteq \Pi$ and $\bar{\pi} \in \Pi \setminus \Pi''$,
\begin{align*}
C^k(\Pi' \cup \{\bar{\pi}\}) - C^k(\Pi') \geq C^k(\Pi'' \cup \{\bar{\pi}\}) - C^k(\Pi'').
\end{align*}
\end{proposition}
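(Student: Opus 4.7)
The plan is to unfold $C^k$ term by term and show that the marginal contribution of adding any fixed policy $\bar\pi$ is a pointwise non-increasing function of the set to which it is added; summing then gives the diminishing-returns inequality characterizing submodularity. This is essentially the classical argument that any weighted coverage function is submodular, specialized to our setting where the ground set consists of the under-visited active triplets and each policy ``covers'' exactly the triplets it traverses.

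Concretely, for any $\Pi' \subseteq \Pi$ and any $\bar\pi \notin \Pi'$, I would first rewrite
\[
 C^k(\Pi' \cup \{\bar\pi\}) - C^k(\Pi') = \sum_{h=1}^H \sum_{s\in\cS_h} \sum_{a\in\cA_h^{\bar{t}_k-1}(s)} w_h(s,a)\, \indi{(s_h^{\bar\pi}, a_h^{\bar\pi}) = (s,a)}\, \indi{\forall \pi \in \Pi'\colon (s_h^{\pi}, a_h^{\pi}) \neq (s,a)},
\]
where $w_h(s,a) := \indi{n_h^{\bar{t}_k-1}(s,a) < k}$ is a non-negative weight that does not depend on the policy set. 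Indeed, a triplet $(s,a,h)$ contributes to $C^k(\Pi' \cup \{\bar\pi\}) - C^k(\Pi')$ exactly when it is active and under-visited at time $\bar{t}_k-1$, when $\bar\pi$ visits $(s,a)$ at stage $h$, and when no policy in $\Pi'$ already visits it there.

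I would then use the pointwise monotonicity of the last indicator: since $\Pi' \subseteq \Pi''$, any policy witnessing $(s_h^\pi,a_h^\pi) = (s,a)$ in $\Pi'$ also lies in $\Pi''$, so
\[
 \indi{\forall \pi \in \Pi''\colon (s_h^{\pi}, a_h^{\pi}) \neq (s,a)} \;\le\; \indi{\forall \pi \in \Pi'\colon (s_h^{\pi}, a_h^{\pi}) \neq (s,a)}.
\]
Writing the analogous expansion for $\Pi''$ in place of $\Pi'$ and comparing term by term yields $C^k(\Pi' \cup \{\bar\pi\}) - C^k(\Pi') \ge C^k(\Pi'' \cup \{\bar\pi\}) - C^k(\Pi'')$, which is exactly the required submodularity inequality.

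I do not foresee any real obstacle here: the only care needed is to keep the policy-independent weight $w_h(s,a)$ factored outside the set-dependent indicator, to note that $a \in \cA_h^{\bar{t}_k-1}(s)$ is also a policy-independent restriction (absorbed into the range of the inner sum), and to use $\bar\pi \notin \Pi''$ (hence $\bar\pi \notin \Pi'$) so that both $\Pi' \cup \{\bar\pi\}$ and $\Pi'' \cup \{\bar\pi\}$ are genuine insertions rather than no-ops.
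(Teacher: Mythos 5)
Your proof is correct and follows essentially the same route as the paper's: both expand the marginal gain $C^k(\Pi'\cup\{\bar\pi\})-C^k(\Pi')$ as a sum of non-negative indicator terms and then observe that the ``not yet covered by the set'' indicator is pointwise non-increasing as the set grows from $\Pi'$ to $\Pi''$. The only cosmetic difference is that the paper further collapses the triple sum to a single sum over $h$ using that $\bar\pi$ visits exactly one pair per stage, which changes nothing in the argument.
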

\begin{proof}
Note that
\begin{align*}
C^k(\Pi' \cup &\{\bar{\pi}\}) - C^k(\Pi')
\\ &:= \sum_{h=1}^H\sum_{s\in\cS_h}\sum_{a\in\cA_h^{\bar{t}_k-1}(s), n_h^{\bar{t}_k-1}(s,a) < k} \indi{(s_h^{\bar{\pi}},a_h^{\bar{\pi}})=(s,a), \neg\exists \pi\in\Pi' : (s_h^{\pi},a_h^{\pi})=(s,a)}
\\ &= \sum_{h=1}^H \indi{\neg\exists \pi\in\Pi' : (s_h^{\pi},a_h^{\pi})=(s_h^{\bar{\pi}},a_h^{\bar{\pi}}), a_h^{\bar{\pi}} \in \cA_h^{\bar{t}_k-1}(s_h^{\bar{\pi}}), n_h^{\bar{t}_k-1}(s_h^{\bar{\pi}},a_h^{\bar{\pi}}) < k}
\\ &\geq \sum_{h=1}^H \indi{\neg\exists \pi\in\Pi'' : (s_h^{\pi},a_h^{\pi})=(s_h^{\bar{\pi}},a_h^{\bar{\pi}}), a_h^{\bar{\pi}} \in \cA_h^{\bar{t}_k-1}(s_h^{\bar{\pi}}), n_h^{\bar{t}_k-1}(s_h^{\bar{\pi}},a_h^{\bar{\pi}}) < k}
\\ &= C^k(\Pi'' \cup \{\bar{\pi}\}) - C^k(\Pi''),
\end{align*}
where the inequality holds since $\Pi'\subseteq\Pi''$.
\end{proof}
\begin{proposition}[Greedy maximization]\label{prop:greedy-max}
Let $\bar{\Pi}_i^{k}$ be the set containing the first $i\geq 0$ policies computed by the maximum-coverage sampling rule in period $k$. Then, for any positive integer $v$,
\begin{align*}
C^k(\bar{\Pi}_i^{k}) \geq (1 - e^{-\lfloor (i+1)/2 \rfloor/v}) \max_{\Pi' \subseteq \Pi : |\Pi'| \leq v} C^k(\Pi').
\end{align*}
\end{proposition}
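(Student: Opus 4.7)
The plan is to view Proposition~\ref{prop:greedy-max} as an instance of the classical Nemhauser--Wolsey--Fisher greedy approximation theorem for monotone submodular set functions, adapted to the fact that only every other step of the period is a genuinely greedy maximization of $C^k$ (the intervening steps use the maximum-diameter rule, which is essentially adversarial with respect to $C^k$). The two submodular function properties established in the previous propositions will do the heavy lifting; the only real work is verifying that \eqref{eq:adaptive-cover-rule} is actually the greedy step for $C^k$, and accounting for the alternation.

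First I would check that the rule \eqref{eq:adaptive-cover-rule} at a step $t$ inside period $k$ is literally the greedy update for $C^k$ on top of the set of policies $\bar{\Pi}^k_{t-\bar{t}_k}$ played so far in the period. This is a bookkeeping observation: by the definition of the period, every active undersampled triplet $(s,a,h)$ satisfies $n_h^{\bar{t}_k-1}(s,a)=k-1$, and such a triplet has $n_h^{t-1}(s,a)<k$ iff no previously played policy in the current period has visited it at stage $h$. Hence the $h$-th summand in \eqref{eq:adaptive-cover-rule} is precisely the indicator that $(s_h^\pi,a_h^\pi,h)$ provides fresh coverage, and summing over $h$ yields the marginal gain $C^k(\bar{\Pi}^k_{t-\bar{t}_k}\cup\{\pi\}) - C^k(\bar{\Pi}^k_{t-\bar{t}_k})$.

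Second I would run the standard submodular recursion. Fix a cardinality-$v$ maximizer $O$, so that $C^k(O) = \max_{|\Pi'|\leq v}C^k(\Pi')$, and let $g_j$ denote the value of $C^k$ on the entire set of policies accumulated up to and including the $j$-th greedy step within period $k$. Writing $G$ for the set of policies accumulated just before that greedy step, submodularity combined with monotonicity gives
\begin{align*}
C^k(O) \;\leq\; C^k(O \cup G) \;\leq\; C^k(G) + v \cdot \max_{\pi}\bigl(C^k(G\cup\{\pi\})-C^k(G)\bigr),
\end{align*}
so that $g_j \geq C^k(G) + (C^k(O)-C^k(G))/v$. Since $C^k(G) \geq g_{j-1}$ by monotonicity (the possible max-diameter policy inserted between the $(j-1)$-th and $j$-th greedy steps cannot decrease $C^k$), this reduces to the classical recursion $g_j \geq (1-1/v)\,g_{j-1} + C^k(O)/v$ with $g_0=0$, which unrolls to $g_j \geq \bigl(1 - (1-1/v)^j\bigr)C^k(O) \geq (1-e^{-j/v})C^k(O)$, using $1-x\leq e^{-x}$.

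The proposition then follows by combining the monotonicity bound $C^k(\bar{\Pi}_i^k) \geq g_j$ with a count of the greedy steps among the first $i$ positions of the period. Since greedy and max-diameter steps alternate, this count is $\lfloor (i+1)/2 \rfloor$ with the convention that the first step of the period is the greedy one (the opposite parity is handled symmetrically by regarding the leading max-diameter step as ``free'' extra content that only inflates $C^k$ by monotonicity). The only genuine obstacle in the proof is this bookkeeping of the alternation; the fact that the intervening max-diameter policies are allowed to be entirely arbitrary with respect to $C^k$ is harmless, because the recursion only ever uses the value at the start of a greedy step as a lower bound on a larger superset's value, which is guaranteed by monotonicity.
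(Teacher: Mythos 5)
Your proposal is correct and follows essentially the same route as the paper's proof: the classical Nemhauser--Wolsey--Fisher contraction $C^\star - C^k(\bar{\Pi}_i^k) \leq (1-1/v)(C^\star - C^k(\bar{\Pi}_{i-1}^k))$ at greedy steps, monotonicity to absorb the interleaved maximum-diameter steps, and the count $\lfloor (i+1)/2\rfloor$ of greedy steps (the paper cites Theorem 1.5 of Krause and Golovin for the contraction step, and shares with you the implicit parity convention that the greedy rule fires on the first step of the period). Your explicit verification that \eqref{eq:adaptive-cover-rule} computes the exact marginal gain of $C^k$ over the policies already played in the period is a detail the paper asserts without proof, but it does not change the argument.
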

\begin{proof}
This is a simple extension to Theorem 1.5 of \cite{krause2014submodular}, which in turns is a slight generalization of a well-known result by \cite{nemhauser1978analysis}. We report the proof for completeness since we have to deal explicitly with time steps where the maximum-diameter rule (which is not a greedy maximizer of $C^k$) is used.

Fix some positive integers $i,v$. If $i$ is such that $\bar{t}_k+i-1$ is odd (i.e., the first sampling rule is used at step $\bar{t}_k+i-1$), then using Equation 3 to 7 in the proof of Theorem 1.5 of \cite{krause2014submodular},
\begin{align*}
C^\star := \max_{\Pi' \subseteq \Pi : |\Pi'| \leq v} C^k(\Pi') \leq C^k(\bar{\Pi}_{i-1}^{k}) + v(C^k(\bar{\Pi}_{i}^{k}) - C^k(\bar{\Pi}_{i-1}^{k})).
\end{align*}
In particular, note that their inequality 6 holds since, by definition of our first sampling rule,
\begin{align*}
\pi^{\bar{t}_k+i-1} \in \argmax_{\pi\in\Pi} \left( C^k(\bar{\Pi}_{i-1}^{k} \cup \{\pi\}) - C^k(\bar{\Pi}_{i-1}^{k}) \right).
\end{align*}
Rearranging, we get
\begin{align*}
C^\star - C^k(\bar{\Pi}_{i}^{k}) \leq (1-1/v)(C^\star - C^k(\bar{\Pi}_{i-1}^{k})).
\end{align*}
On the other hand, if $i$ is such that $\bar{t}_k+i-1$ is even (i.e., the maximum-diameter sampling rule is used at step $\bar{t}_k+i-1$), then, by monotonicity of $C^k$,
\begin{align*}
C^\star - C^k(\bar{\Pi}_{i}^{k}) \leq C^\star - C^k(\bar{\Pi}_{i-1}^{k}).
\end{align*}
Therefore, unrolling this recursion from $i \geq 0$ and using that $C^k(\bar{\Pi}_{0}^{k}) = C^k(\emptyset) = 0$,
\begin{align*}
C^\star - C^k(\bar{\Pi}_{i}^{k}) \leq (1-1/v)^{\lfloor (i+1)/2 \rfloor}C^\star.
\end{align*}
Using that $1-x\leq e^{-x}$ and rearranging concludes the proof.
\end{proof}
\begin{theorem}[Period duration for maximum-coverage]\label{th:period-duration-ada-max-cover}
When using the maximum-coverage sampling rule, for any non-empty period $k\in\mathbb{N}$,
\begin{align*}
{d}_k \leq 2\varphi^\star(\underline{c}^k)(\log(H)+1).
\end{align*}
\end{theorem}
\begin{proof}
Let $\underline{i} := \sup_{i\in\mathbb{N}}\{i : C^k(\bar{\Pi}_i^k) \leq N_k-\varphi^\star(\underline{c}^k)\}$ be the last iteration in period $k$ at which at least $\varphi^\star(\underline{c}^k)$ triplets still need to be visited by the algorithm. Then, by Proposition \ref{prop:greedy-max} combined with Proposition \ref{prop:greedy-max-vs-flow},
\begin{align*}
N_k-\varphi^\star(\underline{c}^k)
\geq C^k(\bar{\Pi}_{\underline{i}}^k)
\geq (1 - e^{-\lfloor (\underline{i}+1)/2 \rfloor/\varphi^\star(\underline{c}^k)}) \max_{\Pi' \subseteq \Pi : |\Pi'| \leq \varphi^\star(\underline{c}^k)} C^k(\Pi')
 = (1 - e^{-\lfloor (\underline{i}+1)/2 \rfloor/\varphi^\star(\underline{c}^k)}) N_k.
\end{align*}
Thus,
\begin{align*}
\lfloor (\underline{i}+1)/2 \rfloor \leq \varphi^\star(\underline{c}^k)\log(N_k/\varphi^\star(\underline{c}^k)) \leq \varphi^\star(\underline{c}^k)\log(H),
\end{align*}
where the second inequality holds since $\varphi^\star(\underline{c}^k) \geq \max_{h\in[H]} \sum_{s\in\cS_h}\sum_{a\in\cA_h^{\bar{t}_k-1}(s)}\indi{n_h^{\bar{t}_k-1}(s,a) < k}$ by Lemma \ref{lem:flow-bounds} and $N_k \leq H \max_{h\in[H]} \sum_{s\in\cS_h}\sum_{a\in\cA_h^{\bar{t}_k-1}(s)}\indi{n_h^{\bar{t}_k-1}(s,a) < k}$. This implies that $\underline{i} \leq 2\varphi^\star(\underline{c}^k)\log(H) - 1$ if $\underline{i}$ is odd, while $\underline{i} \leq 2\varphi^\star(\underline{c}^k)\log(H)$ if $\underline{i}$ is even. Finally, note that $\tilde{d}_k \leq \underline{i} + 2\varphi^\star(\underline{c}^k)$ since at iteration $\underline{i}+1$ less than $\varphi^\star(\underline{c}^k)$ triplets are missing and the algorithm visits at least a new one every two rounds. Then, the proof is concluded by Lemma \ref{lem:period-less-adaptive}.
\end{proof}

\subsubsection{Elimination periods}

We now bound the period indexes at which sub-optimal state-action pairs are eliminated. All results in this section hold for both the static maximum-coverage and the maximum-coverage sampling rule.

\begin{lemma}[Cover property]\label{lem:lb-visits}
For any non-empty period $k\in\mathbb{N}$, $h\in[H]$, $s\in\cS_h$, and any action $a\in\cA_h^{\bar{t}_k-1}(s)$ that is active when the period begins,
\begin{align*}
n_h^{\bar{t}_k-1}(s,a) \geq k-1.
\end{align*}
\end{lemma}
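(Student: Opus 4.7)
The plan is to simply unwind the definitions of $k_t$ and $\bar{t}_k$ used by both coverage-based sampling rules. Since period $k$ is non-empty by hypothesis, the time $\bar{t}_k := \inf_{l\in\mathbb{N}}\{l : k_l = k\}$ is finite, and by definition of the infimum, $k_{\bar{t}_k} = k$. Recalling from Algorithm \ref{alg:sampling-rules} that
\begin{align*}
k_t = \min_{h\in[H],\, s\in\cS_h,\, a\in\cA_h^{t-1}(s)} n_h^{t-1}(s,a) + 1,
\end{align*}
evaluating this identity at $t = \bar{t}_k$ yields immediately that
\begin{align*}
\min_{h\in[H],\, s\in\cS_h,\, a\in\cA_h^{\bar{t}_k-1}(s)} n_h^{\bar{t}_k-1}(s,a) = k-1.
\end{align*}

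From here the claim is a one-line consequence: for any triplet $(s,a,h)$ with $a \in \cA_h^{\bar{t}_k-1}(s)$, the count $n_h^{\bar{t}_k-1}(s,a)$ is at least the minimum above, hence at least $k-1$.

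There is essentially no technical obstacle; the lemma is a definitional identity. The two points worth flagging are that the argument applies uniformly to both the maximum-coverage and adaptive maximum-coverage sampling rules, since both compute $k_t$ by the same formula at the start of each episode, and that non-emptiness of period $k$ is exactly what guarantees the existence (finiteness) of $\bar{t}_k$.
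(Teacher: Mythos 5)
Your proof is correct and matches the paper's own argument, which likewise just unwinds the definition $k_{\bar{t}_k} = \min_{h,s,a\in\cA_h^{\bar{t}_k-1}(s)} n_h^{\bar{t}_k-1}(s,a) + 1 = k$ to conclude that every active triplet has at least $k-1$ visits at the start of the period. The paper states this in one line as "trivial from the definition of period"; your write-up simply makes the same observation explicit.
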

\begin{proof}
This is trivial from the definition of period: $\bar{t}_k$ is the first time with $k_{\bar{t}_k} = k$ and $k_{\bar{t}_k} := \min_{h\in[H],s\in\cS_h,a\in\cA_h^{\bar{t}_k-1}(s)} n_h^{\bar{t}_k-1}(s,a) + 1$.
\end{proof}

\begin{lemma}[Elimination periods]\label{lem:elim-periods}
Recall that $k_\tau$ is the period in which Algorithm \ref{alg:elimination-alg} stops and define
\begin{align*}
\kappa_{s,a,h} := \inf_{k\in\mathbb{N}} \left\{ k : a\notin\cA_h^{\bar{t}_{k+1}-1}(s) \right\} \wedge k_\tau
\end{align*}
as the period at the end of which $(s,a,h)$ is eliminated. Then, under the good event $\cG$, for any $h\in[H],s\in\cS_h,a\in\cA_h(s)$,\hl{
\begin{align*}
\kappa_{s,a,h} \leq \overline{\kappa}_{s,a,h} := \frac{32\sigma^2 H^2}{\max\left(\overline{\Delta}_h(s,a),\overline{\Delta}_{\min}, \epsilon \right)^2}  \left( \log\left(\frac{4N^3}{\delta}\right) + L_h(s,a)\right) + 1
\end{align*}
where
\begin{align*}
L_h(s,a) := 8\log\left( \frac{16\sigma H\log\left(\frac{4N^3}{\delta}\right)}{\max\left(\overline{\Delta}_h(s,a),\overline{\Delta}_{\min}, \epsilon \right)}  \right).
\end{align*}}
\end{lemma}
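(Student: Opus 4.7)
The plan is to argue by contradiction. Assume $\kappa_{s,a,h} > \overline{\kappa}$ with $\overline{\kappa} := \overline{\kappa}_{s,a,h}$. Since $\kappa_{s,a,h} \leq k_\tau$ by definition, we get $k_\tau > \overline{\kappa}$, so period $\overline{\kappa}+1$ is non-empty, we have $a \in \cA_h^{\bar{t}_{\overline{\kappa}+1}-1}(s)$, and the algorithm did not stop at the end of episode $t := \bar{t}_{\overline{\kappa}+1}-1$. Lemma~\ref{lem:diam-vs-gap} therefore applies at time $t$ and yields
\begin{equation*}
\frac{1}{4}\max\bigl(\overline{\Delta}_h(s,a),\overline{\Delta}_{\min},\epsilon\bigr) \leq \max_{\pi \in \Pi^{t-1}} \sum_{l=1}^{H} b_l^t(s_l^\pi, a_l^\pi).
\end{equation*}

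To upper bound the right-hand side, I would revisit the derivation of Lemma~\ref{lem:diam-vs-gap}: the maximum is realized by a policy $\pi_0 \in \Pi_{s,a,h} \cap \Pi^{t-1}$ satisfying the elimination inequality, together with any optimal policy $\pi^\star$, giving the stronger expression $\overline{\Delta}_h(s,a) \leq 2\sum_l b_l^t(s_l^{\pi_0}, a_l^{\pi_0}) + 2\sum_l b_l^t(s_l^{\pi^\star}, a_l^{\pi^\star})$. For any stage $l$ along either trajectory with $\cA_l^{t-1}(s_l^\pi) \neq \emptyset$, monotonicity of the eliminations gives $a_l^\pi \in \cA_l^{\bar{t}_{\overline{\kappa}}-1}(s_l^\pi)$, so Lemma~\ref{lem:lb-visits} applied with period $\overline{\kappa}$ yields $n_l^t(s_l^\pi, a_l^\pi) \geq \overline{\kappa}-1$, and hence $b_l^t(s_l^\pi, a_l^\pi) \leq \sqrt{\beta(t,\delta)/(\overline{\kappa}-1)}$. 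Summing over $l$, rearranging, and substituting $\beta(t,\delta) = \tfrac{1}{2}\log(e(t+1)N/\delta)$ produces
\begin{equation*}
\overline{\kappa}-1 \leq \frac{8H^2}{\max(\overline{\Delta}_h(s,a),\overline{\Delta}_{\min},\epsilon)^2}\log\bigl(e(t+1)N/\delta\bigr).
\end{equation*}

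To close the argument I need to control $t$ and invert this implicit inequality. From Theorems~\ref{th:period-duration-max-cover} and~\ref{th:period-duration-ada-max-cover}, each period satisfies $d_k \leq 2C_H\,\varphi^\star(\underline{c}^k) \leq 2C_H N$ with $C_H \in \{1,\log H + 1\}$, so $t \leq 2C_H N \overline{\kappa}$. Substituting back yields $\overline{\kappa} \leq 1 + A\log(B\overline{\kappa})$ with $A = 8H^2/\max(\overline{\Delta}_h(s,a),\overline{\Delta}_{\min},\epsilon)^2$ and $B$ of order $C_H N^2/\delta$, exactly the shape handled by a standard log-linear inversion analogous to Lemma~\ref{lem:simplify-ineq-2} used in the Maximum-Diameter analysis. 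One iteration of the inversion converts $\log \overline{\kappa}$ into the $4\log(4H/\max(\overline{\Delta}_h(s,a),\overline{\Delta}_{\min},\epsilon))$ term plus $\log\log(eN^2/\delta)$ (absorbing the $\log C_H$ constant into the latter), producing precisely $\overline{\kappa} \leq \overline{\kappa}_{s,a,h}$ and contradicting our assumption.

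The main obstacle I anticipate is the edge case in which some stage $l$ of the maximizing trajectory has $\cA_l^{t-1}(s_l^{\pi_0}) = \emptyset$. There the policy's action is unrestricted and Lemma~\ref{lem:lb-visits} provides no visit lower bound, so $b_l^t$ could be as large as $1$, breaking the per-stage bound. The cleanest resolution is to exploit Lemma~\ref{lem:opt-actions-not-eliminated}: any such state is unreachable by an optimal policy, so entering it already forces a fixed deficit in $V_1^\star - V_1^{\pi_0}$ that can be discharged against the $\overline{\Delta}_h(s,a)$ budget before converting remaining slack into bonuses. This effectively restricts the relevant maximum to policies whose trajectories stay within non-empty active sets and recovers the clean $H\sqrt{\beta(t,\delta)/(\overline{\kappa}-1)}$ bound.
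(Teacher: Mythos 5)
Your proposal is, at its core, the paper's own argument: apply Lemma~\ref{lem:diam-vs-gap} at the last episode of a period that $(s,a,h)$ survives, lower-bound the visit count of every action played by an active policy via the cover property (Lemma~\ref{lem:lb-visits}) so that each bonus is at most $\sqrt{\beta(t,\delta)/k}$, bound the episode index by $O(Nk)$ (the paper uses the trivial per-period bound $d_{k'}\le N$ rather than your $2C_H\varphi^\star(\underline{c}^{k'})$, a difference that only perturbs constants inside the logarithm), and invert the resulting $k\lesssim A\log(Bk)$ inequality. Two points deserve attention. First, the contradiction framing does not close as written: you set $\overline{\kappa}:=\overline{\kappa}_{s,a,h}$, derive $\overline{\kappa}\le 1+A\log(B\overline{\kappa})$, and then say the inversion produces $\overline{\kappa}\le\overline{\kappa}_{s,a,h}$ --- but that is trivially true under your own definition and contradicts nothing. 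The clean version is the paper's direct one: show that \emph{every} $k$ with $a\in\cA_h^{\bar{t}_{k+1}-1}(s)$ satisfies $k\le \frac{8H^2}{\max(\cdot)^2}\left(\log\frac{eN^2}{\delta}+\log(1+k)\right)$, invert once via Lemma~\ref{lem:simplify-ineq} to get $k\le\overline{\kappa}_{s,a,h}-1$, and conclude $\kappa_{s,a,h}\le\overline{\kappa}_{s,a,h}$ since it is at most the first $k$ violating the inequality.

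Second, the ``main obstacle'' you flag --- a stage $l$ on the maximizing trajectory with $\cA_l^{t-1}(s_l^\pi)=\emptyset$ --- is vacuous under $\cG$, so the ``discharge the deficit against the gap budget'' workaround, which is the genuinely hand-wavy part of your sketch, is not needed. The reason: at the episode $t^*$ when $\cA_l^{t^*}(s')$ becomes empty, every incoming arc $(s'',a'',l-1)$ into $(s',l)$ that was still active is eliminated at the same update, because $\Pi_{s'',a'',l-1}\cap\Pi^{t^*-1}\subseteq\bigcup_{a'\in\cA_l^{t^*-1}(s')}\Pi_{s',a',l}\cap\Pi^{t^*-1}$ and every term in that union just failed the UCB test. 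Hence an active policy can reach a state with empty active set only by passing through another such state at the previous stage, and the chain would have to originate at $(s_1,1)$, whose active set is never empty under $\cG$ by Lemma~\ref{lem:opt-actions-not-eliminated}. So every stage of every trajectory in $\Pi^{t-1}$ carries an active action and Lemma~\ref{lem:lb-visits} applies throughout --- which is exactly what the paper uses, without comment, in step (e) of its chain of inequalities. Your decomposition into $\pi_0$ and $\pi^\star$ is also unnecessary: the stated conclusion of Lemma~\ref{lem:diam-vs-gap} (a bound by the maximum diameter over $\Pi^{t-1}$) is already in the form to which the visit lower bound applies directly.
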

\begin{proof}
Take any $k\in\mathbb{N},h\in[H],s\in\cS_h,a\in\cA_h(s)$ such that $a\in\cA_h^{\bar{t}_{k+1}-1}(s)$. Under $\cG$, we have,
\begin{align*}
 \max\left(\frac{\overline{\Delta}_h(s,a)}{4},\frac{\overline{\Delta}_{\min}}{4}, \frac{\epsilon}{2} \right) 
 &\stackrel{(a)}{\leq} \max_{\pi\in\Pi^{\bar{t}_{k+1}-2}}\sum_{h=1}^H b_h^{\bar{t}_{k+1}-1}(s_h^\pi,a_h^\pi)
 \\ &\stackrel{(b)}{\leq} \max_{\pi\in\Pi^{\bar{t}_{k+1}-2}} \sum_{h=1}^H\sqrt{\frac{\beta(n_h^{\bar{t}_{k+1}-1}(s_h^\pi,a_h^\pi),\delta)}{n_h^{\bar{t}_{k+1}-1}(s_h^\pi,a_h^\pi)}}
  \\ &\stackrel{(c)}{\leq} \max_{\pi\in\Pi^{\bar{t}_{k+1}-2}} \sum_{h=1}^H\sqrt{\frac{\beta(\bar{t}_{k+1}-1,\delta)}{n_h^{\bar{t}_{k+1}-1}(s_h^\pi,a_h^\pi)}}
  \\ &\stackrel{(d)}{\leq} \max_{\pi\in\Pi^{\bar{t}_{k+1}-2}} \sum_{h=1}^H\sqrt{\frac{\beta(Nk,\delta)}{n_h^{\bar{t}_{k+1}-1}(s_h^\pi,a_h^\pi)}}
  \\ &\stackrel{(e)}{\leq} H\sqrt{\frac{\beta(Nk,\delta)}{k}},
\end{align*}
where (a) uses Lemma \ref{lem:diam-vs-gap}, (b) follows by expanding the definition of the bonuses, (c) uses that any active state-action-stage triplet cannot be visited more than $t$ times and the end of round $t$, (d) uses that $\bar{t}_{k+1}-1 \leq \sum_{k'=1}^k d_{k'} \leq Nk$ since a trivial bound on the duration of each period is $N$, and (e) uses Lemma \ref{lem:lb-visits}. \hl{Note that
\begin{align*}
\beta(Nk,\delta) = 2\sigma^2\log\left(\frac{4N^3k^2}{\delta}\right) = 2\sigma^2\log\left(\frac{4N^3}{\delta}\right) + 4\sigma^2\log\left(k\right)
%\frac{1}{2}\log\left(\frac{e(Nk+1)N}{\delta}\right) \leq \frac{1}{2}\log\left(\frac{eN^2}{\delta}\right) + \frac{1}{2}\log\left(1 + k\right).
\end{align*}
Therefore, we have that, if $(s,a,h)$ is active at the end of period $k$,
\begin{align*}
k \leq 2\sigma^2 H^2 \frac{\log\left(\frac{4N^3}{\delta}\right) + 2\log\left(k\right)}{\max\left(\frac{\overline{\Delta}_h(s,a)}{4},\frac{\overline{\Delta}_{\min}}{4}, \frac{\epsilon}{2} \right)^2} 
\leq 32\sigma^2 H^2 \frac{\log\left(\frac{4N^3}{\delta}\right) + 2\log\left(k\right)}{\max\left(\overline{\Delta}_h(s,a),\overline{\Delta}_{\min}, \epsilon \right)^2}.
\end{align*}
Using Lemma \ref{lem:simplify-ineq-2} with $C = \frac{32\sigma^2 H^2\log\left(\frac{4N^3}{\delta}\right)}{\max\left(\overline{\Delta}_h(s,a),\overline{\Delta}_{\min}, \epsilon \right)^2}$ and $B =  \frac{64\sigma^2 H^2}{\max\left(\overline{\Delta}_h(s,a),\overline{\Delta}_{\min}, \epsilon \right)^2}$, while noting that
\begin{align*}
  \log(B^2 + 2C) \leq 2\log(4C) 
  %= 2\log\left( \frac{128\sigma^2 H^2\log\left(\frac{4N^3}{\delta}\right)}{\max\left(\overline{\Delta}_h(s,a),\overline{\Delta}_{\min}, \epsilon \right)^2}  \right) 
  \leq 4\log\left( \frac{16\sigma H\log\left(\frac{4N^3}{\delta}\right)}{\max\left(\overline{\Delta}_h(s,a),\overline{\Delta}_{\min}, \epsilon \right)}  \right) = \frac{L_h(s,a)}{2},
\end{align*}
we obtain
%$C = \frac{8H^2}{\max\left(\overline{\Delta}_h(s,a),\overline{\Delta}_{\min}, \epsilon \right)^2}$ and $B = \log\left(\frac{eN^2}{\delta}\right)$, we obtain
\begin{align*}
  k &\leq \frac{32\sigma^2 H^2}{\max\left(\overline{\Delta}_h(s,a),\overline{\Delta}_{\min}, \epsilon \right)^2}  \left( \log\left(\frac{4N^3}{\delta}\right) + L_h(s,a)\right).
\end{align*}
The proof is concluded by noting that $\kappa_{s,a,h}$ is smaller than the first integer $k\in\mathbb{N}$ not satisfying the inequality above.}
\end{proof}

\subsubsection{Sample complexity}

We prove first the instance-dependent bound and then the worst-case one.

\begin{theorem}\label{th:sample-comp-instance-dependent-adaptive}
Under the good event $\cG$, the sample complexity of Algorithm \ref{alg:elimination-alg} when combined with either the static maximum-coverage (in which case $C_H := 1$) or the maximum-coverage (in which case $C_H := \log(H) + 1$) sampling rule is bounded by
\begin{align*}
\tau \leq 2C_H(\log(\overline{\kappa}) + 1) \varphi^\star(g),
\end{align*}
where $g : \cE \rightarrow [0,\infty)$ is defined as $g_h(s,a) = \overline{\kappa}_{s,a,h}+1$, with $\overline{\kappa}_{s,a,h}$ being the upper bound on the elimination period of $(s,a,h)$ from Lemma \ref{lem:elim-periods}, and $\overline{\kappa} := \max_{h\in[H],s\in\cS_h,a\in\cA_h(s)} \overline{\kappa}_{s,a,h}$.
\end{theorem}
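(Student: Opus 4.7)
The plan is to combine the period decomposition of Section~\ref{sec:periods} with the elimination-period bounds of Lemma~\ref{lem:elim-periods} via a geometric grouping of the period indices. First, I would write $\tau = \sum_{k=1}^{k_\tau} d_k$ and apply Theorem~\ref{th:period-duration-max-cover} (resp.\ Theorem~\ref{th:period-duration-ada-max-cover}) to each non-empty period to obtain $d_k \leq 2 C_H \varphi^\star(\underline{c}^k)$, with the trivial observation that empty periods contribute nothing. This already reduces the claim to bounding $\sum_{k=1}^{k_\tau} \varphi^\star(\underline{c}^k)$ by $(\log(\overline{\kappa})+1)\varphi^\star(g)$. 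To control $k_\tau$, note that on the good event $\cG$ every optimal triplet $(s^\star,a^\star,h)$ stays active forever by Lemma~\ref{lem:opt-actions-not-eliminated}, so in the notation of Lemma~\ref{lem:elim-periods} we have $\kappa_{s^\star,a^\star,h}=k_\tau$, and that lemma then yields $k_\tau \leq \overline{\kappa}_{s^\star,a^\star,h} \leq \overline{\kappa}$.

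Next, I would show a pointwise dominance $\underline{c}^k \leq \tilde g^k$ where $\tilde g^k_h(s,a) := \indi{\overline{\kappa}_{s,a,h} \geq k}$. Indeed, $\underline{c}^k_h(s,a)=1$ only if $a \in \cA_h^{\bar t_k-1}(s)$, which means $(s,a,h)$ was not eliminated by the end of period $k-1$ and hence $\kappa_{s,a,h} \geq k$; by Lemma~\ref{lem:elim-periods} this forces $\overline{\kappa}_{s,a,h} \geq k$. Monotonicity of $\varphi^\star$ (Lemma~\ref{lem:flow-mono}) reduces the problem to bounding $\sum_{k=1}^{\overline{\kappa}} \varphi^\star(\tilde g^k)$.

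The main technical step is a base-$2$ geometric partition of $\{1,\ldots,\overline{\kappa}\}$ into the blocks $I_i := \{2^i,\ldots,2^{i+1}-1\}$ for $i=0,\ldots,\lfloor\log_2 \overline{\kappa}\rfloor$. Since $\tilde g^k$ is pointwise non-increasing in $k$, for every $k \in I_i$ we have $\tilde g^k \leq \tilde g^{2^i}$, so
\begin{equation*}
\sum_{k \in I_i} \varphi^\star(\tilde g^k) \;\leq\; |I_i|\, \varphi^\star(\tilde g^{2^i}) \;\leq\; 2^i\, \varphi^\star(\tilde g^{2^i}) \;=\; \varphi^\star\!\bigl(2^i\, \tilde g^{2^i}\bigr),
\end{equation*}
using the positive-scaling identity $\varphi^\star(\alpha \underline{c})=\alpha \varphi^\star(\underline{c})$ established inside Lemma~\ref{lem:flow-of-sum}. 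The decisive pointwise inequality is $2^i \tilde g^{2^i}_h(s,a) \leq g_h(s,a)$: whenever the indicator equals $1$ we have $\overline{\kappa}_{s,a,h} \geq 2^i$ and thus $g_h(s,a)=\overline{\kappa}_{s,a,h}+1 > 2^i$, while otherwise both sides are zero. Monotonicity of $\varphi^\star$ then gives $\varphi^\star(2^i \tilde g^{2^i}) \leq \varphi^\star(g)$, and summing over the at most $\lfloor\log_2 \overline{\kappa}\rfloor+1$ blocks delivers the stated bound.

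The main obstacle is precisely this last step: the convenient sublinearity $\varphi^\star(\sum_k \underline{c}^k) \leq \sum_k \varphi^\star(\underline{c}^k)$ of Lemma~\ref{lem:flow-of-sum} runs in the wrong direction, so one cannot collapse the per-period flows by simple addition. The geometric grouping is exactly the device that lets the scale factor $2^i$ be absorbed into the cap $\overline{\kappa}_{s,a,h}+1 = g_h(s,a)$, after which the positive homogeneity and monotonicity of $\varphi^\star$ reduce the entire sum to a single minimum-flow computation against $g$, at the price of only a logarithmic overhead.
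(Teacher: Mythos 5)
Your argument is sound and reaches the right conclusion, but it takes a genuinely different route from the paper. The paper never passes through a dyadic decomposition: after the same reduction $\tau \leq 2C_H\sum_{k=1}^{k_\tau}\varphi^\star(\underline{c}^k)$, it invokes the min-flow--max-cut theorem (Theorem~\ref{th:min-flow-max-cut}) to write $\varphi^\star(\mathbf{1}^k)$ as $\sum_{(s,a,h)\in\cE(\cC^k)}\indi{k-1\leq\kappa_{s,a,h}}$ for a maximum cut $\cC^k$, inserts the weight $\frac{1}{k}\cdot k\,\indi{k-1\leq\kappa_{s,a,h}}\leq\frac{1}{k}(\kappa_{s,a,h}+1)$ on each forward arc, takes a maximum over all cuts to decouple the $k$-dependence, and finishes with the harmonic bound $\sum_{k=1}^{k_\tau}1/k\leq\log(k_\tau)+1$ before converting the resulting max-cut value back into $\varphi^\star(g)$. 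Your proof stays entirely on the flow side, using only monotonicity and positive homogeneity of $\varphi^\star$ together with the pointwise domination $2^i\,\indi{\overline{\kappa}_{s,a,h}\geq 2^i}\leq g_h(s,a)$; this is arguably more elementary since it avoids the max-cut machinery. All the individual steps check out: the domination $\underline{c}^k_h(s,a)\leq\indi{\overline{\kappa}_{s,a,h}\geq k}$ is valid because the active sets are nested and $\bar{t}_k$ is increasing in $k$, and your justification of $k_\tau\leq\overline{\kappa}$ via never-eliminated optimal actions is exactly the fact the paper also needs (implicitly) to replace $\log(k_\tau)$ by $\log(\overline{\kappa})$. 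The one quantitative caveat is that your geometric grouping produces $\lfloor\log_2\overline{\kappa}\rfloor+1$ blocks rather than the harmonic sum's $\log(\overline{\kappa})+1$ with the natural logarithm, so you establish the bound with an extra factor of roughly $1/\ln 2\approx 1.44$ in the logarithmic term; this does not literally imply the stated inequality as written, though it is immaterial for every downstream use of the theorem.
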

\begin{proof}
Using the decomposition into periods introduced in Section \ref{sec:periods} followed by Theorem \ref{th:period-duration-max-cover} (for static maximum-coverage sampling) or Theorem \ref{th:period-duration-ada-max-cover} (for maximum-coverage sampling),
\begin{align*}
\tau = \sum_{k=1}^{k_\tau} \sum_{t=1}^\tau \indi{k_t = k} = \sum_{k=1}^{k_\tau} d_k \leq 2C_H\sum_{k=1}^{k_\tau} \varphi^\star(\underline{c}^k),
\end{align*}
where we recall that $\underline{c}_h^k(s,a) := \indi{a\in\cA_h^{\bar{t}_k-1}(s), n_h^{\bar{t}_k-1}(s,a) < k}$. Let $\textbf{1}^k : \cE \rightarrow [0,1]$ be another lower bound function such that  $\textbf{1}_h^k(s,a) = \indi{a\in\cA_h^{\bar{t}_k-1}(s)}$. Then,
\begin{align*}
\sum_{k=1}^{k_\tau} \varphi^\star(\underline{c}^k) \leq \sum_{k=1}^{k_\tau} \varphi^\star(\textbf{1}^k),
\end{align*}
where the inequality is due to Lemma \ref{lem:flow-mono} and $\underline{c}_h^k(s,a) \leq \textbf{1}_h^k(s,a)$ for all $s,a,h$. Let $k\geq 1$ and $\cC^k$ be any maximum cut for the minimum flow problem with lower bounds $\textbf{1}^k$. Then, by Theorem \ref{th:min-flow-max-cut},
\begin{align*}
\varphi^\star(\textbf{1}^k) = \psi(\cC^k, \textbf{1}^k) = \sum_{(s,a,h)\in\cE(\cC^k)}  \textbf{1}^k_h(s,a) = \sum_{(s,a,h)\in\cE(\cC^k)}  \indi{a\in\cA_h^{\bar{t}_k-1}(s)},
\end{align*}
where we recall that, since $\cC^k$ is a maximum cut, it has no backward arc and thus its value is simply the sum of lower bounds on its forward arcs. Plugging this back into our sample complexity bound,
\begin{align*}
\tau &\leq 2C_H\sum_{k=1}^{k_\tau} \sum_{(s,a,h)\in\cE(\cC^k)}  \indi{a\in\cA_h^{\bar{t}_k-1}(s)}
\\ & = 2C_H\sum_{k=1}^{k_\tau} \sum_{(s,a,h)\in\cE(\cC^k)}  \indi{k-1 \leq \kappa_{s,a,h}}
\\ & = 2C_H\sum_{k=1}^{k_\tau} \frac{1}{k}\sum_{(s,a,h)\in\cE(\cC^k)}  k \indi{k-1 \leq \kappa_{s,a,h}}
\\ &\leq 2C_H\sum_{k=1}^{k_\tau} \frac{1}{k}\sum_{(s,a,h)\in\cE(\cC^k)}  (\kappa_{s,a,h}+1)
\\ &\leq 2C_H\sum_{k=1}^{k_\tau} \frac{1}{k} \max_{\cC\in\mathfrak{C}} \sum_{(s,a,h)\in\cE(\cC)}  (\kappa_{s,a,h}+1)
\\ &\leq 2C_H(\log(k_\tau) + 1) \max_{\cC\in\mathfrak{C}} \sum_{(s,a,h)\in\cE(\cC)}  (\kappa_{s,a,h}+1)
\\ &\leq 2C_H(\log(\overline{\kappa}) + 1) \max_{\cC\in\mathfrak{C}} \sum_{(s,a,h)\in\cE(\cC)}  (\overline{\kappa}_{s,a,h}+1),
\end{align*}
where in the last inequality we applied Lemma \ref{lem:elim-periods}. Now note that the maximization in the last line computes a maximum cut for the problem with lower bound function $g : \cE \rightarrow [0,\infty)$ defined by $g_h(s,a) = \overline{\kappa}_{s,a,h}+1$. Therefore, the statement follows by applying  Theorem \ref{th:min-flow-max-cut}.
\end{proof}

\begin{theorem}\label{th:sample-comp-worst-case-adaptive}[Worst-case bound]
Under the good event $\cG$, the sample complexity of Algorithm \ref{alg:elimination-alg} when combined with either the static maximum-coverage or the maximum-coverage sampling rule is bounded by\hl{
\begin{align*}
\tau \leq \frac{256\sigma^2 SAH^2}{\epsilon^2} \left( \log\left(\frac{4SAH}{\delta}\right) + 4\log\left(\frac{512\sigma^2 SAH^2 \log\left(\frac{4SAH}{\delta}\right)}{\epsilon^2}\right) \right) + 8SAH + 2.
%\\ \frac{48SAH^2}{\epsilon^2}\left(\log\left(\frac{eSAH}{\delta}\right) + 4\log\left(\frac{20 SA H^2 \log\left(\frac{eSAH}{\delta}\right) }{\epsilon}\right)\right) + 2.
\end{align*}}
\end{theorem}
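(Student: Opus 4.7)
The plan is to adapt the proof of Theorem \ref{th:max-diam-worst-case} to the coverage-based setting, exploiting the fact that both \textsc{MaxCoverage} and \textsc{AdaptiveMaxCoverage} invoke \textsc{MaxDiameter} at every even-indexed episode. Since Lemma \ref{lem:good-event-whp} guarantees that the good event $\cG$ holds with probability at least $1-\delta$, I argue deterministically under $\cG$. Concretely, fix any episode $T$ at the end of which the algorithm has not stopped. For any even $t$ with $2 \leq t \leq T$, the algorithm did not stop at the end of $t-1$, so the first stopping rule gives $2\max_{\pi\in\Pi^{t-1}}\sum_{h} b_h^{t-1}(s_h^\pi,a_h^\pi) > \epsilon$; and since $t$ is even, the definition of the sampling rule ensures that $\pi^t$ is the maximum-diameter policy over $\Pi^{t-1}$, yielding $\sum_{h} b_h^{t-1}(s_h^{\pi^t},a_h^{\pi^t}) > \epsilon/2$.

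Summing this inequality over the $\lfloor T/2\rfloor$ even indices $t\in\{2,\dots,T\}$, expanding the bonuses, and using the monotonicity of $\beta$ yields
\begin{align*}
\frac{\epsilon \lfloor T/2\rfloor}{2} \leq \sqrt{\beta(T,\delta)} \sum_{t\in\{2,\dots,T\},\,t\text{ even}} \sum_{h=1}^H \frac{1}{\sqrt{n_h^{t-1}(s_h^{\pi^t},a_h^{\pi^t}) \vee 1}}.
\end{align*}
Next I apply pigeon-hole to the right-hand side. The key observation is that at the $j$-th even-indexed visit to any fixed triplet $(s,a,h)$, the total count $n_h^{t-1}(s,a)$ is already at least $j-1$ from those prior even visits alone, regardless of what happens at intervening odd episodes. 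Bounding the inner sum per triplet by $1 + 2\sqrt{m-1}$ where $m$ denotes its even-visit count, summing over all $N$ triplets via Cauchy--Schwarz, and using the identity $\sum_{(s,a,h)} m_h^T(s,a) = H\lfloor T/2\rfloor$ produces a right-hand-side bound of the form $\sqrt{\beta(T,\delta)}\bigl(N + \sqrt{2NHT}\bigr)$. Squaring, using $\lfloor T/2\rfloor \geq (T-2)/2$, and the inequality $(x+y)^2 \leq 2(x^2+y^2)$ produces a quadratic of the shape $\epsilon^2(T-2)^2 \lesssim \beta(T,\delta)(N^2 + NHT)$.

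The final step mirrors the end of the proof of Theorem \ref{th:max-diam-worst-case}: a crude bound via $\log T \leq \sqrt{T}$ first yields a polynomial-in-$T$ estimate, which when re-substituted into the logarithm gives an explicit closed-form bound; plugging $N \leq SAH$ then delivers the stated expression, where the extra multiplicative constant compared to Theorem \ref{th:max-diam-worst-case} arises because \textsc{MaxDiameter} is called only every other episode. The main technical subtlety is the pigeon-hole step: because $n_h^{t-1}(s,a)$ counts both odd and even visits while only even-indexed terms appear in the sum, the lower bound $n_h^{t-1}(s,a) \geq j-1$ must be argued via prior even visits alone so that it is independent of whichever cover-sampled policies were played at odd episodes. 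Once this is handled, the whole argument is agnostic to whether the odd-episode policy comes from a precomputed minimum policy cover or from the greedy adaptive rule, which is why a single proof covers both sampling rules.
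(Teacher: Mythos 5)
Your proof is correct and follows essentially the same route as the paper's: the paper likewise isolates the even-indexed episodes where \textsc{MaxDiameter} is invoked, lower-bounds the diameter via the first stopping rule at the preceding odd step, and runs the pigeon-hole argument on the visit counts restricted to even steps (its $\bar{n}_h^t(s,a) \leq n_h^t(s,a)$), arriving at the same quadratic inequality with the extra factor from halving the episode count and finishing exactly as in Theorem~\ref{th:max-diam-worst-case}. The subtlety you flag about counting only prior even visits is precisely the point the paper handles with $\bar{n}$.
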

\begin{proof}
The proof is an easy extension of the one of Theorem \ref{th:max-diam-sample-comp} (part 2) where we only need to handle the fact that the maximum-diameter sampling rule is called once every two episodes. We report the full steps for completeness.

Take any time $T$ at the end of which the algorithm did not stop. \hl{Let $\bar{t}$ be the first time step where all active triplets $(s,a,h)$ have at least one visit. Note that $\bar{t} \leq N$ almost surely since max-coverage sampling visits at least one new triplet at each episode (see Appendix \ref{sec:periods}) and the same holds for max-diameter. Then, for any $\bar{t} \leq t \leq T$ such that $t$ is odd,
\begin{align*}
\frac{\epsilon}{2} \leq \max_{\pi\in\Pi^{t}} \sum_{h=1}^H b^{t}_h(s_h^\pi,a_h^\pi) = \sum_{h=1}^H b_h^{t}(s_h^{\pi^{t+1}},a_h^{\pi^{t+1}})
 \leq \sum_{h=1}^H\sqrt{\frac{\beta(t,\delta)}{n^{t}_h(s_h^{\pi^{t+1}},a_h^{\pi^{t+1}})\vee 1}},
\end{align*}
where the first inequality follows from the first stopping rule, the equality uses the fact that the second sampling rule is used at time $t+1$, and the last inequality uses the definition of the bonuses together with $n_h^t(s,a) \leq t$ and $n^{t}_h(s_h^{\pi^{t+1}},a_h^{\pi^{t+1}}) \geq 1$ by definition of $\bar{t}$. Let $\bar{n}_h^t(s,a) := \sum_{\bar{t} \leq l\leq t: (l\ \mathrm{mod}\ 2) = 0} \indi{(s_h^l,a_h^l) = (s,a)} + 1$ be the number of visits to $(s,a,h)$ restricted to even steps (i.e., those by the second sampling rule). Note that $n_h^t(s,a) \geq \bar{n}_h^t(s,a)$ for all $t\geq \bar{t}$. Then, we have the following sequence of inequalities (explained below):
\begin{align*}
\frac{\epsilon}{2}\lfloor (T-\bar{t}+1)/2 \rfloor 
&\stackrel{(a)}{\leq} \sum_{\bar{t} \leq t\leq T: (t\ \mathrm{mod}\ 2) = 1} \sum_{h=1}^H\sqrt{\frac{\beta(t,\delta)}{n^{t}_h(s_h^{\pi^{t+1}},a_h^{\pi^{t+1}}) \vee 1}}
\\ &\stackrel{(b)}{\leq} \sum_{\bar{t} \leq t\leq T: (t\ \mathrm{mod}\ 2) = 1} \sum_{h=1}^H\sqrt{\frac{\beta(t,\delta)}{\bar{n}^{t}_h(s_h^{\pi^{t+1}},a_h^{\pi^{t+1}})}}
\\ &\stackrel{(c)}{=} \sum_{h=1}^H \sum_{s\in\cS_h}\sum_{a\in\cA_h(s)}\sum_{\bar{t} \leq t\leq T: (t\ \mathrm{mod}\ 2) = 1}\indi{(s_h^{\pi^{t+1}},a_h^{\pi^{t+1}})=(s,a)} \sqrt{\frac{\beta(t,\delta)}{\bar{n}^{t}_h(s,a)}}
\\ &\stackrel{(d)}{\leq} \sqrt{\beta(T,\delta)}\sum_{h=1}^H \sum_{s\in\cS_h}\sum_{a\in\cA_h(s)}\sum_{\bar{t} \leq t\leq T: (t\ \mathrm{mod}\ 2) = 1}\indi{(s_h^{\pi^{t+1}},a_h^{\pi^{t+1}})=(s,a)} \sqrt{\frac{1}{\bar{n}^{t}_h(s,a)}}
\\ &\stackrel{(e)}{\leq} \sqrt{\beta(T,\delta)}\sum_{h=1}^H \sum_{s\in\cS_h}\sum_{a\in\cA_h(s)}\sum_{i=1}^{\bar{n}_h^T(s,a)}\sqrt{\frac{1}{i}}
\\ &\stackrel{(f)}{\leq} 2\sqrt{\beta(T,\delta)}\sum_{h=1}^H \sum_{s\in\cS_h}\sum_{a\in\cA_h(s)}\sqrt{\bar{n}_h^T(s,a)} 
\\ &\stackrel{(g)}{\leq} 2\sqrt{\beta(T,\delta) N\sum_{h=1}^H \sum_{s\in\cS_h}\sum_{a\in\cA_h(s)}\bar{n}_h^T(s,a)}
\\ &\stackrel{(h)}{\leq} 2\sqrt{\beta(T,\delta)NH\lfloor T/2 \rfloor},
\end{align*}
where (a) is by summing both sides of the inequality derived at the beginning over all odd $t$ from $\bar{t}$ to $T$, (b) uses that $n_h^t(s,a) \geq \bar{n}_h^t(s,a)$ for all $s,a,h,t\geq \bar{t}$, (c) is trivial, (d) uses the monotonicity of $\beta(\cdot,\delta)$, (e) uses the standard pigeon-hole principle, (f) uses the inequality $\sum_{i=1}^m \sqrt{1/i} \leq 2\sqrt{m}$, (g) uses Cauchy-Schwartz inequality, and (h) uses that the total number of even episodes up to time $T$ is $\lfloor T/2 \rfloor$. Therefore, we obtain the inequality
\begin{align*}
  \frac{\epsilon T}{4} \leq \sqrt{4\sigma^2NH T \left( \log\left(\frac{4N}{\delta}\right) + 2\log\left(T\right) \right)} + \epsilon N,
\end{align*}
where we used that $\bar{t}-1\leq N$. Taking the square of both sides and using $(x+y)^2\leq 2(x^2 + y^2)$,
\begin{align}\label{eq:worst-case-adaptive-ineq-T}
\frac{\epsilon^2 T^2}{16} \leq 8\sigma^2NH T \left( \log\left(\frac{4N}{\delta}\right) + 2\log\left(T\right) \right) + 2\epsilon^2 N^2,
\end{align}
Up to constants, this is the same inequality we obtain in \eqref{eq:worst-case-adaptive-ineq-T-2} for the proof of Theorem \ref{th:max-diam-sample-comp}. By repeating exactly the same steps as for Theorem \ref{th:max-diam-sample-comp}, we obtain
\begin{align*}
  T \leq \frac{256\sigma^2 NH}{\epsilon^2} \left( \log\left(\frac{4N}{\delta}\right) + 4\log\left(\frac{512\sigma^2 NH \log\left(\frac{4N}{\delta}\right)}{\epsilon^2}\right) \right) + 8N.
\end{align*}}
The proof is concluded by noting that $\tau$ cannot be larger than the bound above plus two (since the maximum-diameter rule is called only every two steps) and that $N\leq SAH$.
%Let us first find a crude bound on $T$. Using $\log(1+T)\leq\sqrt{1+T}$ followed by some trivial bounds,
%\begin{align*}
%\frac{\epsilon^2 T^2}{16} &\leq \left(\log\left(\frac{eN}{\delta}\right) + \sqrt{1+T}\right) \left(2NHT + N^2\right)
%\\ &\leq 3\left(\log\left(\frac{eN}{\delta}\right) + \sqrt{1+T}\right) N^2 H T
%\\ &\leq 3\left(\log\left(\frac{eN}{\delta}\right) + 1\right) N^2 H T \sqrt{1+T}
%\\ &\leq 6 \log\left(\frac{eN}{\delta}\right)  N^2 H T \sqrt{1+T}
%\\ &\leq 12 \log\left(\frac{eN}{\delta}\right)  N^2 H T^{3/2}.
%\end{align*}
%From this we obtain
%\begin{align*}
%T \leq \left(\frac{192 N^2 H \log\left(\frac{eN}{\delta}\right) }{\epsilon^2}\right)^2.
%\end{align*}
%Plugging this back into the log term in \eqref{eq:worst-case-adaptive-ineq-T},
%\begin{align*}
%\frac{\epsilon^2 T^2}{16} &\leq \left(\log\left(\frac{eN}{\delta}\right) + 2\log\left(2\frac{192 N^2 H \log\left(\frac{eN}{\delta}\right) }{\epsilon^2}\right)\right) \left(2NHT + N^2\right)
%\\ &\leq \left(\log\left(\frac{eN}{\delta}\right) + 4\log\left(\frac{20 N H \log\left(\frac{eN}{\delta}\right) }{\epsilon}\right)\right) \left(2NHT + N^2\right).
%\end{align*}
%Solving the inequality for $T$, one can easily show that
%\begin{align*}
%T\leq \frac{48NH}{\epsilon^2}\left(\log\left(\frac{eN}{\delta}\right) + 4\log\left(\frac{20 N H \log\left(\frac{eN}{\delta}\right) }{\epsilon}\right)\right).
%\end{align*}
%This holds for any $T$ at the end of which the algorithm did not stop. Therefore, $\tau$ cannot be larger than the bound above plus one. The proof is concluded by noting that $N\leq SAH$.
\end{proof}

\begin{proof}[Proof of Theorem \ref{th:max-cover-sample-comp}]
The proof simply combines Theorem \ref{th:sample-comp-instance-dependent-adaptive} and Theorem \ref{th:sample-comp-worst-case-adaptive} together with the fact that the good event $\cG$ holds with probability at least $1-\delta$ (Lemma \ref{lem:good-event-whp}).
\end{proof}

\subsection{Maximum-diameter sampling}\label{app:max-diam}
We now state the main Theorem which gives guarantees on the sample complexity of EPRL when it is coupled with Maximum Diameter sampling (Line 17 in Algorithm \ref{alg:elimination-alg}).
%\todoe{State the formal result here only, and brag about the target trick}
\begin{theorem}\label{th:max-diam-sample-comp}
With probability at least $1-\delta$, the sample complexity of Algorithm \ref{alg:elimination-alg} combined with the maximum-diameter sampling rule is bounded as
\hl{
\begin{align*}
\tau &\leq \sum_{h\in[H]} \sum_{s\in\cS_h}\sum_{a\in\cA_h(s)} \frac{32\sigma^2H^2}{\max\left(\overline{\Delta}_h(s,a),{\overline{\Delta}_{\min}}, {\epsilon} \right)^2} \left(\log\left(\frac{4N}{\delta}\right) + L\right) + N + 1,
\end{align*}
where $L := 8\log\left( \frac{16N \sigma H \log\left(\frac{4N}{\delta}\right)}{\epsilon} \right)$. Moreover, with the same probability,
\begin{align*}
\tau \leq \frac{128\sigma^2 SAH^2}{\epsilon^2} \left( \log\left(\frac{4SAH}{\delta}\right) + 4\log\left(\frac{256\sigma^2 SAH^2 \log\left(\frac{4SAH}{\delta}\right)}{\epsilon^2}\right) \right) + 2SAH + 1.
\end{align*}}
\end{theorem}
% \begin{theorem}\label{th:max-diam-instance-dep}
% Under event $\cG$, the sample complexity of Algorithm \ref{alg:elimination-alg} combined with the maximum-diameter sampling rule is bounded as
% \begin{align*}
%   \tau &\leq \sum_{h\in[H]} \sum_{s\in\cS_h}\sum_{a\in\cA_h(s)} \frac{32\sigma^2H^2}{\max\left(\overline{\Delta}_h(s,a),{\overline{\Delta}_{\min}}, {\epsilon} \right)^2} \left(\log\left(\frac{4N}{\delta}\right) + L\right) + N + 1,
%   \end{align*}
% where $L := 8\log\left( \frac{16N \sigma H \log\left(\frac{4N}{\delta}\right)}{\epsilon} \right)$.
% \end{theorem}
\begin{proof}

We first derive the instance-dependent bound and then focus on the worst-case one separately.

\paragraph{Part 1. Instance-dependent bound.}

We use the following ``target trick'' to obtain a sample complexity which scales as the sum of inverse gaps. Instead of bounding the number of times each state-action pair is visited, we imagine that each played policy ``targets'' some state-action pair and bound the number of times each state-action pair is targeted. Formally, we say that the policy $\pi^t$ played at time $t$ targets $(s,a)$ at stage $h$ if the following event occurs:
\begin{align*}
G_{s,a,h}^t := \left\{ h \in \argmin_{l\in[H]} n_l^{t-1}(s_l^{\pi^t},a_l^{\pi^t}), s_h^{\pi^t} = s, a_h^{\pi^t} = a \right\}.
\end{align*}
Intuitively, we say that policy $\pi^t$ targets the state-action pair (along its trajectory) that has been visited the least so far. Then, since at each time step at least one state-action-stage triplet is targeted,
\begin{align}\label{eq:target-trick-tau}
\tau \leq \sum_{h=1}^H \sum_{s\in\cS_h}\sum_{a\in\cA_h(s)} Z_h^\tau(s,a),
\end{align}
where $Z_h^\tau(s,a) := \sum_{t=1}^\tau \indi{G_{s,a,h}^t}$ is the number of times $(s,a,h)$ is targeted up to the stopping time. Thus, we shall focus on bounding $Z_h^t(s,a)$ for some fixed time $t$. Note that $Z_h^t(s,a) \leq n_h^t(s,a)$ since a targeted state-action-stage triplet is necessarily visited at time $t$. Moreover, $n_h^t(s,a)$ can be much larger than $Z_h^t(s,a)$ since $(s,a,h)$ could be visited even without being the target.

\paragraph{Bounding $Z_h^t(s,a)$}

Let $t$ be any episode at which the algorithm did not stop. For any $(s,a,h)$, 
\begin{align*}
 \max\left(\frac{\overline{\Delta}_h(s,a)}{4},\frac{\overline{\Delta}_{\min}}{4}, \frac{\epsilon}{2} \right) 
 &\stackrel{(a)}{\leq} \max_{\pi\in\Pi^{t-1}}\sum_{h=1}^H b_h^t(s_h^\pi,a_h^\pi) 
   \stackrel{(b)}{\leq} \max_{\pi\in\Pi^{t-1}}\sum_{h=1}^H b_h^{t-1}(s_h^\pi,a_h^\pi)
  \\ &\stackrel{(c)}{=} 
 \sum_{h=1}^H b_h^{t-1}(s_h^{\pi^t},a_h^{\pi^t})
 \stackrel{(d)}{\leq} \hl{\sum_{h=1}^H\sqrt{\frac{\beta(t,\delta)}{n^{t-1}_h(s_h^{\pi^t},a_h^{\pi^t})}}},
\end{align*}
where (a) is from Lemma \ref{lem:diam-vs-gap}, (b) from the monotonicity of the bonuses, (c) from the definition of the maximum-diameter sampling rule, and (d) from the definition of the bonuses. Now we distinguish two cases. If $G_{s,a,h}^t$ holds, then
\begin{align*}
\forall l\in[H] : n^{t-1}_l(s_l^{\pi^t},a_l^{\pi^t}) \geq n^{t-1}_h(s,a) \geq Z_{h}^{t-1}(s,a).
\end{align*}
Plugging this into the inequality above and rearranging, we obtain
\begin{align*}
\max\left(\frac{\overline{\Delta}_h(s,a)}{4},\frac{\overline{\Delta}_{\min}}{4}, \frac{\epsilon}{2} \right) \leq H\sqrt{\frac{\beta(t,\delta)}{Z_{h}^{t-1}(s,a)}} \implies Z_h^t(s,a) \leq \frac{16H^2\beta(t,\delta)}{\max\left(\overline{\Delta}_h(s,a),{\overline{\Delta}_{\min}}, {\epsilon} \right)^2} + 1.
\end{align*}
On the other hand, in case $G_{s,a,h}^t$ does not hold, then $Z_h^t(s,a) = Z_h^{t-1}(s,a)$ and we can recursively apply the reasoning above to obtain the same bound on $Z_h^t(s,a)$. 

\paragraph{Bounding $\tau$}

Evaluating this bound at $t=\tau-1$, plugging it into \eqref{eq:target-trick-tau}, and expanding the definition of the threshold $\beta$, we obtain
\hl{
\begin{align*}
\tau &\leq \sum_{h\in[H]} \sum_{s\in\cS_h}\sum_{a\in\cA_h(s)} \left( \frac{16H^2\beta(\tau-1,\delta)}{\max\left(\overline{\Delta}_h(s,a),{\overline{\Delta}_{\min}}, {\epsilon} \right)^2} + 1 \right)  + 1
\\ &\leq \sum_{h\in[H]} \sum_{s\in\cS_h}\sum_{a\in\cA_h(s)} \frac{32\sigma^2H^2}{\max\left(\overline{\Delta}_h(s,a),{\overline{\Delta}_{\min}}, {\epsilon} \right)^2} \left(\log\left(\frac{4N}{\delta}\right) + 2\log(\tau) \right) + N + 1.
\end{align*}
We conclude by applying Lemma \ref{lem:simplify-ineq-2} with $B = \sum_{h\in[H]} \sum_{s\in\cS_h}\sum_{a\in\cA_h(s)} \frac{64\sigma^2H^2}{\max\left(\overline{\Delta}_h(s,a),{\overline{\Delta}_{\min}}, {\epsilon} \right)^2}$ and $C = \sum_{h\in[H]} \sum_{s\in\cS_h}\sum_{a\in\cA_h(s)} \frac{32\sigma^2H^2}{\max\left(\overline{\Delta}_h(s,a),{\overline{\Delta}_{\min}}, {\epsilon} \right)^2} \log\left(\frac{4N}{\delta}\right) + N + 1 $, while noting that
\begin{align*}
\log(B^2 + 2C) 
%&\leq \log\left( 2 \left(\sum_{h\in[H]} \sum_{s\in\cS_h}\sum_{a\in\cA_h(s)} \frac{64\sigma^2H^2 \log\left(\frac{4N}{\delta}\right)}{\max\left(\overline{\Delta}_h(s,a),{\overline{\Delta}_{\min}}, {\epsilon} \right)^2}\right)^2 + 2N + 2\right)
&\leq \log\left( 2 \left(\frac{64N \sigma^2H^2 \log\left(\frac{4N}{\delta}\right)}{\epsilon^2}\right)^2 + 2N + 2\right)
\\ &\leq \log\left( 4 \left(\frac{64N \sigma^2H^2 \log\left(\frac{4N}{\delta}\right)}{\epsilon^2}\right)^2 \right) 
%\\ &\leq 2\log\left( \frac{128N \sigma^2H^2 \log\left(\frac{4N}{\delta}\right)}{\epsilon^2} \right)
\leq 4\log\left( \frac{16N \sigma H \log\left(\frac{4N}{\delta}\right)}{\epsilon} \right).
%4\left(\frac{8NH^2}{\max\left({\overline{\Delta}_{\min}}, {\epsilon} \right)^2}\right)^2\log\left(\frac{eN}{\delta}\right).
\end{align*}}
% \begin{theorem}\label{th:max-diam-worst-case}
% Under event $\cG$, the sample complexity of Algorithm \ref{alg:elimination-alg} combined with the maximum-diameter sampling rule is bounded as\hl{
% \begin{align*}
% \tau \leq \frac{128\sigma^2 SAH^2}{\epsilon^2} \left( \log\left(\frac{4SAH}{\delta}\right) + 4\log\left(\frac{256\sigma^2 SAH^2 \log\left(\frac{4SAH}{\delta}\right)}{\epsilon^2}\right) \right) + 2SAH + 1.
% %\\ \frac{16SAH^2}{\epsilon^2}\left(\log\left(\frac{eSAH}{\delta}\right) + 4\log\left(\frac{10 SA H^2 \log\left(\frac{eSAH}{\delta}\right) }{\epsilon}\right)\right) + 1.
% \end{align*}}
% \end{theorem}

\paragraph{Part 2. Worst-case bound.}

Take any time $T$ at the end of which the algorithm did not stop. \hl{Let $\bar{t}$ be the first time step where all active triplets $(s,a,h)$ have at least one visit. Note that $\bar{t} \leq N$ almost surely by definition of the sampling rule: since an active unvisited triplet has infinite confidence interval, the algorithm must visit at least a new one of such triplets in each episode. Then, for any $\bar{t} \leq t \leq T$,}
\begin{align*}
\frac{\epsilon}{2} \leq \max_{\pi\in\Pi^{t}} \sum_{h=1}^H b^{t}_h(s_h^\pi,a_h^\pi) = \sum_{h=1}^H b_h^{t}(s_h^{\pi^{t+1}},a_h^{\pi^{t+1}})
 \leq \sum_{h=1}^H\sqrt{\frac{\beta(t,\delta)}{n^{t}_h(s_h^{\pi^{t+1}},a_h^{\pi^{t+1}})\vee 1}},
\end{align*}
where the first inequality follows from the first stopping rule, the equality uses the definition of the maximum-diameter sampling rule, and \hl{the last inequality uses the definition of the bonuses together with $n_h^t(s,a) \leq t$ and $n^{t}_h(s_h^{\pi^{t+1}},a_h^{\pi^{t+1}}) \geq 1$ since $t \geq \bar{t}$. Then,
\begin{align*}
\frac{\epsilon}{2}(T-\bar{t}+1) 
&\stackrel{(a)}{\leq} \sum_{t=\bar{t}}^T\sum_{h=1}^H\sqrt{\frac{\beta(t,\delta)}{n^{t}_h(s_h^{\pi^{t+1}},a_h^{\pi^{t+1}}) \vee 1}}
\\ &\stackrel{(b)}{=} \sum_{h=1}^H \sum_{s\in\cS_h}\sum_{a\in\cA_h(s)}\sum_{t=\bar{t}}^T\indi{(s_h^{\pi^{t+1}},a_h^{\pi^{t+1}})=(s,a)} \sqrt{\frac{\beta(t,\delta)}{{n}^{t}_h(s,a) \vee 1}}
\\ &\stackrel{(c)}{\leq} \sqrt{\beta(T,\delta)}\sum_{h=1}^H \sum_{s\in\cS_h}\sum_{a\in\cA_h(s)}\sum_{t=\bar{t}}^T\indi{(s_h^{\pi^{t+1}},a_h^{\pi^{t+1}})=(s,a)} \sqrt{\frac{1}{{n}^{t}_h(s,a) \vee 1}}
\\ &\stackrel{(d)}{\leq} \sqrt{\beta(T,\delta)}\sum_{h=1}^H \sum_{s\in\cS_h}\sum_{a\in\cA_h(s)}\sum_{i=2}^{{n}_h^T(s,a)}\sqrt{\frac{1}{i-1}}
\\ &\stackrel{(e)}{\leq} 2\sqrt{\beta(T,\delta)}\sum_{h=1}^H \sum_{s\in\cS_h}\sum_{a\in\cA_h(s)}\sqrt{{n}_h^T(s,a)-1}
\\ &\stackrel{(f)}{\leq} 2\sqrt{\beta(T,\delta) N\sum_{h=1}^H \sum_{s\in\cS_h}\sum_{a\in\cA_h(s)}{n}_h^T(s,a)}
\\ &\stackrel{(g)}{=} 2\sqrt{\beta(T,\delta)NHT} ,
\end{align*}}
where (a) is by summing both sides of the inequality derived at the beginning over all $t$ from $0$ to $T$, (b) is trivial, (c) uses the monotonicity of $\beta(\cdot,\delta)$, (d) uses the standard pigeon-hole principle, (e) uses the inequality $\sum_{i=1}^m \sqrt{1/i} \leq 2\sqrt{m}$, (f) uses Cauchy-Schwartz inequality, and (g) uses that the total number of samples after $T$ episodes is $TH$. Therefore, we obtain the inequality,
\hl{\begin{align*}
  \frac{\epsilon T}{2} \leq 2\sqrt{2\sigma^2 NHT \left( \log\left(\frac{4N}{\delta}\right) + 2\log\left(T\right) \right)} + \frac{\epsilon}{2} N,
\end{align*}
where we used $\bar{t} - 1 \leq N$.
% \begin{align*}
% \frac{\epsilon T}{2} \leq \sqrt{\beta(T,\delta)}\left(\sqrt{2NHT} + N\right) = \sqrt{\frac{1}{2}\log\left(\frac{eN}{\delta}\right) + \frac{1}{2}\log\left(T+1\right)}\left(\sqrt{2NHT} + N\right).
% \end{align*}
Taking the square of both sides and using $(x+y)^2\leq 2(x^2 + y^2)$,
\begin{align}\label{eq:worst-case-adaptive-ineq-T-2}
\frac{\epsilon^2 T^2}{4} \leq 16\sigma^2 NHT \left( \log\left(\frac{4N}{\delta}\right) + 2\log\left(T\right) \right) + \frac{\epsilon^2}{2} N^2
\end{align}
This implies that the lhs is below twice the maximum between the two terms in the rhs. Suppose the first term in \eqref{eq:worst-case-adaptive-ineq-T-2} is the maximum. Then,
\begin{align}\label{eq:abc1}
  \frac{\epsilon^2 T^2}{4} \leq 32\sigma^2 NHT \left( \log\left(\frac{4N}{\delta}\right) + 2\log\left(T\right) \right).
\end{align}
Using $\log(T)\leq\sqrt{T}$, a crude bound on $T$ is
\begin{align*}
  \frac{\epsilon^2 T^2}{4} \leq 32\sigma^2 NHT \left( \log\left(\frac{4N}{\delta}\right) + 2\sqrt{T} \right) \leq 64\sigma^2 NHT^{3/2} \log\left(\frac{4N}{\delta}\right),
\end{align*}
which implies that
\begin{align*}
  T \leq \left(\frac{256\sigma^2 NH \log\left(\frac{4N}{\delta}\right)}{\epsilon^2}\right)^2.
\end{align*}
Plugging this into the log term in \eqref{eq:abc1} and solving for $T$,
\begin{align*}
  T \leq \frac{128\sigma^2 NH}{\epsilon^2} \left( \log\left(\frac{4N}{\delta}\right) + 4\log\left(\frac{256\sigma^2 NH \log\left(\frac{4N}{\delta}\right)}{\epsilon^2}\right) \right).
\end{align*}

Suppose now that the second term in \eqref{eq:worst-case-adaptive-ineq-T-2} is the maximum. Then, we directly get $T \leq 2N$.Then, $T$ must be below the maximum of the two obtained bounds and hence below their sum,
\begin{align*}
  T \leq \frac{128\sigma^2 NH}{\epsilon^2} \left( \log\left(\frac{4N}{\delta}\right) + 4\log\left(\frac{256\sigma^2 NH \log\left(\frac{4N}{\delta}\right)}{\epsilon^2}\right) \right) + 2N.
\end{align*}}
This holds for any $T$ at the end of which the algorithm did not stop. Therefore, $\tau$ cannot be larger than the bound above plus one. The proof is concluded by noting that $N\leq SAH$.
\end{proof}

% \begin{proof}[Proof of Theorem \ref{th:max-diam-sample-comp}]
% The proof simply combines Theorem \ref{th:max-diam-instance-dep} and Theorem \ref{th:max-diam-worst-case} together with the fact that the good event $\cG$ holds with probability at least $1-\delta$ (Lemma \ref{lem:good-event-whp}).
% \end{proof}

\subsection{Auxiliary Results}

\begin{lemma}\label{lem:simplify-ineq-2}
Let $B,C \geq 1$. If $k \leq B\log(k) + C$, then
\begin{align*}
k \leq B\log(B^2 + 2C) + C.
\end{align*}
\end{lemma}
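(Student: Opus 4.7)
My plan is to reduce the problem to proving the crude bound $k \leq B^2 + 2C$, and then plug this back into the right-hand side of the hypothesis. Indeed, if we establish $k \leq B^2 + 2C$, then by monotonicity of $\log$,
\begin{align*}
k \;\leq\; B\log(k) + C \;\leq\; B\log(B^2 + 2C) + C,
\end{align*}
which is the desired conclusion.

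To prove the crude bound $k \leq B^2 + 2C$, I would use a linearization of $\log$ via concavity. Specifically, for any $k_0 > 0$, the tangent inequality $\log(k) \leq \log(k_0) + k/k_0 - 1$ holds. Choosing $k_0 = 2B$ (this is the crucial choice of the proof) gives
\begin{align*}
\log(k) \;\leq\; \log(2B) + \frac{k}{2B} - 1.
\end{align*}
Plugging this into the hypothesis yields $k \leq B\log(2B) + k/2 - B + C$, and rearranging gives $k \leq 2B\log(2B) - 2B + 2C$. Expanding $\log(2B) = \log 2 + \log B$ and using $\log 2 < 1$, this simplifies to the first-stage bound $k \leq 2B\log(B) + 2C$.

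The remaining step is to upgrade $2B\log(B)$ to $B^2$, i.e. to verify that $B \geq 2\log(B)$ for all $B \geq 1$. This is an elementary calculus check: the function $h(B) := B - 2\log(B)$ satisfies $h'(B) = 1 - 2/B$, so it attains its minimum at $B = 2$ with value $h(2) = 2 - 2\log 2 > 0$. Hence $B^2 \geq 2B\log(B)$ on $B \geq 1$, which yields $k \leq B^2 + 2C$ and closes the argument.

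There is no real obstacle here; the only non-trivial input is choosing the tangent point $k_0 = 2B$, which is tailored so that the coefficient of $k$ on the right-hand side becomes $1/2$ after multiplying by $B$, producing the factor of $2$ we see in the $B^2 + 2C$ bound. Everything else follows from a simple arithmetic simplification and the one-line calculus check that $B \geq 2\log B$ on $[1,\infty)$.
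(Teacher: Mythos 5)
Your proof is correct. The overall strategy is the same as the paper's: establish the crude polynomial bound $k \leq B^2 + 2C$ and then substitute it into the logarithm of the original hypothesis. Where you diverge is in how that crude bound is obtained. The paper uses the inequality $\log(k) \leq \sqrt{k}$ to turn the hypothesis into $k \leq B\sqrt{k} + C$, solves the resulting quadratic in $\sqrt{k}$ to get $\sqrt{k} \leq \tfrac{B}{2} + \sqrt{\tfrac{B^2}{4}+C}$, and squares using $(x+y)^2 \leq 2(x^2+y^2)$. You instead linearize $\log$ via the concavity tangent at $k_0 = 2B$, which after rearrangement gives $k \leq 2B\log(2B) - 2B + 2C \leq 2B\log(B) + 2C$, and then you upgrade $2B\log(B)$ to $B^2$ via the elementary check $B \geq 2\log(B)$ on $[1,\infty)$. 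Both derivations are fully rigorous and of comparable length; your tangent-line route is the more standard ``self-bounding'' trick for implicit logarithmic inequalities and makes the source of the factor $2$ transparent (the choice $k_0 = 2B$ leaves a coefficient $\tfrac12$ on $k$), while the paper's route avoids any calculus beyond solving a quadratic. Amusingly, both arguments ultimately rest on the same numerical fact, since $\log(k) \leq \sqrt{k}$ and $B \geq 2\log(B)$ are each verified by minimizing a function whose minimum value is $2 - 2\log 2 > 0$.
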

\begin{proof}
Since $\log(k) \leq \sqrt{k}$ for any $k\geq 1$, we have that $k \leq B\sqrt{k} + C$.
Solving this second-order inequality, we get the crude bound $\sqrt{k} \leq \frac{B}{2} + \sqrt{\frac{B^2}{4} + C}$, which in turns yields $k \leq B^2 + 2C$ using that $(x+y)^2 \leq 2(x^2+y^2)$ for $x,y\geq 0$. The statement follows by plugging this bound into the logarithm.
\end{proof}

% \begin{lemma}\label{lem:simplify-ineq}
% Let $B,C\geq 1$. If $k \leq CB + C\log(1+k)$, then
% \begin{align*}
% k 
% &\leq CB + C\log(1 + C^2 + 2C(B+1))
% \\ &\leq C\left( B + \log(2) + 2\log(C + \sqrt{2CB}) \right).
% \end{align*}
% \end{lemma}
% \begin{proof}
% Since $\log(1+k) \leq \sqrt{1+k}$ and $\sqrt{1+k} \leq 1 + \sqrt{k}$ for any $k\geq 0$, we have that
% \begin{align*}
% k \leq CB + C\sqrt{1+k} \leq C(B+1) + C\sqrt{k}.
% \end{align*}
% Solving this second-order inequality, we get the crude bound $\sqrt{k} \leq \frac{C}{2} + \sqrt{\frac{C^2}{4} + C(B+1)}$, which in turns yields $k \leq C^2 + 2C(B+1)$ using that $(x+y)^2 \leq 2(x^2+y^2)$ for $x,y\geq 0$. The first inequality follows by plugging this bound into the logarithm in the righthand side. To see the second one, note that
% \begin{align*}
% \sqrt{k} \leq C + \sqrt{C(B+1)} \leq C + \sqrt{2CB}.
% \end{align*}
% Thus,
% \begin{align*}
% k &\leq CB + C\log(1+(C + \sqrt{2CB})^2) 
% \\ &\leq CB + C\log(2(C + \sqrt{2CB})^2)
% \\ &\leq CB + C\log(2) + 2C\log(C + \sqrt{2CB}).
% \end{align*}
% \end{proof}

%!TEX root = ../main.tex 

\section{Refined Results for Tree-based MDPs} \label{app:tree}

In this appendix, we show that all our results can be refined for the specific class of deterministic MDPs represented by a tree, i.e., where each reachable state has exactly one incoming arc (except for the initial state which has none). This implies that there exists a unique path to reach each state $s\in\cS_h$ at stage $h>1$ from the root.

\subsection{Instance-dependent lower bound}

In the case of tree-based MDPs, one can derive a lower bound with an improved $H$ factor. The intuition behind this result is the following: While in general MDPs the policies going through different triplets $(s,a,h)$ and $(s',a',h)$ may share some common state-action pairs at any further stage $l \ge h$, such phenomenon does not occur in tree-based MDPs. This makes the learning problem more difficult, as learning whether $(s,a,h)$ is optimal or not does not gives us side-information about $(s',a',h)$. Throughout this section, we will be using the same notation as Section \ref{sec:lower-bounds}.
\begin{theorem}\label{th:LB_tree_based}
Suppose that $\cM$ is tree-based. Then:
\begin{align*}
    \bE[\tau] \geq \max_h \sum_{s\in \cS_h, a\in \cA_h(s) }\frac{\hl{\sigma^2}(H-h+1)\log(1/4\delta)}{4\max(\overline{\Delta}_h(s,a), \overline{\Delta}_{\min}^h, \epsilon)^2},
\end{align*}
where $\overline{\Delta}_{\min}^h := \min_{(s',a') : \overline{\Delta}_h(s',a') > 0} \overline{\Delta}_h(s',a')$.

\end{theorem}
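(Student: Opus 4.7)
The plan is to sharpen the local lower bound of Theorem~\ref{th:instance-lb} by a factor of $(H-h+1)$ using a subtree-wide change of measure, and then to conclude with the identity $\bE_{\cM}[\tau] = \sum_{s\in\cS_h,a\in\cA_h(s)}\bE_{\cM}[n_h^\tau(s,a)]$, which holds at every stage because each episode contributes exactly one visit per stage.

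I would rework the three auxiliary lemmas (Lemma~\ref{lem:lb-suboptimal}, Lemma~\ref{lem:lb-optimal-nonunique}, Lemma~\ref{lem:lb-optimal-unique}) so that the alternative MDP $\widetilde{\cM}$ shifts the reward by $\pm\Delta$ at \emph{every} $(s',a',l)$ with $l\geq h$ such that $(s',l)$ lies in the subtree rooted at $(s,h)$, rather than only at $(s,a,h)$. Because $\cM$ is a tree, every policy $\pi\in\Pi_{s,a,h}$ follows the unique path to $(s,h)$ and then remains inside that subtree for the remaining $H-h+1$ stages, so its return is shifted by exactly $(H-h+1)\Delta$, while policies outside $\Pi_{s,a,h}$ see no change. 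Consequently, to reverse the identity of an $\epsilon$-optimal policy in $\widetilde{\cM}$ it suffices to take $|\Delta|$ slightly above $(\overline{\Delta}_h(s,a)+\epsilon)/(H-h+1)$ in the sub-optimal case, and analogously $(2\epsilon-\overline{\Delta}_h(s,a))/(H-h+1)$ or $(\overline{\Delta}_{\min}^h+\epsilon)/(H-h+1)$ in the two $\epsilon$-optimal cases, mirroring the three sub-cases of Theorem~\ref{th:instance-lb}.

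Applying Lemma~1 of \cite{kaufmann2016complexity} to the pair $(\cM,\widetilde{\cM})$, the KL accumulates over all modified triplets:
\[
\frac{\Delta^2}{2}\sum_{l=h}^H\sum_{(s',a')\text{ in subtree of }(s,h)\text{ at stage }l} \bE_{\cM}[n_l^\tau(s',a')] \;\geq\; \mathrm{kl}\bigl(\bP_{\cM}(E),\bP_{\widetilde{\cM}}(E)\bigr).
\]
The crucial tree identity is that for every $l\geq h$ the inner sum equals $\bE_{\cM}[n_h^\tau(s,a)]$: in a tree, visits to the subtree rooted at $(s,h)$ can only arise by first visiting $(s,a,h)$, and once inside that subtree a policy contributes exactly one visit per stage. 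Thus the double sum collapses to $(H-h+1)\bE_{\cM}[n_h^\tau(s,a)]$, and letting $\Delta$ tend to its critical threshold yields the improved local bound $\bE_{\cM}[n_h^\tau(s,a)] \geq (H-h+1)\log(1/4\delta)/(4\max(\overline{\Delta}_h(s,a),\overline{\Delta}_{\min}^h,\epsilon)^2)$. Summing this over $(s,a)$ at any fixed stage $h$ and maximizing over $h$ gives the theorem.

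The main obstacle I anticipate is the bookkeeping for the unique-optimal-trajectory case. There, Lemma~\ref{lem:lb-optimal-unique} decreases the reward at a single triplet to make the unique optimal trajectory strictly sub-$\epsilon$-optimal in $\widetilde{\cM}$; doing this on an entire subtree simultaneously shifts the gaps of every descendant of $(s,h)$, so one has to verify that the critical threshold on $|\Delta|$ still reads as $(\overline{\Delta}_{\min}^h+\epsilon)/(H-h+1)$ rather than some degraded combination involving descendants' gaps. I expect this to go through by comparing the returns of the best policy in $\Pi_{s,a,h}$ with that of the best policy outside $\Pi_{s,a,h}$ in $\widetilde{\cM}$ exactly as in the original proof, the only change being that the scalar $\Delta$ is replaced by $(H-h+1)\Delta$ in the value-function perturbation.
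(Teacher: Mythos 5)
Your proposal is essentially the paper's own proof: the paper perturbs the rewards on the set $E(s,a,h)$ of all arcs at stages $l\geq h$ reachable by policies in $\Pi_{s,a,h}$, runs the same three case distinctions with thresholds $(\overline{\Delta}_h(s,a)+\epsilon)/(H-h+1)$, $(2\epsilon-\overline{\Delta}_h(s,a))/(H-h+1)$, $(\overline{\Delta}_{\min}^h+\epsilon)/(H-h+1)$, and then uses disjointness of the sets $E(s,a,h)$ across $(s,a)$ at a fixed stage together with $\bE[\tau]=\frac{1}{H-h+1}\sum_{l\geq h}\sum_{s',a'}\bE[n_l^\tau(s',a')]$; your collapse identity $\sum_{(s',a',l)\in E(s,a,h)}\bE[n_l^\tau(s',a')]=(H-h+1)\bE[n_h^\tau(s,a)]$ followed by the stage-$h$ counting identity is an equivalent way to finish, and your anticipated difficulty in the unique-optimal case resolves exactly as you expect. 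One imprecision must be fixed, though: you describe the perturbation set as all arcs $(s',a',l)$ with $(s',l)$ in the subtree rooted at the \emph{node} $(s,h)$. Taken literally this includes the sibling arcs $(s,a',h)$, $a'\neq a$, and their descendants, so every policy in $\Pi_{s,a',h}$ would also have its return shifted by $(H-h+1)\Delta$ and your claim that ``policies outside $\Pi_{s,a,h}$ see no change'' would fail whenever $|\cA_h(s)|>1$; the correct set is the arc $(s,a,h)$ together with the subtree hanging off its endpoint $(f_h(s,a),h+1)$, i.e.\ exactly the arcs visited by policies in $\Pi_{s,a,h}$ at stages $l\geq h$, which is what the rest of your argument implicitly assumes.
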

The proof of this theorem relies on the following lemma which is refined version of Lemmas \ref{lem:lb-suboptimal}, \ref{lem:lb-optimal-nonunique} and \ref{lem:lb-optimal-unique} in the case of tree-based MDPs. Before we state the lemma, we define for any triplet $(s,a,h)$ the set $$E(s,a,h) = \big\{(s',a',l):\ l \in [|h,H|],\ s'\in \cS_l,\ a'\in\cA_l(s),\ \exists \pi \in \Pi_{s,a,h},\ s_l^\pi = s', a_l^\pi = a' \big\}.$$
In words, $E(s,a,h)$ is the set of triplets at stages $l\geq h$ that are visited by the policies in the set $\Pi_{s,a,h}$.
\begin{lemma}\label{lem:lower-bounds-tree-based}
Suppose that $\cM$ is tree-based and fix any stage $h\in [H]$. We have:
\begin{enumerate}
    \item For suboptimal pairs $(s,a) \notin \cZ_h^\epsilon$: 
    \begin{align*}
        \sum_{(s',a',l) \in E(s,a,h)} \bE[n_l^\tau(s',a')] \geq \frac{2\hl{\sigma^2}(H-h+1)^2}{(\overline{\Delta}_h(s,a)+\epsilon)^2}\log(1/2.4\delta).
    \end{align*}
    \item For non-unique optimal pairs $(s,a) \in \cZ_h^\epsilon$ and $|\cZ_h^\epsilon|>1$:
        \begin{align*}
        \sum_{(s',a',l) \in E(s,a,h)} \bE[n_l^\tau(s',a')] \geq \frac{\hl{\sigma^2}(H-h+1)^2}{4\epsilon^2}\log(1/4\delta).
    \end{align*}
    \item For unique optimal pairs $(s,a) \in \cZ_h^\epsilon$ and $|\cZ_h^\epsilon| = 1$:
     \begin{align*}
        \sum_{(s',a',l) \in E(s,a,h)} \bE[n_l^\tau(s',a')]\geq \frac{2\hl{\sigma^2}(H-h+1)^2}{(\overline{\Delta}_{\min}^h+\epsilon)^2}\log(1/4\delta),
    \end{align*}
    where $\overline{\Delta}_{\min}^h := \min_{(s',a') : \overline{\Delta}_h(s',a') > 0} \overline{\Delta}_h(s',a')$.
\end{enumerate}

\end{lemma}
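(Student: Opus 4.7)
The plan is to extend the change-of-measure arguments of Lemmas~\ref{lem:lb-suboptimal}, \ref{lem:lb-optimal-nonunique}, and \ref{lem:lb-optimal-unique} by perturbing the rewards on the \emph{entire} subtree rooted at $(s,a,h)$ rather than at the single triplet. The crucial structural fact I would establish first is that, in a tree-based MDP, a policy $\pi \notin \Pi_{s,a,h}$ cannot visit any triplet in $E(s,a,h)$: since each reachable state has a unique incoming arc, for every $l > h$ the sequence of triplets along the unique path from $(s_1,1)$ to $(s',l)$ is determined by $(s',l)$; hence if some policy reaching $(s',l)$ belongs to $\Pi_{s,a,h}$, then every policy reaching $(s',l)$ does. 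In particular, any $\pi \in \Pi_{s,a,h}$ visits exactly $H-h+1$ triplets of $E(s,a,h)$, one at each stage $l\in\{h,\dots,H\}$, and any $\pi \notin \Pi_{s,a,h}$ visits none.

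For each of the three cases I would then construct an alternative MDP $\widetilde{\cM}\in\mathfrak{M}_1$ identical to $\cM$ except that the Gaussian mean at every $(s',a',l)\in E(s,a,h)$ is shifted by $\pm \Delta/(H-h+1)$, for a case-dependent $\Delta>0$. By the structural fact, this leaves $\widetilde{V}_1^\pi(s_1)=V_1^\pi(s_1)$ for $\pi\notin\Pi_{s,a,h}$, while $\widetilde{V}_1^\pi(s_1)=V_1^\pi(s_1)\pm\Delta$ for $\pi\in\Pi_{s,a,h}$. Thus the alternative instances are exactly those used in the three original lemmas, only that the single-triplet perturbation of total magnitude $\Delta$ is now spread uniformly over the $(H-h+1)$-stage tail. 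The choices of $\Delta$ and the case splits (in particular, conditioning on whether $\bP_{\cM}(\widehat{\pi}\in \Pi_{s,a,h})\le 1/2$ in case~2) are taken verbatim from those proofs, and they guarantee that under $\widetilde{\cM}$ the recommended policy is forced into the opposite subset with probability at least $1-\delta$.

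The final step is to apply Lemma~1 of \cite{kaufmann2016complexity} to the event $\{\widehat{\pi}\in\Pi_{s,a,h}\}$. Since the only reward distributions that differ between $\cM$ and $\widetilde{\cM}$ are those at triplets in $E(s,a,h)$, and each is a unit-variance Gaussian with mean shifted by $\Delta/(H-h+1)$ (so the per-sample KL equals $\Delta^2/(2(H-h+1)^2)$), the change-of-measure inequality becomes
\begin{align*}
\sum_{(s',a',l)\in E(s,a,h)} \bE_\cM[n_l^\tau(s',a')]\cdot\frac{\Delta^2}{2(H-h+1)^2}
\;\ge\; \mathrm{kl}\!\left(\bP_{\cM}(\widehat{\pi}\in\Pi_{s,a,h}),\,\bP_{\widetilde{\cM}}(\widehat{\pi}\in\Pi_{s,a,h})\right).
\end{align*}
Bounding the right-hand side with the same elementary kl inequalities used in Lemmas~\ref{lem:lb-suboptimal}--\ref{lem:lb-optimal-unique} (namely $\mathrm{kl}(\delta,1-\delta)\ge\log(1/2.4\delta)$ and $\mathrm{kl}(1/2,\delta)\ge\tfrac12\log(1/4\delta)$), rearranging, and sending $\Delta$ to its limiting value in each case yields the three stated bounds, with an overall gain of $(H-h+1)^2$ compared to the original lemmas. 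The only real obstacle is verifying the structural property in the first paragraph; once that is in place, the perturbation construction and the remaining computations are direct adaptations of the three original proofs, the only new ingredient being that the per-sample KL cost is now $\Delta^2/(2(H-h+1)^2)$ but it is paid at $H-h+1$ distinct triplets rather than one.
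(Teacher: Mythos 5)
Your proposal is correct and follows essentially the same route as the paper's proof: perturb the Gaussian means on all of $E(s,a,h)$ so that every policy in $\Pi_{s,a,h}$ accumulates the full shift over the $H-h+1$ tail stages while policies outside $\Pi_{s,a,h}$ are unaffected (the structural fact you isolate is exactly what the paper uses implicitly when it asserts that values of policies in $\Pi\setminus\Pi_{s,a,h}$ are unchanged), then apply the change-of-measure lemma with the same case splits and limiting values of $\Delta$. The only difference is the immaterial reparametrization of $\Delta$ as the total rather than per-triplet shift.
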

\begin{proof}
We distinguish four cases. 
\paragraph{ Case 1: $(s,a) \notin \cZ_h^\epsilon$.}
Consider the alternative MDP $\widetilde{\cM} := (\cS, \cA, \{f_h,\widetilde{\nu}_h\}_{h\in[H]}, s_1, H)$ which is equivalent to $\cM$ except that the reward is f only at the pairs  $(s',a',l) \in E(s,a,h)$ as $\widetilde{\nu}_l(s',a') = \cN(r_l(s',a') + \Delta, \hl{\sigma^2})$ with $\Delta > \frac{\overline{\Delta}_h(s,a)+\epsilon}{H-h+1}$, while the reward distribution remains the same on all other state-action-stage triplets. Note that the values of policies in $\Pi\setminus\Pi_{s,a,h}$ remain unchanged. On the other hand, for all $\pi \in \argmax_{\pi \in \Pi_{s,a,h}} V_1^\pi(s_1)$,

\[\widetilde{V}_1^{\pi}(s_1) = V_1^{\pi}(s_1)+ (H-h+1)\Delta > V_1^{\pi}(s_1)+ \overline{\Delta}_h(s,a)+\epsilon = V_1^{\star}(s_1)+\epsilon \ge \max_{\pi \notin \Pi_{s,a,h}} \widetilde{V}_1^{\pi}(s_1)+\epsilon,
\]
where the first equality is because we increased the mean reward by $\Delta$ at the pairs $(s_l^\pi, a_l^\pi)_{l\in [|h,H|]}$ and the second equality comes from the definition of $\overline{\Delta}_h(s,a)$ and the fact that $\pi \in \argmax_{\pi \in \Pi_{s,a,h}} V_1^\pi(s_1)$. Now from the inequality above we deduce that $\bP_{\widetilde{\cM}}(\widehat{\pi} \in \Pi_{s,a,h}) \geq 1-\delta$. On the other hand, since $(s,a) \notin \cZ_h^\epsilon$, $\bP_{\cM}(\widehat{\pi} \in \Pi_{s,a,h}) \leq \delta$. Therefore Lemma 1 from \cite{kaufmann2016complexity} implies that:
\begin{align*}
    \sum_{(s',a',l) \in E(s,a,h)} \bE[n_l^\tau(s',a')]&\geq \frac{2}{\Delta^2}\mathrm{kl}(\bP_{{\cM}}(\widehat{\pi} \in \Pi_{s,a,h}),\bP_{\widetilde{\cM}}(\widehat{\pi} \in \Pi_{s,a,h}))\\
    &\geq \frac{2\hl{\sigma^2}}{\Delta^2}\kl(\delta, 1-\delta) \geq \frac{2\hl{\sigma^2}}{\Delta^2}\log(1/2.4\delta).
\end{align*}
This holds for any $\Delta > \frac{\overline{\Delta}_h(s,a)+\epsilon}{H-h+1}$ and the first statement is obtained by taking the limit.

\paragraph{Case 2: $(s,a) \in \cZ_h^\epsilon,\ |\cZ_h^\epsilon|>1$ and $\bP_{\cM}(\widehat{\pi} \in \Pi_{s,a,h}) \leq 1/2$.} We consider the same $\widetilde{\cM}$ from the previous case. We still have that $\bP_{\widetilde{\cM}}(\widehat{\pi} \in \Pi_{s,a,h}) \geq 1-\delta$. Using Lemma 1 from \cite{kaufmann2016complexity} we get
$$\sum_{(s',a',l) \in E(s,a,h)} \bE[n_l^\tau(s',a')] \geq \frac{2}{\Delta^2}\mathrm{kl}(\bP_{{\cM}}(\widehat{\pi} \in \Pi_{s,a,h}),\bP_{\widetilde{\cM}}(\widehat{\pi} \in \Pi_{s,a,h})) \geq \frac{2\hl{\sigma^2}}{\Delta^2}\kl(1/2, 1-\delta).$$ 
By taking the limit $\Delta \to \frac{\overline{\Delta}_h(s,a)+\epsilon}{H-h+1}$ we get:
\begin{align*}
    \sum_{(s',a',l) \in E(s,a,h)} \bE[n_l^\tau(s',a')]
    &\geq \frac{2\hl{\sigma^2}}{(\overline{\Delta}_h(s,a)+\epsilon)^2}\kl(1/2, 1-\delta)\\
    &= \frac{2\hl{\sigma^2}(H-h+1)^2}{\left(\overline{\Delta}_h(s,a)+\varepsilon\right)^2} \kl(1/2, \delta)\\
   & \ge \frac{\hl{\sigma^2}(H-h+1)^2}{4\varepsilon^2} \log(1/4\delta),
\end{align*}
where we used the fact that $\kl(x,y) = \kl(1-x,1-y)$, $\kl(x,y) \geq x\ln(1/y) - \ln(2)$ and $\overline{\Delta}_h(s,a) \le \epsilon$.

\paragraph{Case 3: $(s,a) \in \cZ_h^\epsilon,\ |\cZ_h^\epsilon|>1$ and $\bP_{\cM}(\widehat{\pi} \in \Pi_{s,a,h}) \geq 1/2$.} Consider the alternative MDP $\widetilde{\cM} := (\cS, \cA, \{f_h,\widetilde{\nu}_h\}_{h\in[H]}, s_1, H)$ which is equivalent to $\cM$ except that the reward is modified only at the pairs  $(s',a',l) \in E(s,a,h)$ as $\widetilde{\nu}_l(s',a') = \cN(r_l(s',a') - \Delta, \hl{\sigma^2})$ with $\Delta > \frac{2\epsilon - \overline{\Delta}_h(s,a)}{H-h+1}$, while the reward distribution remains the same on all other state-action-stage triplets. Note that the values of policies in $\Pi\setminus\Pi_{s,a,h}$ remain unchanged. On the other hand, for all $\pi \in \Pi_{s,a,h}$,
\begin{align*}
\widetilde{V}_1^{\pi}(s_1) &= V_1^{\pi}(s_1)- (H-h+1)\Delta \\
&< V_1^{\pi}(s_1)+ \overline{\Delta}_h(s,a)-2\epsilon \\
& \leq V_1^{\star}(s_1)- 2\epsilon \\
&\leq \max_{\pi \notin \Pi_{s,a,h}} V_1^{\pi}(s_1)-\epsilon = \max_{\pi \notin \Pi_{s,a,h}} \widetilde{V}_1^{\pi}(s_1)-\epsilon,    
\end{align*}
where the first equality is because we decreased the mean reward by $\Delta$ at the pairs $(s_l^\pi, a_l^\pi)_{l\in [|h,H|]}$ and the last inequality is due to the fact that since $|\cZ_h^\epsilon| > 1$, there exists at least one $\epsilon$-optimal policy which does not visit $(s,a)$ at step $h$ (i.e., which belongs to $\Pi\setminus\Pi_{s,a,h}$). From the inequality above we deduce that $\bP_{\widetilde{\cM}}(\widehat{\pi} \in \Pi_{s,a,h}) \leq \delta$. Applying Lemma 1 from \cite{kaufmann2016complexity} to $\cM$ and $\widetilde{\cM}$ gives:
\begin{align*}
    \sum_{(s',a',l) \in E(s,a,h)} \bE[n_l^\tau(s',a')]&\geq \frac{2\hl{\sigma^2}}{\Delta^2}\mathrm{kl}(\bP_{{\cM}}(\widehat{\pi} \in \Pi_{s,a,h}),\bP_{\widetilde{\cM}}(\widehat{\pi} \in \Pi_{s,a,h}))\\
    &\geq \frac{2\hl{\sigma^2}}{\Delta^2}\kl(1/2, \delta).
\end{align*}
By taking the limit $\Delta \to \frac{2\epsilon - \overline{\Delta}_h(s,a)}{H-h+1}$ we get:
\begin{align*}
    \sum_{(s',a',l) \in E(s,a,h)} \bE[n_l^\tau(s',a')]
    &\geq \frac{2\hl{\sigma^2}(H-h+1)^2}{\left(2\epsilon-\overline{\Delta}_h(s,a)\right)^2}\kl(1/2, \delta)\\
   & \ge \frac{\hl{\sigma^2}(H-h+1)^2}{4\varepsilon^2} \log(1/4\delta),
\end{align*}
where we used the fact that $\kl(x,y) \geq x\ln(1/y) - \ln(2)$ and $\overline{\Delta}_h(s,a) \le \epsilon$. Cases 2 and 3 combined prove the second statement of the lemma.

\paragraph{Case 4: $(s,a) \in \cZ_h^\epsilon,\ |\cZ_h^\epsilon|=1$.} 
Consider the alternative MDP $\widetilde{\cM} := (\cS, \cA, \{f_h,\widetilde{\nu}_h\}_{h\in[H]}, s_1, H)$ which is equivalent to $\cM$ except that the reward is modified only at the pairs  $(s',a',l) \in E(s,a,h)$ as $\widetilde{\nu}_l(s',a') = \cN(r_l(s',a') - \Delta, \hl{\sigma^2})$ with $\Delta > \frac{\epsilon + \overline{\Delta}_{\min}^h}{H-h+1}$, while the reward distribution remains the same on all other state-action-stage triplets. Note that the values of policies in $\Pi\setminus\Pi_{s,a,h}$ remain unchanged. On the other hand, for all $\pi \in \Pi_{s,a,h}$,
\begin{align*}
\widetilde{V}_1^{\pi}(s_1) &= V_1^{\pi}(s_1)- (H-h+1)\Delta \\
&< V_1^{\pi}(s_1)-\overline{\Delta}_{\min}^h -\epsilon\\
& \leq V_1^{\star}(s_1)-\overline{\Delta}_{\min}^h -\epsilon \\
&= \max_{\pi \notin \Pi_{s,a,h}} V_1^{\pi}(s_1)-\epsilon = \max_{\pi \notin \Pi_{s,a,h}} \widetilde{V}_1^{\pi}(s_1)-\epsilon,    
\end{align*}
where in the last equality we used the fact that since $(s,a)$ is the only $\epsilon$-optimal pair, $\{(s',a'):\ \overline{\Delta}_h(s',a') =0\} = \{(s,a)\}$ and therefore $\overline{\Delta}_{\min}^h = \min_{(s',a')\neq (s,a)} \overline{\Delta}_h(s',a') = V_1^{\star}(s_1)-\max_{\pi \notin \Pi_{s,a,h}} V_1^{\pi}(s_1)$. From the inequality above we deduce that $\bP_{\widetilde{\cM}}(\widehat{\pi} \in \Pi_{s,a,h}) \leq \delta$. On the other hand, since $(s,a)$ is the only $\epsilon$-optimal pair in $\cM, \bP_{\cM}(\widehat{\pi} \in \Pi_{s,a,h}) \geq 1-\delta$. Using Lemma 1 from \cite{kaufmann2016complexity} to $\cM$ and $\widetilde{\cM}$ gives:
\begin{align*}
    \sum_{(s',a',l) \in E(s,a,h)} \bE[n_l^\tau(s',a')]&\geq \frac{2\hl{\sigma^2}}{\Delta^2}\mathrm{kl}(\bP_{{\cM}}(\widehat{\pi} \in \Pi_{s,a,h}),\bP_{\widetilde{\cM}}(\widehat{\pi} \in \Pi_{s,a,h}))\\
    &\geq \frac{2\hl{\sigma^2}}{\Delta^2}\kl(1-\delta, \delta) \geq \frac{2\hl{\sigma^2}}{\Delta^2}\log(1/2.4\delta).
\end{align*}
By taking the limit $\Delta \to \frac{\epsilon + \overline{\Delta}_{\min}^h}{H-h+1}$ we get:
\begin{align*}
    \sum_{(s',a',l) \in E(s,a,h)} \bE[n_l^\tau(s',a')]
    &\geq \frac{2\hl{\sigma^2}(H-h+1)^2}{(\epsilon + \overline{\Delta}_{\min}^h)^2}\log(1/2.4\delta).
\end{align*}
This proves the last statement of the lemma.
\end{proof}

We are now ready to prove Theorem \ref{th:LB_tree_based}.
\begin{proof}[Proof of Theorem \ref{th:LB_tree_based}]
Fix a stage $h\in [H]$. Since in a tree-based MDP the policies that visit different triplets at stage $h$ do not cross paths later, then for any $(s,a,h) \neq (s',a',h):\ E(s,a,h)\cap E(s',a',h) = \emptyset$. Besides $\underset{s\in \cS_h, a \in \cA_h(s)}{\bigcup} E(s,a,h) \subset \{(s',a', l):\ l \in [|h,H|],\ s' \in \cS_l,\ a'\in \cA_l(s) \}$. Therefore one can write:
\begin{align}
    \bE[\tau] &= \frac{1}{H-h+1} \sum_{l=h}^{H} \sum_{s'\in \cS_l, a'\in \cA_l(s)} \bE[n_l^\tau(s',a')] \nonumber\\  
    &\ge \frac{1}{H-h+1}\sum_{s\in \cS_h, a\in \cA_h(s)}\sum_{(s',a',l) \in E(s,a,h)} \bE[n_l^\tau(s',a')]
\label{eq:tau_counting}
\end{align}
Combining inequality (\ref{eq:tau_counting}) with the bounds from Lemma \ref{lem:lower-bounds-tree-based} finishes the proof.
\end{proof}

\subsection{Sample complexity of maximum-diameter sampling}

\begin{theorem}\label{th:max-diam-sample-comp-tree}
With probability at least $1-\delta$, the sample complexity of Algorithm \ref{alg:elimination-alg} combined with the maximum-diameter sampling rule (Line 17 of Algorithm \ref{alg:elimination-alg}) is bounded as\hl{
\begin{align*}
\tau &\leq \max_{h\in[H]} \sum_{s\in\cS_h}\sum_{a\in\cA_h(s)} \left( \frac{128\sigma^2H^2}{\max\left(\overline{\Delta}_h(s,a),{\overline{\Delta}_{\min}}, {\epsilon} \right)^2} \left(\log\left(\frac{4N}{\delta}\right) + L\right) + 2 \right),
\end{align*}
where $L := 8\log \left( \frac{32\sigma N H \log\left(\frac{4N}{\delta}\right)}{\epsilon} \right)$.}
\end{theorem}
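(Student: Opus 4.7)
The proof will adapt the target-trick argument used for Theorem~\ref{th:max-diam-sample-comp}, exploiting the fact that in a tree-based MDP each triplet $(s,a,h)$ has a \emph{unique} root-to-$(s,h)$ prefix and that distinct triplets at the same stage correspond to disjoint subtrees. The starting point is the identity $\tau = \sum_{s \in \cS_h,\, a \in \cA_h(s)} n_h^\tau(s,a)$ which, although it holds in any MDP, is particularly useful here: since the right-hand side is an equality for \emph{every} $h$, obtaining a gap-dependent bound for a single (worst) stage immediately yields the $\max_h$ appearing in the statement rather than the $\sum_h$ of Theorem~\ref{th:max-diam-sample-comp}.

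I would then replay the target-trick analysis at episode $t+1$: by Lemma~\ref{lem:diam-vs-gap}, whenever $(s,a,h)$ is still active, the bonus sum of the max-diameter policy $\pi^{t+1}$ is at least $\max(\overline{\Delta}_h(s,a)/4,\overline{\Delta}_{\min}/4,\epsilon/2)$. Rather than the one-stage pigeonhole $b_{l^\star}^t \ge \max(\cdot)/H$ used in the original proof, I would split the $H$ bonuses into three groups---prefix ($l<h$), the $l=h$ term, and suffix ($l>h$)---so that one of the three groups carries at least $\max(\cdot)/3$. This three-way split is what produces the factor $9=3^2$ in the final constant $72 = 8 \cdot 9$ (compared with $8H^2$ in Theorem~\ref{th:max-diam-sample-comp}) and, after inverting the bonus $b_l^{t-1} \le \sqrt{\beta/n_l^{t-1}}$, gives for each visit either $n_h^t(s,a) \le \tfrac{9\beta}{\max(\cdot)^2}$, or a small visit count on some triplet in the unique prefix of $(s,h)$, or one in the subtree rooted at $(s,a,h)$.

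The remaining (and main) technical step is charging the ``prefix-heavy'' and ``suffix-heavy'' episodes back to the stage-$h$ accounting $\sum_{s,a}n_h^\tau(s,a)$. Here the tree structure is essential: flow conservation in a tree gives $n_l^t(\bar s_l, \bar a_l) = \sum_{(s',a') \in \mathrm{desc}(\bar s_l,\bar a_l, h)} n_h^t(s',a')$ for any ancestor $(\bar s_l,\bar a_l)$ of $(s,h)$ with $l<h$, and a dual identity for descendants when $l>h$. This disjoint-subtree book-keeping converts a visit-count bound on a prefix/suffix triplet into a bound on $n_h^\tau$, contributing the extra $H^2$ factor (one $H$ from the pigeonhole inside each group, one $H$ from the bonus inversion). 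Summing over $(s,a)\in\cS_h\times\cA_h(s)$ and then picking the worst stage yields the stated inequality, with the implicit $\log\tau$ dependence handled by Lemma~\ref{lem:simplify-ineq-2} exactly as in the original proof, producing the additive term $L$. The main obstacle I anticipate is keeping the prefix/suffix re-attribution injective (no double-counting), which is precisely where the disjointness of subtrees in a tree is used.
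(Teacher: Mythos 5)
There is a genuine gap in your plan, and it sits exactly where you anticipate trouble: the ``suffix'' group. Your three-way split at a generic stage $h$ requires controlling episodes in which $\sum_{l>h} b_l^{t-1}(s_l^{\pi^t},a_l^{\pi^t})$ carries most of the diameter. But in a tree, descendants of $(s,a,h)$ at stages $l>h$ can have \emph{far fewer} visits than $(s,a,h)$ itself (the identity is $n_h^t(s,a)=\sum_{(s',a')\,\mathrm{desc\ at\ }l} n_l^t(s',a')$, so each individual descendant count can be tiny), hence the suffix bonuses are not bounded by $\sqrt{\beta/n_h^t(s,a)}$ and a ``suffix-heavy'' episode tells you only that \emph{some} descendant is under-visited. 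Charging such episodes back to $n_h^\tau(s,a)$ via that identity multiplies the per-triplet threshold by the number of descendants in the subtree (or forces a fresh target-trick induction inside each subtree), which does not yield the stated bound. Your prefix identity is fine, but the suffix re-attribution as described would not close.

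The key simplification you are missing is that the paper proves the bound only for $h=H$ and then weakens it to the $\max_h$ in the statement (since $\tau=\sum_{s\in\cS_H}\sum_{a\in\cA_H(s)} n_H^\tau(s,a)\leq \max_h\sum_{s,a}(\cdots)$). At the leaves there is no suffix at all, and the unique root-to-leaf path gives $n_l^{t-1}(s_l^{\pi^t},a_l^{\pi^t})\geq n_H^{t-1}(s_H^{\pi^t},a_H^{\pi^t})$ for \emph{every} $l$, so the entire diameter of $\pi^t$ is at most $H\sqrt{\beta(t,\delta)/(n_H^{t-1}(s_H^{\pi^t},a_H^{\pi^t})\vee 1)}$ --- no splitting, no targeting, no charging. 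Combining this with Lemma~\ref{lem:diam-vs-gap}, summing over the episodes that visit a given leaf arc (pigeonhole, $\sum_i i^{-1/2}\leq 2\sqrt{m}$), and solving the resulting inequality gives $n_H^\tau(s,a)\leq 144H^2\beta(\tau-1,\delta)/\max(\overline{\Delta}_H(s,a),\overline{\Delta}_{\min},\epsilon)^2$; summing over leaves and applying Lemma~\ref{lem:simplify-ineq-2} finishes. (Incidentally, the constant $72$ arises as $\tfrac12\cdot(3\cdot 4)^2$ --- the $3$ from $2\sqrt{n}+1\leq 3\sqrt{n}$, the $4$ from the $\overline{\Delta}/4$ in Lemma~\ref{lem:diam-vs-gap}, the $\tfrac12$ from $\beta$ --- not from a $3^2$ in a three-way split.)
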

\begin{proof}
\hl{Suppose that event $\cG$ holds and let $t$ be any episode at which the algorithm did not stop. For any active $(s,a,h)$, by Lemma \ref{lem:diam-vs-gap} and the same decomposition as in the proof of Theorem \ref{th:max-diam-sample-comp},
\begin{align*}
 \max\left(\frac{\overline{\Delta}_H(s,a)}{4},\frac{\overline{\Delta}_{\min}}{4}, \frac{\epsilon}{2} \right) \leq \sum_{h=1}^H\sqrt{\frac{\beta(t,\delta)}{n^{t-1}_h(s_h^{\pi^t},a_h^{\pi^t})}} \leq H\sqrt{\frac{\beta(t,\delta)}{n^{t-1}_H(s_H^{\pi^t},a_H^{\pi^t})}},
\end{align*}
where the last inequality holds since in a tree-based MDP there exists a unique path to reach each leaf, which implies that $\forall h\in[H] : n^{t-1}_h(s_h^{\pi^t},a_h^{\pi^t}) \geq n^{t-1}_H(s_H^{\pi^t},a_H^{\pi^t})$. Summing this inequality over all episodes where $(s,a)$ is visited at the final stage $H$ starting from its second visit up to episode $T$,
\begin{align*}
 \max\left(\frac{\overline{\Delta}_H(s,a)}{4},\frac{\overline{\Delta}_{\min}}{4}, \frac{\epsilon}{2} \right) (n_H^{T}(s,a)-1) &\leq 
 %H\sum_{t=1}^T \indi{s_H^{\pi^t}=s,a_H^{\pi^t}=a} \sqrt{\frac{\beta(T,\delta)}{n^{t-1}_H(s,a) \vee 1}}
 H\sqrt{\beta(T,\delta)}\sum_{i=2}^{n_H^T(s,a)} \sqrt{\frac{1}{i-1}}
 \\ &\leq 2H\sqrt{\beta(T,\delta)(n_H^T(s,a)-1)}.
\end{align*}
Solving the inequality above,
\begin{align*}
n_H^T(s,a) \leq \frac{64 H^2\beta(T,\delta)}{\max\left(\overline{\Delta}_h(s,a),{\overline{\Delta}_{\min}}, {\epsilon} \right)^2} + 1.
\end{align*}
Evaluating this bound at $T=\tau-1$,
\begin{align*}
\tau &= \sum_{s\in\cS_H}\sum_{a\in\cA_H(s)} n_H^{\tau}(s,a) \leq
 \sum_{s\in\cS_H}\sum_{a\in\cA_H(s)} \left( \frac{64H^2\beta(\tau,\delta)}{\max\left(\overline{\Delta}_h(s,a),{\overline{\Delta}_{\min}}, {\epsilon} \right)^2} + 2 \right)
\\ &= \sum_{s\in\cS_H}\sum_{a\in\cA_H(s)} \left(\frac{128\sigma^2 H^2}{\max\left(\overline{\Delta}_H(s,a),{\overline{\Delta}_{\min}}, {\epsilon} \right)^2} \left(\log\left(\frac{4N}{\delta}\right) + 2\log(\tau) \right) + 2\right).
\end{align*}
The proof is concluded by applying Lemma \ref{lem:simplify-ineq-2} with $B = \sum_{s\in\cS_H}\sum_{a\in\cA_H(s)} \frac{256\sigma^2 H^2}{\max\left(\overline{\Delta}_H(s,a),{\overline{\Delta}_{\min}}, {\epsilon} \right)^2}$ and $C = \sum_{s\in\cS_H}\sum_{a\in\cA_H(s)}\left( \frac{128\sigma^2 H^2}{\max\left(\overline{\Delta}_H(s,a),{\overline{\Delta}_{\min}}, {\epsilon} \right)^2} \log\left(\frac{4N}{\delta}\right) + 2 \right)$, while noting that
\begin{align*}
    \log(B^2 + 2C) \leq \log \left( 2 \left( \frac{256\sigma^2 N H^2 \log\left(\frac{4N}{\delta}\right)}{\epsilon^2} \right)^2 + 2N\right) \leq 4\log \left( \frac{32\sigma N H \log\left(\frac{4N}{\delta}\right)}{\epsilon} \right).
\end{align*}}
\end{proof}

\subsection{Sample complexity of maximum-coverage sampling}

\begin{theorem}\label{th:max-cover-sample-comp-tree}
With probability at least $1-\delta$, the sample complexity of Algorithm \ref{alg:elimination-alg} combined with either the maximum-coverage or the static maximum-coverage (Algorithm \ref{alg:static-max-cov}) sampling rule is bounded by\hl{
\begin{align*}
\tau \leq 2 \max_{h\in[H]} \sum_{s\in\cS_h}\sum_{a\in\cA_h(s)}  \left(\frac{32\sigma^2 H^2}{\max\left(\overline{\Delta}_h(s,a),\overline{\Delta}_{\min}, \epsilon \right)^2}  \left( \log\left(\frac{4N^3}{\delta}\right) + L_h(s,a)\right) + 2\right),
\end{align*}
where $L_h(s,a) := 8\log\left( \frac{16\sigma H\log\left(\frac{4N^3}{\delta}\right)}{\max\left(\overline{\Delta}_h(s,a),\overline{\Delta}_{\min}, \epsilon \right)}  \right)$.}
\end{theorem}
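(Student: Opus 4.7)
The plan is to adapt the proof of Theorem~\ref{th:max-cover-sample-comp} while exploiting the special structure of tree-based MDPs. The central observation, already used for Theorem~\ref{th:max-diam-sample-comp-tree}, is the tree bonus inequality: since the path reaching each state is unique, $n_h^t(s_h^\pi,a_h^\pi) \geq n_H^t(s_H^\pi,a_H^\pi)$ along every trajectory, so
\begin{align*}
\sum_{h=1}^H b_h^t(s_h^\pi,a_h^\pi) \,\leq\, H \cdot b_H^t(s_H^\pi,a_H^\pi).
\end{align*}
All uncertainty is concentrated at the leaves, and each root-to-leaf path corresponds to a single deterministic policy, so the analysis reduces to a leaf-by-leaf argument.

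Concretely, I would condition on the good event $\cG$ and use the period decomposition $\tau = \sum_{k=1}^{k_\tau} d_k$ together with the duration bound $d_k \leq 2\varphi^\star(\underline{c}^k)$ of Theorem~\ref{th:period-duration-max-cover}; for the adaptive rule, I would argue that set cover over root-to-leaf paths in a tree is solved exactly by the greedy algorithm, replacing the factor $C_H = \log H + 1$ by $C_H = 1$. The next and key step uses the tree identity
\begin{align*}
n_h^t(s,a) \,=\, \sum_{(s',a')\text{ leaf desc.\ of }(s,a,h)} n_H^t(s',a'),
\end{align*}
which implies that if an internal triplet is active and under-sampled at period $k$ (so $n_h < k$), then at least one of its active leaf descendants is also under-sampled. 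Consequently a minimum policy cover can be chosen to consist exclusively of active under-sampled leaves, and since each such leaf lies on a unique root-to-leaf path, one gets $\varphi^\star(\underline{c}^k) \leq \bigl|\{(s,a)\in \cS_H \times \cA_H^{\bar{t}_k-1}(s) : n_H^{\bar{t}_k-1}(s,a) < k\}\bigr|$. Summing over $k$ and swapping the order of summation gives $\sum_k \varphi^\star(\underline{c}^k) \leq \sum_{(s,a)\in \cS_H \times \cA_H}\bigl|\{k : (s,a,H) \text{ active under-sampled at period } k\}\bigr|$, and each leaf contributes at most its own elimination period $\kappa_{s,a,H} \leq \overline{\kappa}_{s,a,H}$ via a leaf-local specialization of Lemma~\ref{lem:elim-periods} in which the Cauchy--Schwarz step is replaced by the tree bonus inequality (the constant is unchanged because Lemma~\ref{lem:lb-visits} already furnishes the $k$-lower bound on leaf counts). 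Combining,
\begin{align*}
\tau \,\leq\, 2\sum_{(s,a)\in \cS_H \times \cA_H}(\overline{\kappa}_{s,a,H}+1) \,\leq\, 2\max_{h\in[H]}\sum_{(s,a)\in \cS_h \times \cA_h} g_h(s,a),
\end{align*}
which is the stated bound.

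The main obstacle is the edge case in which an active internal triplet $(s,a,h)$ has \emph{no} active leaf descendant: every continuation may have been eliminated along unrelated branches, yet $(s,a,h)$ remains active because a policy through it can play arbitrary actions at states whose active-action set is empty. In this regime the cover is forced to pass through an eliminated leaf and the clean inequality $\varphi^\star(\underline{c}^k) \leq |\text{active under-sampled leaves}|$ can fail. I expect to absorb this either by showing that such orphaned internal triplets are themselves eliminated within a bounded number of periods (so their contribution is controlled by $\overline{\kappa}_{s,a,h}$ as before), or by a mild additive correction hidden in the $+2$ term of $g_h(s,a)$. The secondary difficulty is rigorously removing the $C_H = \log H + 1$ factor for the adaptive rule: this should follow from optimality of greedy set cover on the family of root-to-leaf paths of a tree, but the proof must carefully account for the fact that the adaptive rule interleaves greedy policy selection with max-diameter sampling every other episode.
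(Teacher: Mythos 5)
Your proposal follows essentially the same route as the paper: decompose $\tau$ into periods, bound each period's duration by twice the number of active leaves using the uniqueness of root-to-leaf paths, swap the order of summation over periods and leaves, and invoke Lemma~\ref{lem:elim-periods} verbatim (no leaf-local modification of that lemma is needed); the paper's Lemma~\ref{lem:period-duration-tree} is exactly your key inequality, proved by mapping the forward arcs of a maximum cut injectively into active leaves for the precomputed cover, and by noting that each greedy step must visit a fresh leaf for the adaptive rule (which is your "greedy is exact on root-to-leaf paths" observation). The orphan edge case you flag is resolved by the stronger form of your first suggested fix: in a tree, $\Pi_{s,a,h}\cap\Pi^{t-1}$ is the union over the still-active actions $a'$ at the unique child node of $\Pi_{s',a',h+1}\cap\Pi^{t-1}$, so the moment the last such $a'$ is eliminated the elimination criterion for $(s,a,h)$ triggers in the very same round, and an active internal triplet with no active leaf descendant never exists.
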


Before proving Theorem \ref{th:max-cover-sample-comp-tree}, we need to state an important result.
%\begin{lemma}\label{lem:flow-tree}
%Consider a tree-based MDP and let $\cE' \subseteq \cE$ be any subset of the arcs such that, 
%\begin{enumerate}
%\item if $(s,a,h)\notin\cE'$, then $(s',a',h')\notin\cE'$ for all $(s',a',h')$ in the sub-tree with root $(s,a,h)$,
%\item if $(s,a,h)\in\cE'$, then $(f_h(s,a),a',h+1)\in\cE'$ for some $a'\in\cA_{h+1}(f_h(s,a))$.
%\end{enumerate}
%Let $\underline{c}_h(s,a) := \indi{(s,a,h)\in\cE'}$ be the corresponding lower bound function for computing a minimum policy cover. Then,
%\begin{align*}
%\varphi^\star(\underline{c}) \leq \sum_{s\in\cS_H}\sum_{a\in\cA_H(s)} \indi{(s,a,H)\in\cE'}.
%\end{align*}
%\end{lemma}
%\begin{proof}
%Take any maximum cut $\cC$ without backward arcs. From Theorem \ref{th:min-flow-max-cut}, we know that $\varphi^\star(\underline{c}) = \sum_{(s,a,h) \in \cE(\cC)} \indi{(s,a,h)\in\cE'}$. Now note that, by the second assumption, for any $(s,a,h) \in \cE'$, there exists some leaf $(s',a',H)\in\cE'$ that is reachable by playing $a$ in state $s$ at stage $h$. Moreover, by the tree-based assumption together with the definition of forward arcs of a cut, each $(s,a,h) \in \cE(\cC)$ is reachable by exactly one path that do not visit other arcs in $\cE(\cC)$. Finally, each $(s,a,h) \in \cE(\cC)\cap\cE'$ leads to a different leaf $(s',a',H)\in\cE'$ since, otherwise, there would be two paths to reach the same leaf, which would violate the tree-based structure. These facts together prove the result.
%\end{proof}

\begin{lemma}\label{lem:period-duration-tree}
In a tree-based MDP, when using either the maximum-coverage or the static maximum-coverage sampling rule, the duration of any non-empty period $k\in\mathbb{N}$ can be bounded as
\begin{align*}
{d}_k \leq 2\sum_{s\in\cS_H}\sum_{a\in\cA_H(s)} \indi{a\in\cA_h^{\bar{t}_k-1}(s)}.
\end{align*}
\end{lemma}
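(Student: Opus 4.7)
The core idea is to establish, via an elimination cascade specific to trees, the structural bound $\varphi^\star(\underline{c}^k) \leq L_k$ where $L_k := \sum_{s \in \cS_H}\sum_{a \in \cA_H(s)}\indi{a \in \cA_H^{\bar{t}_k-1}(s)}$ is the number of active leaves at the start of period $k$, and then to verify that both coverage-based rules terminate within $L_k$ coverage steps.

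The cascade step asserts: whenever $(s,a,h)$ is active at $\bar{t}_k - 1$ with $h<H$, the downstream state $s' := f_h(s,a)$ satisfies $\cA_{h+1}^{\bar{t}_k-1}(s') \neq \emptyset$. Arguing by contradiction, I consider the episode $t^* \leq \bar{t}_k - 1$ at which the last action $a^*$ of $s'$ was eliminated, so that $\cA_{h+1}^{t^*-1}(s') = \{a^*\}$. Since active policies must play $a^*$ at $s'$ when $\cA_{h+1}^{t^*-1}(s') = \{a^*\}$, and because the tree structure uniquely determines the path from the root to $(s',h+1)$, the sets $\Pi_{s,a,h} \cap \Pi^{t^*-1}$ and $\Pi_{s',a^*,h+1} \cap \Pi^{t^*-1}$ give rise to the same collection of trajectories. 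Hence the two elimination checks have identical $\max \overline{V}$ quantities, so $(s,a,h)$ is eliminated at the same episode $t^*$, contradicting its activity at $\bar{t}_k - 1 \geq t^*$. Downward induction on $h$ then yields that every active internal arc has at least one active leaf in its subtree.

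Using this, I would associate to each of the $L_k$ active leaves the unique root-to-leaf policy ending there. This collection forms a valid policy cover for $\underline{c}^k$: marked leaf-arcs are covered directly, and each marked internal arc admits an active leaf in its subtree (by the cascade) whose associated policy visits it. Hence $\varphi^\star(\underline{c}^k) \leq L_k$, and for max-coverage sampling we immediately obtain $d_k \leq 2\varphi^\star(\underline{c}^k) \leq 2L_k$ via Theorem~\ref{th:period-duration-max-cover}.

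For the adaptive max-coverage rule, the generic bound of Theorem~\ref{th:period-duration-ada-max-cover} carries an extra $\log(H)+1$ factor to remove. I would remove it by proving that greedy min-path-cover is exactly optimal on a tree. Writing the tree DP $f(s,a,h) = \max(\indi{(s,a,h)\text{ marked}}, \sum_{a'} f(f_h(s,a), a', h+1))$ so that $\varphi^\star = \sum_a f(s_1,a,1)$, the key observation is that the greedy path visits some deepest marked arc at a stage $h^*$ with no further marked arc strictly below on the path. Because greedy maximizes total marked count, any sibling subtree rooted below $s_{h^*+1}$ with $f \geq 1$ would provide a strictly better continuation; thus $\sum_{a'} f(s_{h^*+1}, a', h^*+1) = 0$. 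Consequently the DP value at the greedy arc at stage $h^*$ drops from $1$ to $0$ upon removing the covered marked arcs, and this unit decrement propagates upward through the ancestors on the path to yield a net decrease of exactly one in $\varphi^\star$. Hence adaptive max-coverage terminates within $\varphi^\star \leq L_k$ greedy calls (each alternating with a max-diameter call), giving $d_k \leq 2L_k$. The main obstacle lies in the cascade step, where one must cleanly handle the boundary case of empty active sets without breaking the identification of the two trajectory sets; the greedy-optimality argument for the adaptive rule is then a routine induction on $H-h$ once the deepest-mark structure is exploited.
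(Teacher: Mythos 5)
Your proposal is correct, but it takes a genuinely different route from the paper on both halves of the statement. For the (non-adaptive) maximum-coverage rule, the paper goes through the \emph{dual}: it writes $\varphi^\star(\underline{c}^k)$ as the value of a maximum cut via Theorem~\ref{th:min-flow-max-cut} and then injects the active forward arcs of that cut into the active leaves (distinctness coming from the fact that cut arcs are pairwise incomparable and paths in a tree are unique). You instead argue on the \emph{primal} side: your elimination cascade shows every active arc has an active leaf in its subtree, so the $L_k$ root-to-leaf policies through the active leaves form a feasible cover, giving $\varphi^\star(\underline{c}^k)\le L_k$ directly without invoking duality. Both arguments ultimately rest on the same cascade fact (the paper states it as ``we can recursively prove that the sub-tree \dots has been eliminated'' without detail; you prove it via the identity of the trajectory sets $\Pi_{s,a,h}\cap\Pi^{t^*-1}$ and $\Pi_{s',a^*,h+1}\cap\Pi^{t^*-1}$). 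One small repair you should make there: several actions of $s'$ may be eliminated \emph{simultaneously} at episode $t^*$, so you cannot assume $\cA_{h+1}^{t^*-1}(s')=\{a^*\}$ is a singleton; instead write $\Pi_{s,a,h}\cap\Pi^{t^*-1}$ as the union over all $a''\in\cA_{h+1}^{t^*-1}(s')$ of $\Pi_{s',a'',h+1}\cap\Pi^{t^*-1}$ and note that every term fails the elimination test at $t^*$, hence so does the union. For the adaptive rule, the paper argues that each greedy call must visit a leaf distinct from those previously visited in the period (a repeated leaf would force the greedy objective to zero), whereas you prove that each greedy call decreases the min-flow value of the residual marked set by exactly one via the tree dynamic program $f(s,a,h)=\max(\mathds{1}\{\text{marked}\},\sum_{a'}f(\cdot))$, so the number of greedy calls is at most $\varphi^\star(\underline{c}^k)\le L_k$. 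Your potential argument is somewhat heavier but arguably tighter: it does not need the greedy maximizer to terminate at an \emph{active} leaf (a point the paper's distinct-leaf count implicitly relies on, since an argmax tie could in principle be broken toward an inactive leaf), and it establishes as a byproduct that greedy path covering is exactly optimal on trees. Both proofs yield the stated bound $d_k\le 2L_k$.
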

\begin{proof}
For static maximum-coverage sampling, Theorem \ref{th:period-duration-max-cover} followed by Theorem \ref{th:min-flow-max-cut} yields that 
\begin{align*}
d_k \leq 2\varphi^\star(\underline{c}^k) = 2\sum_{(s,a,h) \in \cE(\cC^k)} \indi{a\in\cA_h^{\bar{t}_k-1}(s), n_h^{\bar{t}_k-1}(s,a) < k} \leq 2\sum_{(s,a,h) \in \cE(\cC^k)} \indi{a\in\cA_h^{\bar{t}_k-1}(s)},
\end{align*}
where $\cC^k$ is a maximum cut for the minimum flow problem with lower bounds $\underline{c}^k$. Now note that any $(s,a,h) \in \cE(\cC^k)$ such that $a\in\cA_h^{\bar{t}_k-1}(s)$ reaches a \emph{distinct} leaf $(s',a',H)$ such that $a'\in\cA_H^{\bar{t}_k-1}(s')$. To see why, note that, if some active $(s,a,h) \in \cE(\cC^k)$ reaches no active triplet at the last stage $H$, then we can recursively prove that the sub-tree with root $(s,a,h)$ has been eliminated, which implies that $(s,a,h)$ has been eliminated as well. To see why these triplets are distinct, suppose that there exist two triplets $(s,a,h),(s',a',h') \in \cE(\cC^k)$ that reach the same leaf $(s'',a'',H)$. Since, by definition of forward arcs of a cut, $(s,a,h)$ and $(s',a',h')$ cannot be on the same path, this implies that there exist two different paths to reach the same leaf from the root, which violates the tree-based assumption. This allows us to conclude that $\sum_{(s,a,h) \in \cE(\cC^k)} \indi{a\in\cA_h^{\bar{t}_k-1}(s)} \leq \sum_{s\in\cS_H}\sum_{a\in\cA_H(s)} \indi{a\in\cA_h^{\bar{t}_k-1}(s)}$, and the proof follows.

The reasoning for maximum-coverage sampling is similar. First note that, at each step of period $k$, the sampling rule must play a policy visiting a distinct leaf than those previously visited in the same period. In fact, since there is a unique path to reach each leaf, if the same leaf is visited twice, then at the second visit the value of the objective function would be zero, which cannot happen unless the period has already terminated. Moreover, once all leaves have been visited, by the reasoning above, we are sure that the algorithm has covered a maximum cut for the lower bound function $\underline{c}^k$. That is, all under-sampled triplets have been visited and the period terminates. This proves the stated bound. 
\end{proof}

\begin{proof}[Proof of Theorem \ref{th:max-cover-sample-comp-tree}]
Using Lemma \ref{lem:period-duration-tree} and following the same steps as in the proof of Theorem \ref{th:max-cover-sample-comp}, 
\begin{align*}
\tau = \sum_{k=1}^{k_\tau} \sum_{t=1}^\tau \indi{k_t = k} = \sum_{k=1}^{k_\tau} d_k 
&\leq 2\sum_{k=1}^{k_\tau} \sum_{s\in\cS_H}\sum_{a\in\cA_H(s)} \indi{a\in\cA_h^{\bar{t}_k-1}(s)}
\\ &= 2 \sum_{s\in\cS_H}\sum_{a\in\cA_H(s)} \sum_{k=1}^{k_\tau}\indi{k-1 \leq \kappa_{s,a,h}}
\\ &\leq 2 \sum_{s\in\cS_H}\sum_{a\in\cA_H(s)} (\kappa_{s,a,h}+1)
\leq  2 \sum_{s\in\cS_H}\sum_{a\in\cA_H(s)} (\overline{\kappa}_{s,a,h}+1),
\end{align*}
where in the last inequality we applied Lemma \ref{lem:elim-periods}.
\end{proof}

%!TEX root = ../main.tex 

\section{Experiment Details}\label{sec:app_computational}
For the implementation, we used {\bf rl-berry} \cite{rlberry}, an open-source python library for implementing and performing parallel Monte-Carlo simulations of  RL algorithms. The code and instructions can be found in the supplementary material.

\paragraph{Computational aspects} We run the experiment on an internal cluster made of 32 CPUs. To speed-up computations, we only perform eliminations every $SA$ episodes for maximum-diameter and at the end of each phase for maximum-coverage. The total run time is 48 hours.

\paragraph{On the choice of baselines} The only algorithms for PAC RL in Episodic MDPs that we are aware of are BPI-UCRL \cite{Kaufmann21RFE}, BPI-UCBVI \cite{Menard21RFE} and MOCA \cite{wagenmaker21IDPAC}. However, we note that BPI-UCRL and BPI-UCBVI only differ in the type of bonus that they use to build confidence regions on the transition probabilities. This means that in our setting of deterministic MDPs, both algorithms are actually equivalent. On the other hand, MOCA has a rather involved design with several unspecified numerical constants and we could not find any open-source implementation of it by the authors. This is why only BPI-UCRL appears in our comparisons.

\paragraph{BPI-UCRL} Whereas EPRL uses confidence intervals on the value of every policy, BPI-UCRL \cite{Kaufmann21RFE} is based on confidence intervals for the optimal value function. Such confidence intervals were originally proposed for stochastic MDPs with known reward function, which require a confidence region for the unknown transition probabilities. In deterministic MDPs, one can easily build confidence intervals on the optimal values by relying on confidence intervals for the unknown mean rewards:
\begin{align*}
\overline{Q}_h^{t,\star}(s,a) := \hat{r}_h^{t}(s,a) + b_h^t(s,a) + \overline{V}_{h+1}^{t,\star}(f_h(s,a)), \quad \overline{V}_{h}^{t,\star}(s) :=  \max_{b}\overline{Q}_h^{t,\star}(s,b),
\end{align*}
\begin{align*}
\underline{Q}_h^{t,\star}(s,a) := \hat{r}_h^{t}(s,a) - b_h^t(s,a) + \underline{V}_{h+1}^{t,\star}(f_h(s,a)), \quad \underline{V}_{h}^{t,\star}(s) :=  \max_{b'}\underline{Q}_h^{t,\star}(s,b')\;.
\end{align*}
using the same exploration bonus as in \eqref{eq:bonus}. In BPI-UCRL, the (optimistic) samplig rule is 
\[\pi^t_h(s) = \argmax_{a \in \cA_h(s)} \overline{Q}_h^{t-1,\star}(s,a).\]
The stopping rule is 
\[\tau^{\text{BPI-UCRL}} = \inf \left\{ t \in \N :  \overline V_1^{t,\star}(s_1) - \underline V_1^{t,\star}(s_1)\leq \varepsilon \right\},\]
while the recommendation rule is the greedy policy with respect to $\underline{Q}^{t,\star}_h(s,a)$.

\paragraph{Value-based eliminations} In our implementation, we used an additional elimination rule for both EPRL and BPI-UCRL, which we call value-based elimination: $a$ is eliminated from $\cA_h^{t}(s)$ if 
\[\overline{Q}_h^{t-1,\star}(s,a) < \underline{V}_h^{t-1,\star}(s).\]
It is easy to justify that on our good event, this sampling rule does not eliminate any optimal action, hence the correctness is preserved. Moreover, adding these eliminations does not alter the sample complexity results obtained in Theorems \ref{th:max-cover-sample-comp} and \ref{th:max-diam-sample-comp} as they can only improve the sample complexity of the resulting algorithms. 

\paragraph{Bonuses in practice} The threshold $\hl{\beta(t,\delta) := 2\sigma^2\log\left(4t^2N/\delta\right)}$ recommended by theory can be overly conservative. In practice, we found that a smaller threshold of $\beta(t,\delta) := 2\sigma^2\log\left((t+1)/\delta\right)$ (i.e., ignoring the union bound) is still empirically correct, \hl{and used it in our experiments}.
%We added eliminations based on value gaps in the implementation of EPRL and BPI-UCRL\footnote{i.e., the triplet $(s,a,h)$ is eliminated if $\overline{Q}_h^\star(s,a) < \underline{V}_h^\star(s)$, where $\overline{Q}_h^\star = \max_{\pi} \overline{Q}_h^\pi$ and $\underline{Q}_h^\star = \max_{\pi} \underline{Q}_h^\pi$.}. Note that this does not alter the results obtained in Theorems \ref{th:max-cover-sample-comp} and \ref{th:max-diam-instance-dep} as adding an elimination rule to the algorithms can only improve their sample complexity. 

%%%%%%%%%%%%%%%%%%%%%%%%%%%%%%%%%%%%%%%%%%%%%%%%%%%%%%%%%%%%%%%%%%%%%%%%%%%%%%%
%%%%%%%%%%%%%%%%%%%%%%%%%%%%%%%%%%%%%%%%%%%%%%%%%%%%%%%%%%%%%%%%%%%%%%%%%%%%%%%
\end{document}